\documentclass[11pt]{article}

\usepackage[utf8]{inputenc}
\usepackage[a4paper, margin=2.5cm]{geometry}

\usepackage[colorlinks,citecolor=blue,urlcolor=blue,breaklinks]{hyperref}
\usepackage[nottoc]{tocbibind}
\usepackage[toc]{appendix}
\usepackage{amsmath, amssymb, amsthm, amsfonts, mathtools, bm, graphics, caption, natbib, float, latexsym, titlesec, enumitem, comment}
\DeclareMathAlphabet{\mathbbold}{U}{bbold}{m}{n}
\usepackage{algorithm}
\usepackage{algpseudocode}
\usepackage{multirow}
\usepackage[dvipsnames]{xcolor}
\usepackage{lscape}
\usepackage{aligned-overset}
\usepackage{subcaption}
\usepackage{overpic}
\usepackage{tikz}
\usetikzlibrary{positioning}

\DeclareMathOperator{\relu}{\mathsf{ReLU}}
\DeclareMathOperator{\softmax}{\mathsf{Softmax}}
\DeclareMathOperator{\attn}{\mathsf{Attn}}

\renewcommand{\d}{\mathrm{d}}
\newcommand{\Var}{\mathrm{Var}}

\newcommand{\indep}{\perp\!\!\!\perp}

\DeclareMathOperator*{\argmin}{argmin}

\newtheorem{thm}{Theorem}
\newtheorem{prop}[thm]{Proposition}
\newtheorem{lemma}[thm]{Lemma}
\newtheorem{cor}[thm]{Corollary}
\newtheorem{defn}{Definition}
\newtheorem*{remark}{Remark}

\allowdisplaybreaks

\def\hat{\widehat}
\def\tilde{\widetilde}

\title{Optimal In-context Adaptivity and Distributional Robustness of Transformers}
\author{Tianyi Ma$^1$, Tengyao Wang$^2$ and Richard J. Samworth$^1$\\ \\
$^1$Statistical Laboratory, University of Cambridge\\
$^2$Department of Statistics, London School of Economics and Political Science}
\date{}

\begin{document}	
\maketitle

\begin{abstract}
  We study in-context learning problems where a Transformer is pretrained on tasks drawn from a mixture distribution $\pi=\sum_{\alpha\in\mathcal{A}} \lambda_{\alpha} \pi_{\alpha}$, called the pretraining prior, in which each mixture component~$\pi_{\alpha}$ is a distribution on tasks of a specific difficulty level indexed by~$\alpha$.  Our goal is to understand the performance of the pretrained Transformer when evaluated on a different test distribution~$\mu$, consisting of tasks of fixed difficulty $\beta\in\mathcal{A}$, and with potential distribution shift relative to $\pi_\beta$, subject to the chi-squared divergence $\chi^2(\mu,\pi_{\beta})$ being at most~$\kappa$. In particular, we consider nonparametric regression problems with random smoothness, and multi-index models with both random smoothness and random effective dimension. We prove that a large Transformer pretrained on sufficient data achieves the optimal rate of convergence corresponding to the difficulty level~$\beta$, uniformly over test distributions $\mu$ in the chi-squared divergence ball. Thus, the pretrained Transformer is able to achieve faster rates of convergence on easier tasks and is robust to distribution shift at test time. Finally, we prove that even if an estimator had access to the test distribution~$\mu$, the convergence rate of its expected risk over~$\mu$ could not be faster than that of our pretrained Transformers, thereby providing a more appropriate optimality guarantee than minimax lower bounds.
\end{abstract}

\section{Introduction}
Transformers \citep{vaswani2017attention} have emerged as one of the dominant architectures in modern machine learning, as they have achieved state-of-the-art performance in many domains such as natural language processing \citep{devlin2019bert,brown2020language}, computer vision \citep{dosovitskiy2021an,liu2021swin} and protein prediction \citep{jumper2021highly}.
A striking ability of pretrained Transformers, first observed by \citet{brown2020language} in Large Language Models (LLMs), is the phenomenon of \emph{in-context learning} (ICL): given a prompt (context) containing examples and a query, Transformers can learn the underlying pattern from the examples and produce accurate output for the query, without updating its parameters. 
The ICL ability of Transformers is recently exploited by \citet{hollmann2025accurate} to build tabular foundation models, which they show to outperform many machine learning methods, such as tree-based and boosting algorithms, on regression and classification problems for tabular data.

\begin{figure}[t]
\centering
\begin{minipage}[c]{0.49\textwidth}
    \centering
    \includegraphics[width=\textwidth]{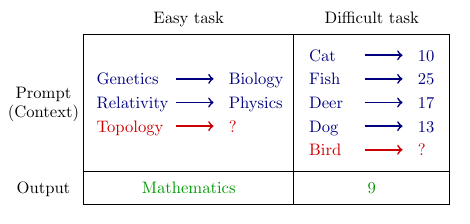}
    \captionof{figure}{Two ICL tasks. Blue indicates the (in-context) examples, red indicates the queries, and green indicates the outputs.}
    \label{fig:ICL-example}
\end{minipage}
\hfill
\begin{minipage}[c]{0.49\textwidth}
    \centering
    \includegraphics[width=0.8\textwidth]{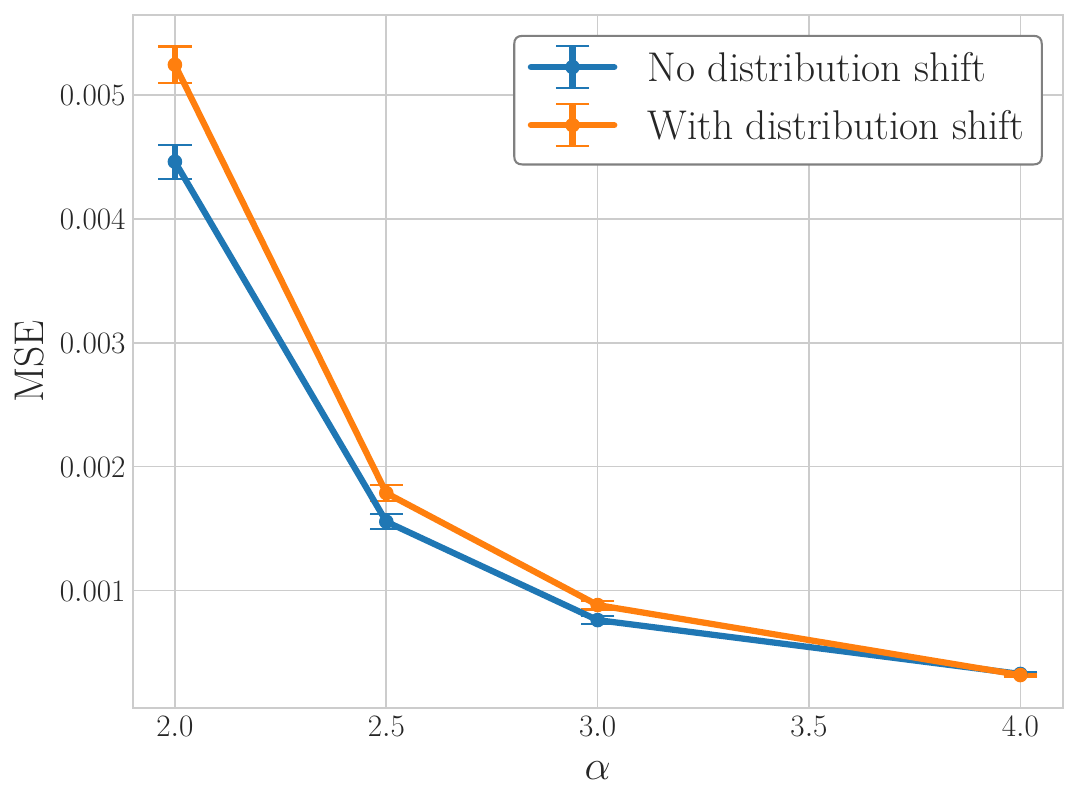}
    \captionof{figure}{Test-time MSE again the smoothness~$\alpha$ of the regression function.}
    \label{fig:simulation-preview}
\end{minipage}
\end{figure}

Extensive studies on ICL have been conducted from various perspectives \citep[e.g.][]{garg2022can,xie2022an,von2023transformers,bai2023transformers,kim2024transformers}; we refer the readers to Section~\ref{sec:related-work} for a more detailed literature review. Most theoretical analyses of ICL have focused on the setting where the distribution of pretraining data (which we call the \emph{pretraining prior}) and the distribution of the test data (which we call the \emph{test distribution}) are the same. However, in practice, Transformers are pretrained on a variety of tasks of different difficulties, and the test distribution will typically exhibit \emph{distribution shift} from the pretraining prior. 
For example, Figure~\ref{fig:ICL-example} provides an illustration of two ICL tasks. The in-context examples in the left column are mappings from subfields to their corresponding fields. This is a relatively easy task and LLMs can provide correct answers when given two such examples. In the right column of Figure~\ref{fig:ICL-example}, each word is mapped to $(\text{alphabetical position of its first letter}) \times (\text{total number of letters}) + 1$, e.g.~`C' is the third letter in the alphabet and `Cat' has three letters in total, so `Cat' is mapped to $3\times 3 + 1 = 10$. This is a harder problem and LLMs typically require more than two examples to correctly answer the question; see Appendix~\ref{sec:ICL-example}.  Thus, LLMs can adapt to the difficulties of the tasks in-context, in the sense that fewer examples are needed (or equivalently we witness faster rates of convergence) for easy tasks. Moreover, at test time we may focus more on a specific type of tasks, which represents a distribution shift from the pretraining prior.

To formulate a theoretical framework, we focus on regression problems (in line with most prior work on ICL theory). As an empirical preview, Figure~\ref{fig:simulation-preview} shows the test-time mean squared error (MSE) of a Transformer pretrained on a mixture of regression tasks with different smoothness levels. At test time, the Transformer is evaluated on new tasks of a certain smoothness, both under the original task distribution and under a distribution shift (see Section~\ref{sec:simulations} for more details). We see that the MSE decreases as the smoothness increases (higher smoothness means easier tasks), while the shifted and unshifted curves remain close, suggesting both in-context adaptivity to task difficulty and robustness to distribution shift. The main goal of this paper is to study these phenomena theoretically, and we provide an informal theorem below (see Section~\ref{sec:adaptive-icl} for rigorous statements).

\begin{center}
\fbox{
\parbox{0.9\textwidth}{\textbf{Informal theorem.}\;
\emph{A Transformer pretrained on a mixture of regression tasks of varying smoothness and effective dimension adapts to the difficulty of a new test task in-context and achieves the corresponding optimal prediction rate. These guarantees also hold under suitable distribution shifts between the pretraining and test tasks.}}}
\end{center}

Our main idea is to view a pretrained Transformer as learning the posterior regression function induced by the pretraining prior. This perspective connects in-context learning with adaptive nonparametric Bayesian regression: once the Transformer approximates this posterior predictor, posterior contraction theory yields task-specific optimal rates, including adaptation to unknown smoothness and effective dimension. Moreover, the same Bayesian viewpoint allows us to establish robustness to distribution shift between the pretraining prior and the test distribution. To the best of our knowledge, our work provides the first proof that Transformer-based in-context learning can be optimally adaptive to task difficulty, while at the same time being robust to distribution shifts.

\subsection{Contributions}
\begin{itemize}
    \item In Proposition~\ref{prop:R_mu-decomposition}, we provide a key test-time risk upper bound, which decomposes into a term that captures both the degree of distribution shift and the proximity of the pretrained Transformer to the posterior regression function, and another term that represents the rate of convergence of the posterior regression function to the true regression function. 
    \item In Section~\ref{sec:universal-approximation}, we prove a universal approximation theorem for Transformers with softmax attention and non-polynomial activation in FFN layers. We further show that a large Transformer pretrained on sufficient data can learn the posterior regression function arbitrarily well when the activation in FFN layers is one of $\mathsf{ReLU}$, $\mathsf{GELU}$ or $\mathsf{SiLU}$. 
    \item In Sections~\ref{sec:holder-regression} and~\ref{sec:multi-index-regression}, we consider general nonparametric regression problems and multi-index models respectively, with regression functions belonging to Besov smoothness classes. We prove that the posterior regression function is optimally adaptive to the difficulty of every task in the support of the pretraining prior (and hence robust to distribution shift). In combination with the results mentioned above, this establishes the adaptivity and distributional robustness of pretrained Transformers.
    \item In Section~\ref{sec:lower-bounds}, we show that the expected risk of any estimator over the test distribution~$\mu$ cannot achieve a faster rate of convergence than our pretrained Transformers (even if the estimator is allowed to depend on $\mu$). This provides a more appropriate form of optimality guarantee for pretrained Transformers than that yielded by minimax lower bounds usually considered in ICL literature. 
    \item Finally, in Section~\ref{sec:simulations}, we verify our theoretical results with empirical experiments.  These confirm our main conclusions that pretrained Transformers are able to adapt to different task difficulties and are robust to distribution shifts at test time.
\end{itemize}

\subsection{Related Work} \label{sec:related-work}

\textbf{In-context learning}\;\;
There is a large body of theoretical research that aims to explain the ICL ability of Transformers.
One line of work shows that Transformers can perform ICL by approximating certain algorithms, such as gradient descent \citep{ahn2023transformers,akyurek2023what,bai2023transformers,von2023transformers}, functional gradient descent \citep{cheng2024transformers}, basis function regression \citep{kim2024transformers}, reinforcement learning \citep{lin2024transformers} and domain adaptation \citep{hataya2024automatic}.  Other lines of work analyse ICL through the lens of Bayesian inference \citep{xie2022an,zhang2022analysis,nagler2023statistical,wang2023large,deora2025incontext,zhang2025bayesianICL}, meta-learning \citep{dai2023gpt,jeon2024an,wu2025why} and training dynamics \citep{oko2024pretrained,zhang2024trained,kumano2025adversarially,kuwataka2025test}. In particular, \citet{bai2023transformers} show that Transformers can perform gradient descent on penalised linear regression, generalised linear models and two-layer neural networks, as well as implement in-context algorithm selection, while  \citet{kim2024transformers} prove that Transformers are minimax optimal for nonparametric regression. Both papers consider the case where the test distribution coincides with the pretraining prior. On the other hand, \citet{xie2022an} study  the setting where the test prompts have different formats compared to the pretraining data, and provide consistency results under a Bayesian hidden Markov model.  There are also numerous empirical studies on the behaviour and properties of ICL \citep[e.g.][]{garg2022can,kirsch2022generalpurpose,panwar2024incontext,reuter2025can}. Finally, we note the concurrent and independent work of \citet{wakayama2025in}, who study the adaptivity of ICL from a Bayesian perspective. Their results are complementary to ours, and neither subsumes the other.

\textbf{Posterior contraction theory}\;\; Posterior contraction theory is an active area of research in statistics that studies the concentration of the posterior distribution around a true data generating distribution (i.e.~from a frequentist perspective). Pioneered by \citet{ghosal2000convergence} and \citet{shen2001rates}, posterior contraction for Bayesian nonparametric models has been investigated in different problems \citep[e.g.][]{ghosal2008nonparametric,Vaart2009Adaptive,finocchio2023posterior,egels2025posterior}. In this paper, we modify the tools developed by \citet{ghosal2008nonparametric} to derive rates of convergence for the posterior regression function. Interested readers are referred to the book by \citet{ghosal2017fundamentals} for an introduction to the field.

\subsection{Notation}
For $n\in\mathbb{N}$, we write $[n] \coloneqq \{1,\ldots,n\}$. For $m,n\in\mathbb{N}$ with $m<n$, we write $[m:n] \coloneqq \{m,m+1,\ldots,n\}$ and similarly $[n:\infty) \coloneqq \{n,n+1,\ldots\}$.  Given $a,b\in\mathbb{R}$, we write $a\wedge b \coloneqq \min\{a,b\}$ and $a\vee b \coloneqq \max\{a,b\}$. For $d\in\mathbb{N}$, we define $0_d \coloneqq (0,\ldots,0)^\top \in \mathbb{R}^d$, $1_d \coloneqq (1,\ldots,1)^\top \in \mathbb{R}^d$ and $\mathbb{B}^d \coloneqq \{x\in\mathbb{R}^d : \|x\|_2 \leq 1\}$.  For $R>0$, we define the clipping operator $\mathsf{clip}_R:\mathbb{R}\to[-R,R]$ by $\mathsf{clip}_R(x) \coloneqq -R\vee x \wedge R$ for $x\in\mathbb{R}$. Given a Borel measurable set $A\subseteq \mathbb{R}^d$, we write $\mathrm{Vol}_d(A)$ for the $d$-dimensional Lebesgue measure of $A$.  
For two $\sigma$-finite measures $P$ and~$Q$ on a measurable space $(\mathcal{X},\Sigma)$, we say that $P$ is absolutely continuous with respect to $Q$, written as $P\ll Q$, if $P(A) = 0$ for all $A \in \Sigma$ such that $Q(A)=0$, and in that case write $\frac{\d P}{\d Q}$ for the Radon--Nikodym derivative.  When $P$ and $Q$ are probability measures, the $\chi^2$-divergence from $Q$ to $P$ is defined as $\chi^2(P,Q) \coloneqq \int_{\mathcal{X}} \bigl(\frac{\d P}{\d Q}\bigr)^2 \,\d Q - 1$ if $P\ll Q$, and $\chi^2(P,Q) \coloneqq \infty$ otherwise.
Further, we write $P^{\otimes n}$ for the $n$-fold product measure of~$P$ and $\mathrm{supp}(P)$ for the support of~$P$, i.e. the intersection of all closed $A \in \Sigma$ with $P(A) = 1$. For a set $\mathcal{F}$ of functions from $\mathcal{X}$ to $\mathcal{Y}$ and a set $\mathcal{G}$ of functions from $\mathcal{Y}$ to $\mathcal{Z}$, we write $\mathcal{F}\circ\mathcal{G}\coloneqq \{f\circ g : f\in\mathcal{F},\, g\in\mathcal{G}\}$, where $f\circ g$ denotes the composition of $f$ and $g$. For a function $f:\mathcal{X} \to \mathbb{R}$, we write $\|f\|_{\infty} \coloneqq \sup_{x\in\mathcal{X}} |f(x)|$.

\section{Problem Set-up and ICL Excess Risk Decomposition}\label{sec:set-up}
We first present the generating mechanism of our pretraining data. Let $\mathcal{X} \subseteq \mathbb{R}^d$ and $\mathcal{Y} \subseteq \mathbb{R}$ be Borel measurable, and let $\mathcal{P}$ be a measurable space of probability distributions on $\mathcal{X} \times \mathcal{Y}$, dominated by a common $\sigma$-finite measure. The \emph{pretraining prior} $\pi$ is a distribution on $\mathcal{P}$, and our pretraining data take the form $(\mathcal{D}_n^{(t)}, X^{(t)}, Y^{(t)})_{t=1}^T$, generated as follows:
\begin{enumerate}[itemsep=0pt, topsep=2pt]
    \item Draw random distributions $P_1,\ldots,P_T \overset{\mathrm{iid}}{\sim}\pi$.
    \item For each $t\in[T]$, draw $\mathcal{D}_n^{(t)} \coloneqq (X_i^{(t)},Y_i^{(t)})_{i=1}^n \sim P_t^{\otimes n}$ and $(X^{(t)},Y^{(t)}) \sim P_t$ independently.
\end{enumerate}
Thus, for each $t\in[T]$, $\mathcal{D}_n^{(t)}$ consists of $n$ examples, $X^{(t)}$ is our query and our goal is to predict its corresponding output $Y^{(t)}$; see~\eqref{fig:ICL-example}. We emphasise that the \emph{test distribution} $\mu$ on $\mathcal{P}$ of new prompts at test time will typically be different from the (possibly unknown) distribution $\pi$ that generated our pretraining data.

Next, for a class $\mathcal{F}$ of measurable functions from $\mathcal{X} \times (\mathcal{X}\times\mathcal{Y})^n$ to~$\mathcal{Y}$ (e.g.~Transformers), define the \emph{empirical risk minimiser} $\hat{f}_T$ over $\mathcal{F}$ by
\begin{gather}
    \hat{f}_T \in \argmin_{f\in\mathcal{F}} \hat{\mathcal{R}}_T(f), \label{eq:f-hat}
\end{gather}
where $\hat{\mathcal{R}}_T(f) \coloneqq \frac{1}{T} \sum_{t=1}^T \bigl\{Y^{(t)} - f(X^{(t)},\mathcal{D}_n^{(t)})\bigr\}^2$ and $(\mathcal{D}_n^{(t)}, X^{(t)}, Y^{(t)})_{t=1}^T$ are generated according to $\pi$.
We remark that, given a new set of examples $D_n\in (\mathcal{X}\times\mathcal{Y})^n$, the prediction at a query $x\in\mathcal{X}$ is given by $\hat{f}_T(x,D_n)$, and we do not need to update $\hat{f}_T$ accordingly. 

For a distribution $\nu$ on $\mathcal{P}$, we define the \emph{$\nu$-risk} of a measurable $g: \mathcal{X} \times (\mathcal{X}\times\mathcal{Y})^n \to \mathcal{Y}$ as\footnote{We adopt the convention that 
    for $P\in\mathcal{P}$, $\mathbb{E}_P$ means that the random variables are generated from~$P$, i.e.~$(\mathcal{D}_n,X,Y) \sim P^{\otimes(n+1)}$, whereas for a distribution~$\nu$ on~$\mathcal{P}$, $\mathbb{E}_{\nu}\mathbb{E}_P$ means that the distribution~$P$ is also random and drawn from~$\nu$.}
\begin{align*}
    \mathcal{R}_{\nu}(g) &\coloneqq \mathbb{E}_{P\sim\nu} \mathbb{E}_{(\mathcal{D}_n,X,Y)|P\sim P^{\otimes (n+1)}} \bigl[\bigl\{Y \!-\! g(X,\mathcal{D}_n)\bigr\}^2\bigr]\\
    &\hspace{0.08cm}\equiv \mathbb{E}_{\nu} \mathbb{E}_{P}\bigl[\bigl\{Y - g(X,\mathcal{D}_n)\bigr\}^2\bigr].
\end{align*}
If $\tilde{f}_T$ is random and depends on the pretraining data $(\mathcal{D}_n^{(t)}, X^{(t)}, Y^{(t)})_{t=1}^T$, then we interpret $\mathcal{R}_{\nu}(\tilde{f}_T)$ as the expectation conditional on the pretraining data, which is assumed to be independent of the test data $(\mathcal{D}_n,X,Y)$, so $\mathcal{R}_{\nu}(\tilde{f}_T)$ is also random.  For a test distribution $\mu$ on~$\mathcal{P}$, we further define the \emph{ICL excess $\mu$-risk} of $\tilde{f}_T$ by
\begin{align}
    \mathcal{R}^{\mathrm{ICL}}_{\mu}(\tilde{f}_T) &\coloneqq \mathbb{E}\mathcal{R}_{\mu}(\tilde{f}_T) - \mathbb{E}_\mu \mathbb{E}_P \bigl[\bigl\{Y - \mathbb{E}_P(Y\,|\,X)\bigr\}^2\bigr] \nonumber\\
    &= \mathbb{E}\,\mathbb{E}_{\mu}\mathbb{E}_P \bigl[ \bigl\{\tilde{f}_T(X,\mathcal{D}_n) - \mathbb{E}_P(Y\,|\,X)\bigr\}^2 \bigr], \label{eq:ICL-excess-risk}
\end{align}
where the term $\mathbb{E}_\mu \mathbb{E}_P \bigl[\bigl\{Y - \mathbb{E}_P(Y\,|\,X)\bigr\}^2\bigr]$ is the oracle risk assuming knowledge of the distribution~$P$ that generates $(X,Y)$, and in particular of the true regression function $\mathbb{E}_P(Y\,|\,X)$ under~$P$; the equality~\eqref{eq:ICL-excess-risk} is shown in the proof of Proposition~\ref{prop:R_mu-decomposition}, see~\eqref{eq:ICL-excess-risk-alternative-form}. 
Finally, we define the \emph{posterior regression function} $g_\pi : \mathcal{X} \times (\mathcal{X}\times\mathcal{Y})^n \to \mathcal{Y}$ (with respect to $\pi$) by
\begin{align}
    g_\pi (x,D_n) &\coloneqq  \mathbb{E}_{\pi} \mathbb{E}_{P} \bigl(Y \,|\, X=x,\, \mathcal{D}_n=D_n\bigr) \label{eq:def-posterior-regression-function}
\end{align}
for $x\in\mathcal{X}$ and $D_n \in (\mathcal{X}\times\mathcal{Y})^n$.  Thus $g_\pi(x,D_n)$ is the posterior mean of $Y$ given $X=x$ and $\mathcal{D}_n=D_n$, where $(\mathcal{D}_n,X,Y)\,|\, P \sim P^{\otimes (n+1)}$ and $P\sim\pi$. 

The following proposition provides a key upper bound on the ICL excess $\mu$-risk, which is crucial for our study of the adaptivity and distributional robustness of ICL.
\begin{prop} \label{prop:R_mu-decomposition}
    Let $R>0$, and suppose that for all $P \in \mathcal{P}$, we have $|\mathbb{E}_P(Y\,|\,X)| \leq R$ almost surely.  
    Further let $\hat{f}_T$ be defined as in~\eqref{eq:f-hat}, let $\tilde{f}_T \coloneqq \mathsf{clip}_R \circ \hat{f}_T$ and let $\mathcal{E}(\tilde{f}_T) \coloneqq \mathbb{E}\mathcal{R}_{\pi}(\tilde{f}_T) - \mathcal{R}_{\pi}(g_\pi)$.  Then we have
    \begin{align}
    &\mathcal{R}^{\mathrm{ICL}}_{\mu}(\tilde{f}_T) \leq 4R\sqrt{ \bigl\{\chi^2(\mu,\pi)+1\bigr\} \mathcal{E}(\tilde{f}_T)} + 2\mathbb{E}_\mu \mathbb{E}_P \bigl[\bigl\{g_\pi(X,\mathcal{D}_n) - \mathbb{E}_P(Y\,|\,X)\bigr\}^2\bigr]. \label{eq:ICL-excess-risk-ub}
    \end{align}
\end{prop}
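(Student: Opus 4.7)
The plan is to establish a bias--variance decomposition for the $\nu$-risk about the $Q_\nu$-conditional expectation of $Y$ given $(X,\mathcal{D}_n)$, then split the pointwise squared error about the posterior regression function $g_\pi$, and finally transfer the cross-prior term back from $\mu$ to $\pi$ via a Radon--Nikodym change of measure combined with Cauchy--Schwarz. The key quantitative fact I will need is that $|g_\pi|\le R$ almost surely, which follows from $|\mathbb{E}_P(Y\,|\,X)|\le R$ by the tower property, since $g_\pi(x,D_n) = \mathbb{E}\bigl[\mathbb{E}_P(Y\,|\,X=x)\,\bigm|\,X=x,\mathcal{D}_n=D_n\bigr]$; this will later allow a fourth-moment bound to collapse to a second-moment bound.

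First I would verify the bias--variance identity: for any distribution $\nu$ on $\mathcal{P}$ and any measurable $g : \mathcal{X}\times(\mathcal{X}\times\mathcal{Y})^n\to\mathcal{Y}$,
\[\mathcal{R}_\nu(g) = \mathbb{E}_\nu\mathbb{E}_P\bigl[\bigl(Y-\mathbb{E}_P(Y\,|\,X)\bigr)^2\bigr] + \mathbb{E}_\nu\mathbb{E}_P\bigl[\bigl(g(X,\mathcal{D}_n)-\mathbb{E}_P(Y\,|\,X)\bigr)^2\bigr].\]
This follows by expanding $(Y-g)^2$ about $\mathbb{E}_P(Y\,|\,X)$ and using $\mathbb{E}_P[Y-\mathbb{E}_P(Y\,|\,X)\,|\,X]=0$ together with the independence of $Y$ and $\mathcal{D}_n$ given $(X,P)$ to kill the cross term. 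Setting $\nu=\mu$ yields the equality claimed in~\eqref{eq:ICL-excess-risk}. Running the same argument with residual $Y-g_\pi$ in place of $Y-\mathbb{E}_P(Y\,|\,X)$, and exploiting that $g_\pi$ is by construction the conditional expectation of $Y$ given $(X,\mathcal{D}_n)$ under the joint law $Q_\pi$ induced by $\pi$, gives $\mathcal{E}(\tilde{f}_T) = \mathbb{E}\,\mathbb{E}_\pi\mathbb{E}_P\bigl[(\tilde{f}_T(X,\mathcal{D}_n)-g_\pi(X,\mathcal{D}_n))^2\bigr]$.

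Next, I would apply $(a+b)^2\le 2a^2+2b^2$ to $\tilde{f}_T-\mathbb{E}_P(Y\,|\,X) = (\tilde{f}_T-g_\pi)+(g_\pi-\mathbb{E}_P(Y\,|\,X))$; this bounds $\mathcal{R}^{\mathrm{ICL}}_\mu(\tilde{f}_T)$ by $2\mathbb{E}\mathbb{E}_\mu\mathbb{E}_P[(\tilde{f}_T-g_\pi)^2]$ plus the term $2\mathbb{E}_\mu\mathbb{E}_P[(g_\pi-\mathbb{E}_P(Y\,|\,X))^2]$ already appearing on the right-hand side of~\eqref{eq:ICL-excess-risk-ub}. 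It remains to bound the first summand by $2R\sqrt{(\chi^2(\mu,\pi)+1)\mathcal{E}(\tilde{f}_T)}$. Since $(\mathcal{D}_n,X,Y)\,|\,P$ has the same conditional law under $\mu$ and $\pi$, the joint density ratio is simply $\frac{\d\mu}{\d\pi}(P)$, and Cauchy--Schwarz yields
\[\mathbb{E}_\mu\mathbb{E}_P[(\tilde{f}_T-g_\pi)^2] = \mathbb{E}_\pi\Bigl[\tfrac{\d\mu}{\d\pi}(P)\,\mathbb{E}_P[(\tilde{f}_T-g_\pi)^2]\Bigr] \le \sqrt{\chi^2(\mu,\pi)+1}\cdot\sqrt{\mathbb{E}_\pi\mathbb{E}_P[(\tilde{f}_T-g_\pi)^4]}.\]
The boundedness $|\tilde{f}_T-g_\pi|\le 2R$ then collapses the fourth moment to $4R^2\mathbb{E}_\pi\mathbb{E}_P[(\tilde{f}_T-g_\pi)^2]$; taking expectation over the pretraining data outside and pushing it inside the square root by Jensen identifies the resulting quantity with $2R\sqrt{(\chi^2(\mu,\pi)+1)\mathcal{E}(\tilde{f}_T)}$, which multiplied by the leading factor of $2$ gives the $4R$ prefactor in~\eqref{eq:ICL-excess-risk-ub}. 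The only delicate point is that the change of measure presumes $\mu\ll\pi$; but if $\mu\not\ll\pi$ then $\chi^2(\mu,\pi)=\infty$ and~\eqref{eq:ICL-excess-risk-ub} is vacuous, so nothing is lost.
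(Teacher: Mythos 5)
Your proposal is correct and follows essentially the same route as the paper's proof: identify $\mathcal{E}(\tilde{f}_T)$ with $\mathbb{E}\,\mathbb{E}_\pi\mathbb{E}_P\bigl[(\tilde{f}_T-g_\pi)^2\bigr]$ via the conditional-mean property of $g_\pi$, split $\tilde{f}_T-\mathbb{E}_P(Y\,|\,X)$ with $(a+b)^2\le 2a^2+2b^2$, and transfer the first term from $\mu$ to $\pi$ by a change of measure plus Cauchy--Schwarz, using $|\tilde{f}_T-g_\pi|\le 2R$ to reduce to the second moment. The only cosmetic differences are that the paper bounds $Z_P^2\le 4R^2 Z_P$ with the pretraining expectation already folded into $Z_P$ via Fubini, whereas you go through a conditional fourth moment and then push the pretraining expectation inside the square root by Jensen, which is equally valid.
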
 
The upper bound in~\eqref{eq:ICL-excess-risk-ub} consists of two terms. The first involves both the extent of the distribution shift measured by the $\chi^2$-divergence, as well as the expected difference between the $\pi$-risk of~$\tilde{f}_T$ and that of the posterior regression function~$g_\pi$. It is well-known that $g_\pi$ minimises the $\pi$-risk~$\mathcal{R}_{\pi}$ over all measurable functions, so intuitively, if~$\tilde{f}_T$ is a large Transformer pretrained on a sufficient corpus of data, then $\mathcal{E}(\tilde{f}_T)$ should be negligible.  Indeed, this is proved in our Proposition~\ref{prop:ERM-approx-posterior}. The second term reflects the proximity of the posterior regression function to the oracle regression function. In fact, in Sections~\ref{sec:holder-regression} and~\ref{sec:multi-index-regression}, we will control $\mathbb{E}_P \bigl[\bigl\{g_\pi(X,\mathcal{D}_n) - \mathbb{E}_P(Y\,|\,X)\bigr\}^2\bigr]$ uniformly for all $P$ in the support of $\mu$. This reveals the interesting feature that only the first term on the right-hand side of~\eqref{eq:ICL-excess-risk-ub} depends on the degree of distribution shift, and this term can be made arbitrarily small by increasing the size of the Transformer and the pretraining dataset, so that the second term ultimately determines the rate of convergence. 

\citet{bai2023transformers} showed that Transformers can approximate algorithm selection based on a train-validation split.  Their arguments  proceeded via a different risk decomposition to that in Proposition~\ref{prop:R_mu-decomposition}, where Transformers are regarded as approximators of the oracle regression functions $\mathbb{E}_P(Y\,|\,X)$; see also \citet{kim2024transformers}.  By contrast, in Proposition~\ref{prop:R_mu-decomposition} we view Transformers as approximators of the posterior regression function, which has three key advantages.  First, we are able to control the ICL excess risk under distribution shift between the pretraining prior and the test distribution.  Second, we are able to demonstrate adaptivity of Transformers to the difficulty of the test task, instead of obtaining a rate of convergence that is a weighted average over difficulty levels (typically dominated by the difficult tasks).  Finally, by approximating the posterior regression function, Proposition~\ref{prop:R_mu-decomposition} can be applied to settings where no simple closed form algorithms are available or optimal, thereby (in combination with Theorem~\ref{thm:universal-approximation} below) explaining the empirically-observed superiority of Transformer-based ICL compared to classical algorithms for nonparametric regression \citep[e.g.][]{hollmann2025accurate}.


\section{Transformers, Universal Approximation and Learnability} 
\subsection{Transformers} \label{sec:transformers}
In this section, we formally define the classes of (encoder-only) Transformers that we will consider.
Given $d_{\mathrm{model}}\geq d+2$, called the \emph{model dimension}, examples $D_n = (x_i,y_i)_{i=1}^n \in (\mathcal{X} \times \mathcal{Y})^n$ and a query $x \in \mathcal{X}$, the input matrix $Z_{\mathrm{in}} \in \mathbb{R}^{(n+1) \times d_{\mathrm{model}}}$ is defined as 
\begin{align}
    Z_{\mathrm{in}} \equiv Z_{\mathrm{in}}(D_n,x) \coloneqq \begin{pmatrix}
        x_1^\top & y_1 & 0_{d_{\mathrm{model}}-d-2}^\top & 0\\
        \vdots & \vdots & \vdots & \vdots\\
        x_n^\top & y_n & 0_{d_{\mathrm{model}}-d-2}^\top & 0\\
        x^\top & 0 & 0_{d_{\mathrm{model}}-d-2}^\top & 1
    \end{pmatrix} \label{eq:input-matrix}
\end{align}
where the last column of $Z_{\mathrm{in}}$ is our positional encoding used to identify the query, and we pad each row with zeros to make them $d_{\mathrm{model}}$-dimensional. 
For a vector $v=(v_1,\ldots,v_d)^\top \in\mathbb{R}^d$, let $\softmax(v)\in\mathbb{R}^d$ denote the vector whose $j$th entry is given by $\exp(v_j) / \sum_{\ell=1}^d \exp(v_\ell)$.  Often, we will apply the $\softmax$ function to matrices, by which we mean that the $\softmax$ function is applied row-wise.
A Transformer iteratively applies attention layers and feed forward network (FFN) layers to the input matrix~$Z_{\mathrm{in}}$.  Letting $N\in\mathbb{N}$ denote an upper bound on the context length, our universal approximation theory (Theorem~\ref{thm:universal-approximation}) holds uniformly for all $n\in[N]$, so it is convenient to consider Transformers as functions from\footnote{The space $\bigcup_{n=1}^{N}\mathbb{R}^{(n+1) \times d_{\mathrm{model}}}$ is equipped with the disjoint union topology and its corresponding Borel $\sigma$-algebra.} $\bigcup_{n=1}^{N}\mathbb{R}^{(n+1) \times d_{\mathrm{model}}}$ to $\bigcup_{n=1}^{N}\mathbb{R}^{(n+1) \times d_{\mathrm{model}}}$.
We now define attention layers. 
\begin{defn}[Attention layer] \label{defn:attention-layer}
    Let $H\in\mathbb{N}$ and for $h\in[H]$, let $Q_h,K_h,V_h \in \mathbb{R}^{d_{\mathrm{model}}\times d_{\mathrm{model}}}$. The \emph{attention layer} with $H$ heads and parameters $\theta_{\mathrm{attn}}\coloneqq (Q_h,K_h,V_h)_{h=1}^H$ is the function $\mathsf{Attn}_{\theta_{\mathrm{attn}}}: \bigcup_{n=1}^{N}\mathbb{R}^{(n+1) \times d_{\mathrm{model}}} \to \bigcup_{n=1}^{N}\mathbb{R}^{(n+1) \times d_{\mathrm{model}}}$ such that for $n\in[N]$ and $Z \in \mathbb{R}^{(n+1) \times d_{\mathrm{model}}}$,
    \vspace*{-0.1cm}
    \begin{align*}
        \attn_{\theta_{\mathrm{attn}}}(Z) &\coloneqq Z + \sum_{h=1}^H \softmax\biggl(\frac{ZQ_h K_h^\top Z^\top}{\sqrt{d_{\mathrm{model}}}}\biggr) ZV_h \in \mathbb{R}^{(n+1) \times d_{\mathrm{model}}}.
    \end{align*}
\end{defn}

\begin{defn}[FFN layer] \label{defn:ffn-layer}
    Let $d_{\mathrm{ffn}} \in \mathbb{N}$, $W_1\in\mathbb{R}^{d_{\mathrm{model}}\times d_{\mathrm{ffn}}}$, $W_2 \in \mathbb{R}^{d_{\mathrm{ffn}} \times d_{\mathrm{model}}}$, $v\in \mathbb{R}^{d_{\mathrm{ffn}}}$ and let $\rho:\mathbb{R} \rightarrow \mathbb{R}$.  The \emph{FFN layer} with parameters $\theta_{\mathrm{ffn}} \coloneqq (W_1,W_2,v)$ (and activation function~$\rho$) is the function $\mathsf{FFN}_{\theta_{\mathrm{ffn}}} : \bigcup_{n=1}^{N}\mathbb{R}^{(n+1) \times d_{\mathrm{model}}} \to \bigcup_{n=1}^{N}\mathbb{R}^{(n+1) \times d_{\mathrm{model}}}$ such that for $n\in[N]$ and $Z \in \mathbb{R}^{(n+1) \times d_{\mathrm{model}}}$,
    \begin{align*}
        \mathsf{FFN}_{\theta_{\mathrm{ffn}}}\!(Z) \!\coloneqq\! Z \!+\! \rho(ZW_1 \!+\! 1_{n+1} v^\top)W_2 \in \mathbb{R}^{(n\!+\!1) \times d_{\mathrm{model}}},
    \end{align*}
    where $\rho$ is applied entrywise.
\end{defn}

Now, a Transformer block is a composition of an attention layer and an FFN layer.
\begin{defn}[Transformer block]
    Let $\theta_{\mathrm{attn}}$ and $\theta_{\mathrm{ffn}}$ be as in Definitions~\ref{defn:attention-layer} and~\ref{defn:ffn-layer} respectively.  The \emph{Transformer block} with parameters $(\theta_{\mathrm{attn}}, \theta_{\mathrm{ffn}})$ is the function $\mathsf{TFBlock}_{(\theta_{\mathrm{attn}}, \theta_{\mathrm{ffn}})} : \bigcup_{n=1}^{N}\mathbb{R}^{(n+1) \times d_{\mathrm{model}}} \to \bigcup_{n=1}^{N}\mathbb{R}^{(n+1) \times d_{\mathrm{model}}}$ given by
    \begin{align*}
        \mathsf{TFBlock}_{(\theta_{\mathrm{attn}}, \theta_{\mathrm{ffn}})} \coloneqq \mathsf{FFN}_{\theta_{\mathrm{ffn}}} \circ \attn_{\theta_{\mathrm{attn}}}.
    \end{align*}
    We say that the Transformer block $\mathsf{TFBlock}_{(\theta_{\mathrm{attn}}, \theta_{\mathrm{ffn}})}$ has $H$ heads, model dimension $d_{\mathrm{model}}$ and FFN width $d_{\mathrm{ffn}}$.
\end{defn}

Finally, a Transformer is a composition of Transformer blocks.
\begin{defn}[Transformer]
    Let $\mathcal{F}_{\mathrm{TF}}(L, H, d_{\mathrm{model}}, d_{\mathrm{ffn}})$ be the set of all \emph{Transformers} 
    $\mathsf{TF}:  \bigcup_{n=1}^{N}\mathbb{R}^{(n+1) \times d_{\mathrm{model}}} \to \bigcup_{n=1}^{N}\mathbb{R}^{(n+1) \times d_{\mathrm{model}}}$ of the form
    \begin{align*}
        \mathsf{TF} = \mathsf{TFBlock}^{(L)} \circ \mathsf{TFBlock}^{(L-1)} \circ \cdots \circ \mathsf{TFBlock}^{(1)},
    \end{align*}
    where $\mathsf{TFBlock}^{(\ell)} \equiv \mathsf{TFBlock}_{(\theta_{\mathrm{attn}}^{(\ell)},\theta_{\mathrm{ffn}}^{(\ell)})}$ is a Transformer block with $H$ heads, model dimension $d_{\mathrm{model}}$ and FFN width $d_{\mathrm{ffn}}$ for all $\ell\in[L]$. 
\end{defn}
For a Transformer $\mathsf{TF}\in\mathcal{F}_{\mathrm{TF}}(L, H, d_{\mathrm{model}}, d_{\mathrm{ffn}})$, examples $D_n = (x_i,y_i)_{i=1}^n$ and a query~$x$, we will slightly abuse our notation by writing $\mathsf{TF}(x, D_n) \coloneqq \mathsf{TF}\bigl(Z_{\mathrm{in}}(D_n,x)\bigr)$ where the input matrix $Z_{\mathrm{in}}(D_n,x)$ is defined by~\eqref{eq:input-matrix}.  Finally, we define\footnote{More generally, $\mathsf{Read}(Z)$ can be an affine transformation of the last row of $Z$.} $\mathsf{Read}: \bigcup_{n=1}^{N}\mathbb{R}^{(n+1) \times d_{\mathrm{model}}} \to \mathbb{R}$ by $\mathsf{Read}(Z) \coloneqq Z_{n+1,d+1}$ for $Z \in\mathbb{R}^{(n+1)\times d_{\mathrm{model}}}$ and $n\in[N]$.  Thus, $\mathsf{Read}\circ\mathsf{TF}(x,D_n)$ can be interpreted as the predicted value of the response at the query $x$ based on examples $D_n$.

\subsection{Universal Approximation and Learnability}\label{sec:universal-approximation}
We now consider pretraining priors induced by randomly drawn regression functions, and prove universal approximation and learnability results for Transformers.
Let $\mathcal{G}$ be a measurable space of real-valued, measurable functions on $\mathcal{X} \subseteq \mathbb{R}^d$ that are uniformly bounded by $R>0$ and let $\tilde{\pi}$ be a distribution on $\mathcal{G}$. Let $X$ be a random variable on $\mathcal{X}$ with distribution $P_X$, and let $\xi\indep X$ be a zero-mean random variable on $\mathbb{R}$. For $g\in\mathcal{G}$, let $P_g$ be the distribution of $(X,Y_g)$ where $Y_g=g(X)+\xi$. Further let $\pi$ be the distribution of the random measure~$P_{\tilde{g}}$, where the randomness comes from $\tilde{g} \sim \tilde{\pi}$.  Suppose throughout this subsection that there exist $R'>0$ and a $\sigma$-finite measure $\nu$ on $\mathbb{R}$ such that for all $g\in\mathcal{G}$ and almost all realisations of $g(X)$, the conditional distribution of $Y_g \,|\, g(X)$ is absolutely continuous with respect to $\nu$, with Radon–Nikodym derivative bounded by~$R'$. For example, if the noise $\xi$ has a $N(0,\sigma^2)$ distribution, then the conditional distribution of $Y_g \,|\, \{g(X)=z\}$ has bounded Lebesgue density given by $y \mapsto \frac{1}{\sqrt{2\pi\sigma^2}} e^{-(y-z)^2/(2\sigma^2)}$. The following theorem shows that Transformers are universal approximators for the posterior regression function~$g_{\pi}$ given by~\eqref{eq:def-posterior-regression-function}.

\begin{thm} \label{thm:universal-approximation}
    Suppose that the activation function $\rho:\mathbb{R} \rightarrow \mathbb{R}$ in the FFN layers is Borel measurable and there is no polynomial $h$ such that $\rho = h$ Lebesgue almost everywhere.  Then for any $L\geq 3$, $H\geq 1$, $\epsilon>0$ and $N\in\mathbb{N}$, there exist $d_{\mathrm{model}}\geq d+2$, $d_{\mathrm{ffn}}\in\mathbb{N}$ and a Transformer $\mathsf{TF} \in \mathcal{F}_{\mathrm{TF}}(L,H,d_{\mathrm{model}},d_{\mathrm{ffn}})$ such that 
    \begin{align*}
        \max_{n\in[N]}\mathbb{E}_{\pi}\mathbb{E}_P\Bigl\{\bigl(\mathsf{Read}\circ\mathsf{TF}(X,\mathcal{D}_n) - g_\pi(X,\mathcal{D}_n)\bigr)^2\Bigr\} \leq \epsilon.
    \end{align*}
\end{thm}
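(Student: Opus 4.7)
The plan combines three ingredients---a reduction from $L^2$ approximation of a bounded measurable function to uniform approximation of a continuous permutation-symmetric function on a compact set, a deep-sets representation, and the Leshno--Pinkus universal approximation theorem for two-layer networks with non-polynomial activation---which are then assembled inside a three-block Transformer. Starting from the reduction: because $|\tilde{g}|\leq R$ almost surely under $\tilde\pi$, the posterior mean $g_\pi$ is bounded by $R$ and lies in $L^2(\mathbb{Q}_n)$ for every $n\in[N]$, where $\mathbb{Q}_n$ denotes the marginal law of $(X,\mathcal{D}_n)$. Tightness of $\mathbb{Q}_n$ gives a compact $K_n\subseteq \mathcal{X}\times(\mathcal{X}\times\mathcal{Y})^n$ carrying probability at least $1-\epsilon/(12R^2)$, and Lusin's theorem together with a bounded symmetrisation (preserving permutation symmetry in the $n$ example arguments, which $g_\pi$ already possesses) yields a continuous $\phi_n:K_n\to[-R,R]$ with $\|\phi_n - g_\pi\|_{L^2(\mathbb{Q}_n)}\leq\epsilon/12$.

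Next, a symmetric-function representation theorem approximates $\phi_n$ on $K_n$ uniformly to error $\epsilon/12$ by a map of the form $F_n\bigl(x,\tfrac{1}{n}\sum_{i=1}^n h_n(x_i,y_i)\bigr)$, with continuous $h_n$ and $F_n$ of bounded range. Stacking the $h_n$ coordinate-wise yields a single continuous $h:\mathcal{X}\times\mathcal{Y}\to\mathbb{R}^{k}$ that carries the features needed for every $n\in[N]$ simultaneously; the $F_n$ are packaged into a single continuous $F$ that selects the correct coordinate block based on a scalar identifier of $n$, namely $1/(n+1)$, which is recoverable inside the Transformer via a uniform-attention channel combined with the positional bit.

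The three-block Transformer is then assembled as follows. In Block 1 the attention is collapsed to the identity by setting the value matrix to zero, and the FFN implements a Leshno--Pinkus two-layer approximation $\hat h$ of $h$, using the positional bit in the last input coordinate to gate this behaviour to example rows only. In Block 2 the FFN is the identity and the attention matrices $Q_2,K_2$ are chosen, via the positional bit, so that the pre-softmax score from the query token to each example is $0$ and from the query to itself is $-M$; for $M$ large, the softmax weights at the query are essentially uniform over the examples, so the attention output stores a vector close to $\tfrac{1}{n}\sum_i \hat h(x_i,y_i)$ together with the auxiliary scalar $1/(n+1)$. In Block 3 the attention is again trivial and the FFN implements a Leshno--Pinkus approximation $\hat F$ of $F$ at the query row, writing the result into coordinate $d+1$ that is read by $\mathsf{Read}$.

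The reduction, deep-sets, Leshno--Pinkus and softmax-masking steps each contribute at most $\epsilon/12$ to the $L^2$ error; squaring via $(a+b)^2\leq 2a^2+2b^2$ and using uniform boundedness by $R$ gives the stated bound, uniformly over the finite set $[N]$. The main obstacle is the softmax-masking step: because the softmax never outputs exactly zero, Block 2 cannot perfectly exclude the query token from the aggregation, so the attention output is only approximately equal to the target average. This is handled by choosing the scaling constant $M$ large enough that the residual self-attention weight falls below a prescribed tolerance, uniformly in $n\in[N]$; compactness of each $K_n$ (which bounds $\|\hat h\|_\infty$ on the relevant domain) and the finiteness of $N$ together make such a uniform choice of $M$ possible.
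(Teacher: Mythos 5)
Your route is genuinely different from the paper's. The paper never treats $g_\pi$ as an abstract symmetric function: it uses Bayes' rule to write $g_\pi$ as a ratio of prior integrals, replaces each integral by a Monte Carlo average over finitely many functions $g_1,\ldots,g_M$ drawn from $\tilde{\pi}$ (a Hoeffding/probabilistic-method lemma), and then lets the first FFN compute the row-wise log-likelihood features $\log\bar{f}_\xi\bigl(y_i,g_m(x_i)\bigr)$, a single \emph{uniform} attention head ($Q=K=0$) average them over all $n+1$ rows (which simultaneously turns the positional column into $1+1/(n+1)$, identifying $n$), a second FFN exponentiate and take the ratio, and a third FFN approximately clip the output; a separate lemma lower-bounding the denominator keeps the ratio map well behaved. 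Your proposal instead goes through Lusin, a symmetric (deep-sets) approximation, and Leshno--Pinkus, which would, if completed, be somewhat more general, since it uses nothing about the Bayesian structure of $g_\pi$ beyond boundedness and exchangeability.

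Two places need repair. First, the deep-sets step is exactly the step that replaces the paper's Monte Carlo lemma, and it is asserted rather than proved: for set elements in $\mathbb{R}^{d+1}$ you need an actual argument (for instance Tietze extension of $\phi_n$ to a compact product set, symmetrisation, and Stone--Weierstrass with moment features $\frac1n\sum_i m(x_i,y_i)$ over monomials $m$, which separate empirical measures), together with the bookkeeping that lets a single pair $(h,F)$ serve all $n\in[N]$; as the crux of your proof it cannot simply be cited away. Second, the construction as described never produces the identifier $1/(n+1)$: each attention layer has one head, your only nontrivial head is the $-M$-masked one in Block 2, whose weights are $1/(n+e^{-M})$ on the examples and $e^{-M}/(n+e^{-M})$ on the query, so there is no ``uniform-attention channel'' anywhere in your network. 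This is fixable---use Block 1's attention as a uniform channel, or decode $n$ from the masked head's aggregated positional bit, or, better, drop the masking altogether and instead gate the query row's feature coordinates to zero using the positional bit, so that a uniform head (as in the paper) both averages the examples and yields $1/(n+1)$---but as written the quantity your $F$ is supposed to read off is not available. Relatedly, the final accounting ``using uniform boundedness by $R$'' is not justified: the network output is not bounded by $R$, and on the complement of your good events $X$, $Y$, and hence $\hat h$ and $\hat F$, are unbounded; the paper reserves its third FFN for an approximate $\mathsf{clip}_R$ precisely to address this, whereas your third block is already spent on $\hat F$, so you would need to fold an approximate clipping into it and still control the bad-event contribution explicitly.
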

From now on, we will assume that the activation $\rho$ belongs to the set $\{\mathsf{ReLU}, \mathsf{GELU}, \mathsf{SiLU}\}$, which are commonly used activation functions in modern Transformer models; see~\eqref{eq:activations} for their definitions.
The following proposition shows that, when the noise $\xi$ is sub-Gaussian, a large Transformer pretrained with sufficient data can achieve a $\pi$-risk that is arbitrarily close to the $\pi$-risk of~$g_{\pi}$.  Note that we do not require the input matrix or the parameters of the Transformer to be bounded.  We achieve this by showing that the pseudo-dimension of the class is finite using results from a branch of mathematical logic called model theory.
\begin{prop}\label{prop:ERM-approx-posterior}
    Suppose that the noise $\xi$ is sub-Gaussian. For any $L\geq 3$, $H\geq 1$, $n\in\mathbb{N}$ and $\epsilon>0$, there exist $d_{\mathrm{model}}^\circ \geq d+2$ and $d_{\mathrm{ffn}}^\circ\in\mathbb{N}$ such that the following holds.  Suppose $d_{\mathrm{model}}\geq d_{\mathrm{model}}^\circ$, $d_{\mathrm{ffn}}\geq d_{\mathrm{ffn}}^\circ$, and $\hat{f}_T$ is defined as in~\eqref{eq:f-hat} with $\mathcal{F} \coloneqq \mathsf{Read} \circ \mathcal{F}_{\mathrm{TF}}(L,H,d_{\mathrm{model}},d_{\mathrm{ffn}})$ and pretraining data generated according to~$\pi$. Let $\tilde{f}_T \coloneqq \mathsf{clip}_{R}\circ\hat{f}_T$. Then, for all $T$ sufficiently large,
    \begin{align*}
        \mathcal{E}(\tilde{f}_T) \coloneqq \mathbb{E}\mathcal{R}_{\pi}(\tilde{f}_T) - \mathcal{R}_{\pi}(g_{\pi}) \leq \epsilon.
    \end{align*} 
\end{prop}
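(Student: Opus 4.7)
My plan is to combine the universal approximation theorem with a standard truncated-ERM oracle inequality, while controlling the complexity of the unclipped Transformer class via tools from the model theory of o-minimal structures rather than via any bound on the parameters.

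\emph{Step 1 (Approximation).} Let $C > 0$ denote the universal constant that will appear in the oracle inequality of Step~3. Applying Theorem~\ref{thm:universal-approximation} with tolerance $\epsilon/(4C)$ and $N = n$ yields integers $d_{\mathrm{model}}^\circ \geq d + 2$ and $d_{\mathrm{ffn}}^\circ$, together with a Transformer $\mathsf{TF}^\star \in \mathcal{F}_{\mathrm{TF}}(3, 1, d_{\mathrm{model}}^\circ, d_{\mathrm{ffn}}^\circ)$ such that $f^\star := \mathsf{Read} \circ \mathsf{TF}^\star$ satisfies $\mathbb{E}_\pi \mathbb{E}_P[(f^\star - g_\pi)^2] \leq \epsilon/(4C)$. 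Since $g_\pi$ is the posterior mean of a random function uniformly bounded by $R$, we have $|g_\pi| \leq R$ almost surely, so pointwise clipping only reduces squared error; hence $\bar f := \mathsf{clip}_R \circ f^\star$ inherits the same bound, and $\mathcal{R}_\pi(\bar f) - \mathcal{R}_\pi(g_\pi) \leq \epsilon/(4C)$.

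\emph{Step 2 (Pseudo-dimension via o-minimality).} For any $d_{\mathrm{model}}$ and $d_{\mathrm{ffn}}$, I would show that the clipped class $\mathcal{F}_c := \mathsf{clip}_R \circ \mathsf{Read} \circ \mathcal{F}_{\mathrm{TF}}(3, 1, d_{\mathrm{model}}, d_{\mathrm{ffn}})$ has finite pseudo-dimension $V = V(d_{\mathrm{model}}, d_{\mathrm{ffn}})$, with no restriction on the size of the Transformer parameters. Each element of $\mathcal{F}_c$ is computed from the input by a fixed finite sequence of matrix multiplications, the exponential and division inside softmax, an activation $\rho \in \{\mathsf{ReLU}, \mathsf{GELU}, \mathsf{SiLU}\}$, and clipping; all of these operations are definable in the real exponential field, which is o-minimal by Wilkie's theorem. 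The subgraphs of $\mathcal{F}_c$ therefore form a uniformly definable family in an o-minimal structure, and by the Karpinski--Macintyre VC bound for such families, the pseudo-dimension $V$ is finite.

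\emph{Step 3 (Oracle inequality and conclusion).} With $\mathcal{F}_c$ uniformly bounded by $R$ and of finite pseudo-dimension, I would apply a truncated-ERM oracle inequality in the style of Gy\"orfi, Kohler, Krzy\.zak and Walk (their Theorem~11.3), adapted to sub-Gaussian response, to obtain, for universal constants $C, C' > 0$,
\begin{align*}
\mathcal{E}(\tilde f_T) \leq C \bigl\{\mathcal{R}_\pi(\bar f) - \mathcal{R}_\pi(g_\pi)\bigr\} + C' (R + \sigma)^2 \, \frac{V \log^2 T}{T},
\end{align*}
where $\sigma$ is a sub-Gaussian parameter of $\xi$. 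Two points to verify: first, because $|g_\pi| \leq R$ and $\tilde f_T = \mathsf{clip}_R \circ \hat f_T$, one has the pointwise bound $(\tilde f_T - g_\pi)^2 \leq (\hat f_T - g_\pi)^2$, which enables the comparison of $\tilde f_T$ with $\bar f$ through the (unclipped) ERM property $\hat{\mathcal R}_T(\hat f_T) \leq \hat{\mathcal R}_T(f^\star)$; second, the unbounded response is handled by restricting to the high-probability event $\{\max_{t \in [T]} |\xi^{(t)}| \leq C_1 \sqrt{\log T}\}$, whose complement contributes $O(1/T)$ by a sub-Gaussian tail bound. The first term above is at most $\epsilon/4$ by Step~1, and for $T$ sufficiently large the second term is at most $\epsilon/2$, so $\mathcal{E}(\tilde f_T) \leq \epsilon$ as required.

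The principal difficulty is Step~2. Classical polynomial-region arguments used for purely $\mathsf{ReLU}$ networks do not apply once one introduces the softmax or smooth non-polynomial activations such as $\mathsf{GELU}$ and $\mathsf{SiLU}$, and without any norm bound on the parameters one cannot fall back on a naive Lipschitz-covering argument in parameter space either. The resolution is to exploit the deep fact that the real exponential field is o-minimal, which yields uniform finiteness of VC dimension for definable parametric families and, in turn, the finite pseudo-dimension needed by the empirical-process step. A secondary obstacle, the unboundedness of the sub-Gaussian response, is dealt with by the standard truncation argument described above.
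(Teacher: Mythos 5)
Your overall route is the same as the paper's: approximate $g_\pi$ via the universal approximation theorem, control the complexity of the (unbounded-parameter) Transformer class through a finite pseudo-dimension obtained from o-minimality, and conclude with a truncated-ERM oracle inequality for bounded regression function and sub-Gaussian noise (the paper's Lemma~\ref{lemma:approx-error-plus-pdim}, built on \citet[Theorem~1]{ma2025deep} and \citet[Theorem~9.4]{gyorfi2002distribution}). Steps 1 and 3 of your plan are sound and match the paper's argument up to inessential constants and logarithmic factors.

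There is, however, a genuine gap in Step 2. You assert that all operations in the network, including $\mathsf{GELU}$, are ``definable in the real exponential field,'' and invoke Wilkie's theorem for o-minimality of $\mathbb{R}_{\exp}$. But $\mathsf{GELU}(x)=x\Phi(x)$ involves the Gaussian distribution function $\Phi$, which is \emph{not} definable in $\mathbb{R}_{\exp}$ (it is an error-function--type integral, not expressible from $\exp$ by first-order means), so Wilkie's theorem alone does not cover one of the three activations the proposition must handle; your argument as written is complete only for $\mathsf{ReLU}$ and $\mathsf{SiLU}$. The paper repairs exactly this point: it works in the expanded structure $\mathbb{R}_{\exp,\Phi}$, observes that $\Phi'(x)=\tfrac{1}{\sqrt{2\pi}}e^{-x^2/2}$ is definable in $\mathbb{R}_{\exp}$, and then applies \citet{Speissegger1999pfaffian} (Pfaffian closure) to obtain an o-minimal expansion in which $\Phi$ is definable, so that $\mathbb{R}_{\exp,\Phi}$ is o-minimal; only then do the uniform-definability bookkeeping (the paper's Lemma~\ref{lemma:properties-of-definable-functions}) and the finiteness of VC dimension for definable families \citep[Theorem~3.2]{laskowski1992vapnik} \citep[see also][]{karpinski1995polynomial} yield $\mathrm{Pdim}(\mathcal{F})<\infty$ as in Lemma~\ref{lemma:finite-pdim}. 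Your citation of Karpinski--Macintyre points in the right direction (their results do treat Pfaffian formulas), but you would need to state and use the Pfaffian/o-minimal-expansion step explicitly rather than claim definability of $\mathsf{GELU}$ in $\mathbb{R}_{\exp}$; with that correction your proof goes through and coincides with the paper's.
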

This proposition confirms formally the intuition provided after Proposition~\ref{prop:R_mu-decomposition} that the first term in the decomposition~\eqref{eq:ICL-excess-risk-ub} can be made negligible.  In other words, Transformers can learn the posterior regression function from data, without knowing the pre-training prior.

\section{Optimal In-context Adaptivity and Distributional Robustness} \label{sec:adaptive-icl}

In this section, we consider mixture distribution pretraining priors $\pi=\sum_{\alpha\in\mathcal{A}} \lambda_{\alpha}\pi_{\alpha}$ for some finite index set $\mathcal{A}$, and test distribution $\mu$ on $\mathcal{P}$ with $\chi^2(\mu,\pi_{\beta}) \leq \kappa$ for some $\beta\in\mathcal{A}$, so that draws from $\mu$ contain tasks of difficulty $\beta$ with potential distribution shift relative to~$\pi_\beta$. We have already seen that Proposition~\ref{prop:R_mu-decomposition} provides a decomposition of the ICL excess $\mu$-risk, and Proposition~\ref{prop:ERM-approx-posterior} shows that the first term on the right-hand side of~\eqref{eq:ICL-excess-risk-ub} can be made negligible.  Our goal here, then, is to control the key quantity in the other term, namely 
$\mathbb{E}_P \bigl[\bigl\{g_\pi(X,\mathcal{D}_n) - \mathbb{E}_P(Y\,|\,X)\bigr\}^2\bigr]$, uniformly over all~$P$ in the support of $\pi_{\beta}$.  To this end, we specialise the general posterior contraction theory for nonparametric regression that we provide in Appendix~\ref{sec:bayesian-nonparametric-regression} to Besov regression functions with random smoothness (in Section~\ref{sec:holder-regression}) and multi-index models (in Section~\ref{sec:multi-index-regression}).  

\subsection{In-context Learning for Besov Functions} \label{sec:holder-regression}

For the remainder of this section, we take\footnote{Our results in this section extend to any compact domain $\mathcal{X} \subseteq \mathbb{R}^d$ by a scaling argument.} $\mathcal{X}\coloneqq [0,1]^d$, $\mathcal{Y}\coloneqq \mathbb{R}$. Since Besov functions can be characterised by their wavelet decomposition, we will consider regression functions with random wavelet coefficients.  Let $L^2([0,1]^d)$ denote the set of square integrable functions on $[0,1]^d$, let $S\in\mathbb{N}$ and let
\begin{align}
    \bigl\{\Phi_k : k\in K\bigr\} \cup \bigl\{\Psi_{\ell,\gamma} : \ell\in[{\ell_0}:\infty),\, \gamma\in\Gamma_{\ell}\bigr\} \label{eq:cdv-wavelet}
\end{align}
denote the tensor product Cohen--Daubechies--Vial (CDV) wavelet basis for $L^2([0,1]^d)$, constructed from $S$-regular and $S$-times continuously differentiable wavelets. The precise definitions of these wavelet functions are not important for us, and we will only use some basic properties of this wavelet basis, summarised in Appendix~\ref{subsec:wavelets}.  Let $c_0\in(0,1]$ and $C_0\geq 1$.
For $\alpha\in(0,S)$, let~$\tilde{\pi}_{\alpha}$ be the distribution of the random function
\begin{align}
    &\tilde{g}_{\alpha}\coloneqq \sum_{k\in K} C_0 2^{-\ell_0 d/2} a_k^{(\alpha)}\Phi_k + \sum_{\ell=\ell_0}^{\infty} \sum_{\gamma\in\Gamma_{\ell}} C_0 2^{-\ell(\alpha+d/2)} b_{\ell,\gamma}^{(\alpha)} \Psi_{\ell,\gamma}, \label{eq:random-g-alpha}
\end{align}
where $\bigl(a_k^{(\alpha)}, b_{\ell,\gamma}^{(\alpha)} : k\in K, \ell\in[\ell_0:\infty), \gamma\in\Gamma_\ell\bigr)$ are independent random variables supported on $[-1,1]$ with Lebesgue density bounded between $c_0/2$ and $c_0^{-1}/2$.  The scaling of the wavelet coefficients in~\eqref{eq:random-g-alpha} is chosen to ensure that $\tilde{g}_{\alpha}$ belongs to the Besov space $B_{\infty,\infty}^{\alpha}([0,1]^d)$ with smoothness $\alpha$. Indeed, a function $g \coloneqq \sum_{k\in K} a_k' \Phi_k + \sum_{\ell=\ell_0}^{\infty} \sum_{\gamma\in\Gamma_{\ell}} b_{\ell,\gamma}' \Psi_{\ell,\gamma}$, with deterministic wavelet coefficients, belongs to the Besov ball $B_{\infty,\infty}^\alpha([0,1]^d,C)$ of radius $C>0$ if and only if $2^{\ell_0 d/2}\max_{k} |a_k'| + \sup_{\ell,\gamma} 2^{\ell(\alpha+d/2)}|b_{\ell,\gamma}'| \leq C$; see Appendix~\ref{sec:function-spaces-and-wavelets}. Thus, writing $\mathcal{G}_{\alpha}\coloneqq \mathrm{supp}(\tilde{\pi}_{\alpha})$, we have
\begin{align}
    B_{\infty,\infty}^\alpha([0,1]^d,C_0) \subseteq \mathcal{G}_{\alpha} \subseteq B_{\infty,\infty}^\alpha([0,1]^d,2C_0). \label{eq:inclusion-G-alpha}
\end{align}
We remark that the Besov space $B_{\infty,\infty}^\alpha([0,1]^d)$ coincides with the H\"older space $H^{\alpha}([0,1]^d)$ for $\alpha\notin\mathbb{N}$, and it contains $H^{\alpha}([0,1]^d)$ for $\alpha\in\mathbb{N}$; see Appendix~\ref{sec:function-spaces} for more details. Moreover, there exists $C_{\alpha,S,d}>0$ such that $\sup_{g\in\mathcal{G}_{\alpha}} \|g\|_{\infty} \leq C_{\alpha,S,d}$; see~\eqref{eq:ell-infty-norm-g-circ}. See Figure~\ref{fig:random-besov} for realisations of random Besov functions with $\alpha=2$ and~$4$.


Now let $P_X$ be a Borel probability distribution on $[0,1]^d$ with the property that there exists a hypercube $A \subseteq[0,1]^d$ such that $P_X(A_0) \geq c_0\mathrm{Vol}_d(A_0)$ for all measurable $A_0\subseteq A$.  Let $X\sim P_X$ and $\xi\sim N(0,\sigma^2)$ be independent. For a Borel measurable function $g:[0,1]^d \to \mathbb{R}$, let $P_{g}$ denote the distribution of $(X,Y_g)$ where $Y_g=g(X)+\xi$. We then define~$\pi_{\alpha}$ to be the distribution of the random measure $P_{\tilde{g}_{\alpha}}$, where the randomness comes from the random regression function $\tilde{g}_{\alpha} \sim \tilde{\pi}_{\alpha}$, and let $\mathcal{P}_{\alpha}\coloneqq \mathrm{supp}(\pi_{\alpha})$. Letting $\mathcal{A} \subseteq (0,S)$ be a finite set and letting $(\lambda_{\alpha})_{\alpha\in\mathcal{A}}$ be a sequence of positive weights such that $\sum_{\alpha\in\mathcal{A}}\lambda_{\alpha}=1$, our pretraining prior~$\pi$ is defined to be the mixture distribution
\begin{align}
    \pi \coloneqq \sum_{\alpha\in\mathcal{A}}\lambda_{\alpha} \pi_{\alpha}. \label{eq:mixture-training-prior-distribution}
\end{align}
Thus $\pi$ consists of random regression tasks where the smoothness of the regression functions is randomly drawn from $\mathcal{A}$ with probabilities given by $(\lambda_{\alpha})_{\alpha\in\mathcal{A}}$.
Finally, we define $\mathcal{P}\coloneqq \bigcup_{\alpha\in\mathcal{A}} \mathcal{P}_{\alpha}$ and $\mathcal{G}\coloneqq \bigcup_{\alpha\in\mathcal{A}} \mathcal{G}_{\alpha}$. By~\eqref{eq:inclusion-G-alpha}, the set $\mathcal{G}$  contains a union of Besov balls with smoothness parameters in $\mathcal{A}$.

The following proposition provides the rate of convergence of the posterior regression, when the true regression belongs to $\mathcal{G}_{\beta}$ for some $\beta\in\mathcal{A}$.
\begin{prop} \label{prop:posterior-regression-function-wavelet-prior}
    For any $\beta\in\mathcal{A}$ and $g^\circ \in\mathcal{G}_{\beta}$, write $P_0 \equiv P_{g^\circ}$, let $\pi$ be the prior distribution defined in~\eqref{eq:mixture-training-prior-distribution} and let $g_{\pi}$ denote the posterior regression function with respect to~$\pi$. Then there exists $C>0$, not depending on $n$, such that 
    \begin{align*}
    \mathbb{E}_{P_0} \bigl[\bigl\{g_\pi(X,\mathcal{D}_n) - g^\circ(X)\bigr\}^2\bigr] \leq Cn^{-2\beta/(2\beta+d)}
    \end{align*}
    for all $n \in \mathbb{N}$.
\end{prop}
Combining Propositions~\ref{prop:R_mu-decomposition},~\ref{prop:ERM-approx-posterior} and~\ref{prop:posterior-regression-function-wavelet-prior}, we are now ready to state the main theorem of this section. 
\begin{thm} \label{thm:holder-adaptive-icl}
    For any $L\geq 3$, $H\geq 1$, $n\in\mathbb{N}$ and $\kappa>0$, 
    there exist $d_{\mathrm{model}}^\circ\geq d+2$ and $d_{\mathrm{ffn}}^\circ\in\mathbb{N}$ such that the following holds.  Suppose $d_{\mathrm{model}}\geq d_{\mathrm{model}}^\circ$, $d_{\mathrm{ffn}} \geq d_{\mathrm{ffn}}^\circ$, $\hat{f}_T$ is defined as in~\eqref{eq:f-hat} with $\mathcal{F} \coloneqq \mathsf{Read} \circ \mathcal{F}_{\mathrm{TF}}(L,H,d_{\mathrm{model}},d_{\mathrm{ffn}})$ and pretraining prior $\pi$ defined in~\eqref{eq:mixture-training-prior-distribution}. Let $R \coloneqq \sup_{g\in\mathcal{G}}\|g\|_{\infty}$ and $\tilde{f}_T \coloneqq \mathsf{clip}_{R}\circ\hat{f}_T$. Then, for all $T$ sufficiently large and $\beta\in\mathcal{A}$,
    \begin{align*}
        \sup_{\mu \,:\, \chi^2(\mu,\pi_{\beta})\leq\kappa} \mathcal{R}^{\mathrm{ICL}}_{\mu}(\tilde{f}_T) \leq Cn^{-2\beta/(2\beta+d)},
    \end{align*}
    where $C>0$ does not depend on $n$ and $\kappa$.
\end{thm}

The Transformer $\tilde{f}_T$ in Theorem~\ref{thm:holder-adaptive-icl} is pretrained on the mixture distribution $\pi = \sum_{\alpha\in\mathcal{A}} \lambda_{\alpha}\pi_{\alpha}$ with different smoothness levels, whereas the test distribution $\mu$ consists of random regression functions of a fixed smoothness level~$\beta$, with a distribution shift such that $\chi^2(\mu,\pi_{\beta})\leq\kappa$. When we evaluate the performance of $\tilde{f}_T$ under the test distribution $\mu$, Theorem~\ref{thm:holder-adaptive-icl} shows that the ICL excess $\mu$-risk of $\tilde{f}_T$ is adaptive to the unknown smoothness~$\beta$, in the sense that its rate of convergence $n^{-2\beta/(2\beta+d)}$ is optimal even if $\beta$ were known; see Theorem~\ref{thm:bayes-risk-lb}.  Moreover, this risk bound holds uniformly over test distributions in a $\chi^2$-divergence ball around $\pi_\beta$.  Thus, although the Transformer is pretrained on tasks of different levels of difficulties, it is able to achieve faster (and optimal) rates of convergence on easier tasks (without knowing the difficulty of the tasks) and is robust to distribution shift. 
Finally, we remark that  $d_{\mathrm{model}}^\circ,d_{\mathrm{ffn}}^\circ$ and $T$ should increase with $\kappa$ in order to keep~$C$ independent of $\kappa$. Thus, for a Transformer with fixed architecture, fine tuning for downstream tasks may still be beneficial in cases where (i) the Transformer is not large enough, (ii) the amount of pretraining data is not sufficient, (iii) there is significant distribution shift ($\kappa$ is large), or (iv) there is shift in the support of the pretraining prior and the test distribution.

\subsection{In-context Learning for Multi-index Models} \label{sec:multi-index-regression}

To avoid the curse of dimensionality in the previous section, we will now consider multi-index models with\footnote{Again, our results in this section extend to any compact domain $\mathcal{X} \subseteq \mathbb{R}^d$ by a scaling argument.} $\mathcal{X}=\mathbb{B}^d=\{x\in\mathbb{R}^d : \|x\|_2 \leq 1\}$ and $\mathcal{Y}=\mathbb{R}$. For $p\in[d]$, let $V_p(\mathbb{R}^d)\coloneqq \bigl\{U\in\mathbb{R}^{d\times p} : U^\top U = I_p\bigr\}$ denote the set of all projection matrices from $\mathbb{R}^d$ to $\mathbb{R}^p$ (also called the \emph{Stiefel manifold}). We will consider regression functions on $\mathbb{B}^d$ of the form $x\mapsto g\bigl((U^\top x + 1_p)/2\bigr)$, where $U\in V_p(\mathbb{R}^d)$ for some \emph{effective dimension} $p\in[d]$ and Borel measurable $g:[0,1]^p \to \mathbb{R}$. Note that we translate and scale $U^\top x$ to ensure that it belongs to $[0,1]^p$.

For $\alpha\in(0,S)$ and $p\in[d]$, let $\tilde{g}_{\alpha}^{(p)}: [0,1]^p \to \mathbb{R}$ be the random function (with smoothness~$\alpha$) defined as in~\eqref{eq:random-g-alpha}, but with $d$ replaced by $p$ in the wavelet coefficients, and using the tensor product CDV wavelet basis for $L^2([0,1]^p)$ instead of $L^2([0,1]^d)$. Let $\tilde{\pi}_{\alpha,p}$ be the distribution of the random function $\tilde{g}_{\alpha,p} : \mathbb{B}^d \to \mathbb{R}$ defined by
\begin{align}
    \tilde{g}_{\alpha,p}(x) \coloneqq \tilde{g}_{\alpha}^{(p)}\biggl(\frac{(U^{(p)})^\top x+1_p}{2}\biggr), \label{eq:random-g-alpha-p}
\end{align}
where $U^{(p)}$ is uniformly distributed\footnote{One way to generate such a random matrix is to set $U^{(p)}\coloneqq Z(Z^\top Z)^{-1/2}$, where $Z$ is a $d\times p$ random matrix with independent $N(0,1)$ entries \citep[e.g.][Theorem~2.2.1(iii)]{chikuse2003statistics}.} on the Stiefel manifold $V_p(\mathbb{R}^d)$, independently of $\tilde{g}_{\alpha}^{(p)}$.  We write $\mathcal{G}_{\alpha,p}$ for the support of $\tilde{\pi}_{\alpha,p}$.

Next, let $P_X$ be a distribution on $\mathbb{B}^d$ with the property that there exist $c_0>0$ and a non-empty Euclidean ball $A \subseteq \mathbb{B}^d$ such that $P_X(A_0) \geq c_0\mathrm{Vol}_d(A_0)$ for all measurable $A_0\subseteq A$. 
We then define~$\pi_{\alpha,p}$ to be the distribution of the random measure $P_{\tilde{g}_{\alpha,p}}$, where the randomness comes from the random regression function $\tilde{g}_{\alpha,p} \sim \tilde{\pi}_{\alpha,p}$, and let~$\mathcal{P}_{\alpha,p}\coloneqq \mathrm{supp}(\pi_{\alpha,p})$. We remark that we index difficulty by the pair of parameters $(\alpha,p)$ in this section. For a finite set $\mathcal{A}' \subseteq (0,S)\times[d]$ and a sequence of positive weights $(\lambda_{\alpha,p})_{(\alpha,p)\in\mathcal{A}'}$ such that $\sum_{(\alpha,p)\in\mathcal{A}'}\lambda_{\alpha,p}=1$, our pretraining prior~$\pi$ for multi-index models is defined to be the mixture distribution
\begin{align}
    \pi \coloneqq \sum_{(\alpha,p)\in\mathcal{A}'}\lambda_{\alpha,p} \pi_{\alpha,p}. \label{eq:mixture-training-prior-distribution-multi-index-model}
\end{align}
Thus $\pi$ generates random multi-index regression tasks, where the smoothness and effective dimensions of the regression functions are randomly drawn from $\mathcal{A}'$ with probabilities given by $(\lambda_{\alpha,p})_{(\alpha,p)\in\mathcal{A}'}$.
Finally, we define $\mathcal{P}\coloneqq \bigcup_{(\alpha,p)\in\mathcal{A}'} \mathcal{P}_{\alpha,p}$ and $\mathcal{G}\coloneqq \bigcup_{(\alpha,p)\in\mathcal{A}'} \mathcal{G}_{\alpha,p}$. Proposition~\ref{prop:posterior-regression-function-multi-index} below is the analogue for multi-index models of Proposition~\ref{prop:posterior-regression-function-wavelet-prior}. 

\begin{prop} \label{prop:posterior-regression-function-multi-index}
    For any $(\beta,r)\in\mathcal{A}'$ and $g^\circ\in\mathcal{G}_{\beta,r}$, write $P_0 \equiv P_{g^\circ}$, let $\pi$ be the prior distribution defined in~\eqref{eq:mixture-training-prior-distribution-multi-index-model} and let $g_{\pi}$ denote the posterior regression function with respect to~$\pi$. Then there exists $C>0$, not depending on $n$, such that 
    \begin{align*}
    \mathbb{E}_{P_0} \bigl[\bigl\{g_\pi(X,\mathcal{D}_n) - g^\circ(X)\bigr\}^2\bigr] \leq Cn^{-2\beta/(2\beta+r)}
    \end{align*}
    for all $n \in \mathbb{N}$.
\end{prop}

Combining Propositions~\ref{prop:R_mu-decomposition},~\ref{prop:ERM-approx-posterior} and~\ref{prop:posterior-regression-function-multi-index}, we can now conclude that a pretrained Transformer simultaneously adapts to both the effective dimension of the multi-index models and the smoothness of the regression function, and is robust to distribution shift.
\begin{thm} \label{thm:multi-index-adaptive-icl}
    For any $L\geq 3$, $H\geq 1$, $n\in\mathbb{N}$ and $\kappa>0$, there exist $d_{\mathrm{model}}^\circ\geq d+2$ and $d_{\mathrm{ffn}}^\circ\in\mathbb{N}$ such that the following holds. Suppose $d_{\mathrm{model}}\geq d_{\mathrm{model}}^\circ$, $d_{\mathrm{ffn}} \geq d_{\mathrm{ffn}}^\circ$, $\hat{f}_T$ is defined as in~\eqref{eq:f-hat} with $\mathcal{F} \coloneqq \mathsf{Read} \circ \mathcal{F}_{\mathrm{TF}}(L,H,d_{\mathrm{model}},d_{\mathrm{ffn}})$ and pretraining prior $\pi$ defined in~\eqref{eq:mixture-training-prior-distribution-multi-index-model}. Let $R \coloneqq \sup_{g\in\mathcal{G}}\|g\|_{\infty}$ and $\tilde{f}_T \coloneqq \mathsf{clip}_{R}\circ\hat{f}_T$. Then, for all $T$ sufficiently large and $(\beta,r)\in\mathcal{A}'$,
    \begin{align*}
        \sup_{\mu \,:\, \chi^2(\mu, \pi_{\beta,r}) \leq \kappa} \mathcal{R}^{\mathrm{ICL}}_{\mu}(\tilde{f}_T) \leq Cn^{-2\beta/(2\beta+r)},
    \end{align*}
    where $C>0$ does not depend on $n$ and $\kappa$.
\end{thm}

\subsection{Lower Bounds on the ICL Excess Risk} \label{sec:lower-bounds}

Our final theoretical contribution is to provide lower bounds on the ICL excess risk.  Prior work has sought to establish the optimality of ICL algorithms via lower bounds on the minimax risk
\begin{align}
    \inf_{\hat{g}_n \in \hat{\mathcal{G}}_n} \sup_{P\in \mathrm{supp}(\mu)} \mathbb{E}_P \bigl[ \bigl\{\hat{g}_n(X,\mathcal{D}_n) - \mathbb{E}_P(Y\,|\,X)\bigr\}^2 \bigr], \label{eq:minimax-risk}
\end{align}
where $\hat{\mathcal{G}}_n$ denotes the set of all measurable functions from $\mathcal{X} \times (\mathcal{X}\times\mathcal{Y})^n$ to $\mathcal{Y}$.  Although the minimax risk is often regarded as the gold standard in traditional statistical learning problems, the ICL excess $\mu$-risk $\mathcal{R}_{\mu}^{\mathrm{ICL}}$ is an expectation over $P\sim \mu$, not a worst-case risk over $P\in\mathrm{supp}(\mu)$. Thus lower bounds on~\eqref{eq:minimax-risk} do not provide lower bounds for the ICL excess $\mu$-risk.  Theorem~\ref{thm:bayes-risk-lb} below, however, does indeed provide a lower bound on the ICL excess $\mu$-risk, and in a strong sense: even an estimator that has knowledge of the test distribution~$\mu$ cannot achieve a better rate of convergence than the pretrained Transformers in Theorems~\ref{thm:holder-adaptive-icl} and~\ref{thm:multi-index-adaptive-icl}.
\begin{thm} \label{thm:bayes-risk-lb}
    (a) Under the setting of Section~\ref{sec:holder-regression}, let $\beta\in\mathcal{A}$ and let $\mu\ll\pi_{\beta}$ be such that $\sup_{P \in \mathcal{P}_\beta}\frac{\d\mu}{\d\pi_{\beta}}(P) < \infty$. Then there exists $c>0$, not depending on $n$, such that
    \begin{align*}
        \inf_{\hat{g}_n\in\hat{\mathcal{G}}_n} \mathcal{R}_{\mu}^{\mathrm{ICL}}(\hat{g}_n) \geq cn^{-2\beta/(2\beta+d)}.
    \end{align*}
    (b) In the setting of Section~\ref{sec:multi-index-regression}, let $(\beta,r)\in\mathcal{A}'$ and let $\mu\ll\pi_{\beta,r}$ be such that $\sup_{P\in\mathcal{P}_{\beta,r}}\frac{\d\mu}{\d\pi_{\beta,r}}(P) < \infty$. Then there exists $c'>0$, not depending on $n$, such that
    \begin{align*}
        \inf_{\hat{g}_n\in\hat{\mathcal{G}}_n} \mathcal{R}_{\mu}^{\mathrm{ICL}}(\hat{g}_n) \geq c'n^{-2\beta/(2\beta+r)}.
    \end{align*}
\end{thm}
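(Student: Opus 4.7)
I focus on part (a); part (b) follows analogously, with the random Stiefel direction $U^{(r)}$ treated as additional nuisance. Since squared-error loss is minimised by the posterior mean under $\mu$, the infimum equals the expected posterior variance:
\begin{align*}
\inf_{\hat g_n \in \hat{\mathcal{G}}_n} \mathcal{R}^{\mathrm{ICL}}_\mu(\hat g_n) = \mathbb{E}\bigl[\mathrm{Var}_\mu\bigl(g_P(X) \,\big|\, X, \mathcal{D}_n\bigr)\bigr],
\end{align*}
where I write $g_P(X) = \mathbb{E}_P(Y \mid X)$. The plan is to lower bound the right-hand side by isolating the contribution of wavelet coefficients at the critical resolution $m\in\mathbb{N}$ with $2^m\asymp n^{1/(2\beta+d)}$. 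Writing $\theta\coloneqq(b_{m,\gamma})_{\gamma\in\Gamma_m}$ for the level-$m$ coefficients and $\vartheta$ for all other coefficients of $g_P$, the conditional variance inequality gives
\begin{align*}
\mathrm{Var}_\mu\bigl(g_P(X) \,\big|\, X,\mathcal{D}_n\bigr) \geq \mathbb{E}_\mu\bigl[\mathrm{Var}_\mu\bigl(g_P(X) \,\big|\, X,\mathcal{D}_n,\vartheta\bigr) \,\big|\, X,\mathcal{D}_n\bigr],
\end{align*}
and, given $\vartheta$, the only random part of $g_P(X)$ is $\sum_\gamma C_0\,2^{-m(\beta+d/2)}\,b_{m,\gamma}\,\Psi_{m,\gamma}(X)$.

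\textbf{Reduction to Bayesian linear regression.} Subtracting the deterministic non-level-$m$ contribution from the responses, the conditional problem given $\vartheta$ is a Gaussian linear regression $\tilde Y_i = \sum_\gamma C_0\,2^{-m(\beta+d/2)}\,\Psi_{m,\gamma}(X_i)\,b_{m,\gamma} + \xi_i$ on $[-1,1]^{|\Gamma_m|}$ with prior on $\theta$ inherited from $\mu$. The critical choice of $m$ makes the per-sample Fisher information for each $b_{m,\gamma}$ of order $C_0^2\,2^{-2m(\beta+d/2)}\asymp n^{-1}$, so the total observational Fisher information matrix after $n$ samples has bounded operator norm by wavelet orthonormality and the boundedness of the density of $P_X$. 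Combining this with a multivariate Van Trees-type inequality applied to the conditional prior $\mu(\cdot\mid\vartheta)$---whose Lebesgue density on $[-1,1]^{|\Gamma_m|}$ is controlled via disintegration of the bound $d\mu/d\pi_\beta \leq M$ and the fact that, under $\pi_\beta$, the wavelet coefficients are independent with marginal densities in $[c_0/2,c_0^{-1}/2]$---I obtain, on a $\mu$-positive set of $\vartheta$'s,
\begin{align*}
\lambda_{\min}\bigl(\mathrm{Cov}_\mu(\theta\mid\mathcal{D}_n,\vartheta)\bigr) \geq c_1,
\end{align*}
uniformly in $n$, on an event of $\mathcal{D}_n$-probability bounded below.

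\textbf{Completing the bound.} Writing $\mathrm{Var}_\mu(g_P(X)\mid X,\mathcal{D}_n,\vartheta)$ as a quadratic form in the feature vector $\bigl(C_0 2^{-m(\beta+d/2)}\Psi_{m,\gamma}(X)\bigr)_\gamma$ against $\mathrm{Cov}_\mu(\theta\mid\mathcal{D}_n,\vartheta)$, using the minimum-eigenvalue lower bound above, and invoking the wavelet frame identity $\sum_{\gamma\in\Gamma_m}\Psi_{m,\gamma}(X)^2\asymp 2^{md}$ for $X$ in an interior sub-cube of $A$, I obtain
\begin{align*}
\mathrm{Var}_\mu\bigl(g_P(X)\,\big|\,X,\mathcal{D}_n,\vartheta\bigr) \geq c_1\,C_0^2\,2^{-2m(\beta+d/2)}\cdot 2^{md} \asymp n^{-2\beta/(2\beta+d)}.
\end{align*}
Taking expectation over $X\sim P_X$ (which assigns positive mass to the sub-cube) and over the positive-probability event on which the covariance lower bound holds delivers the claimed lower bound on $\inf_{\hat g_n}\mathcal{R}^{\mathrm{ICL}}_\mu(\hat g_n)$.

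\textbf{Main obstacle.} The technically demanding step is extracting the uniform positive lower bound on $\lambda_{\min}(\mathrm{Cov}_\mu(\theta\mid\mathcal{D}_n,\vartheta))$. The assumption $d\mu/d\pi_\beta\leq M$ only controls the conditional prior density from above, whereas Van Trees-type arguments also require the prior Fisher information to be bounded, which translates here into a positive lower bound for the conditional density on a suitable sub-cube. The resolution is to disintegrate $\mu$ against $\vartheta$ and note that the conditional Radon--Nikodym derivative equals $(d\mu/d\pi_\beta)/\mathbb{E}_{\pi_\beta}[d\mu/d\pi_\beta\mid\vartheta]$; since $\mathbb{E}_{\pi_\beta}[d\mu/d\pi_\beta\mid\vartheta]$ has $\pi_\beta$-mean one and is bounded above by $M$, a Markov-type argument ensures it exceeds $1/2$ on a $\pi_\beta$-set of measure at least $1/(2M-1)$, on which a convolution-smoothing of the conditional density furnishes the required two-sided density control to apply Van Trees.
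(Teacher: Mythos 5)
There is a genuine gap at precisely the step you flag as the ``main obstacle'', and your proposed resolution does not close it. All that is assumed about $\mu$ is the \emph{upper} bound $\sup_P \frac{\d\mu}{\d\pi_\beta}(P)\leq M$. Your disintegration argument (Markov on $\mathbb{E}_{\pi_\beta}[\d\mu/\d\pi_\beta\,|\,\vartheta]$) only yields that, for a positive-probability set of $\vartheta$, the conditional density of the level-$m$ coefficients is bounded \emph{above} by $2M$ times the product density; it gives no pointwise lower bound, no smoothness, and no finite prior Fisher information, all of which a Van Trees argument needs. Convolution-smoothing cannot repair this: to transfer a lower bound from a smoothed surrogate prior $\nu$ back to $\mu$ you need $\nu\leq C\mu$ (so that $\inf_{\hat\theta}\int r\,\d\mu \geq C^{-1}\inf_{\hat\theta}\int r\,\d\nu$), i.e.\ a \emph{lower} bound on $\mu$, which the hypothesis does not supply --- for instance $\mu$ could be $\pi_\beta$ conditioned on a positive-probability event involving the wavelet coefficients jointly, in which case the conditional law of $\theta$ given $\vartheta$ vanishes on part of $[-1,1]^{|\Gamma_m|}$ and has infinite Fisher information, and no smooth $\nu\leq C\mu$ is available. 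Consequently the asserted uniform bound $\lambda_{\min}\bigl(\mathrm{Cov}_\mu(\theta\,|\,\mathcal{D}_n,\vartheta)\bigr)\geq c_1$ is unproved (and is in any case stronger than what Van Trees delivers even for a nice prior, which controls the Bayes risk, i.e.\ an averaged trace, not a minimum eigenvalue on an event). Two further, more minor, unjustified steps: the claim that the observational Fisher information has bounded operator norm uses an upper bound on the density of $P_X$, whereas Section~\ref{sec:holder-regression} only assumes a density \emph{lower} bound on a sub-cube (so coefficients near a spike or atom of $P_X$ can have diverging information); and the pointwise frame bound $\sum_{\gamma\in\Gamma_m}\Psi_{m,\gamma}(X)^2\gtrsim 2^{md}$ is not guaranteed at every $x$, though an $L^2(P_X)$-averaged version would suffice.

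It is worth seeing why the paper's route avoids this wall. The paper applies the Bayes-risk lower bound of \citet{chen2016bayes} (their Corollary~12 and Eq.~(48)), which lower bounds $\inf_{\hat g_n}\mathcal{R}^{\mathrm{ICL}}_\mu(\hat g_n)$ by the largest $t$ for which the $\tilde\mu$-mass of every $L^2(P_X)$-ball of radius $\sqrt{t}$ is below $\tfrac14 e^{-2I^{\mathrm{up}}_{\mathrm{KL}}}$, with $I^{\mathrm{up}}_{\mathrm{KL}}$ controlled by the $\|\cdot\|_\infty$-metric entropy of the Besov (resp.\ multi-index) class. In that scheme the hypothesis $\d\mu/\d\pi_\beta\leq M$ is used in exactly the direction it is powerful: it transfers the \emph{small-ball probability upper bound} of Lemma~\ref{lemma:prior-mass-ub} (resp.\ Lemma~\ref{lemma:prior-mass-ub-multi-index}) from $\tilde\pi_\beta$ to $\tilde\mu$. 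Your decomposition into a critical-resolution Bayesian linear model is a legitimately different and intuitive strategy, but to make it rigorous under only a density upper bound you would have to replace Van Trees by a tool that, like the small-ball/entropy bound, needs only upper control of the prior's concentration; as written, the argument does not establish the theorem.
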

We remark that, if~$\tilde{f}_T$ is defined as in Theorems~\ref{thm:holder-adaptive-icl} or~\ref{thm:multi-index-adaptive-icl}, then conditional on the pretraining data, we have $\tilde{f}_T \in \hat{\mathcal{G}}_n$. 

\section{Simulations}\label{sec:simulations}
We consider single-index models with $n=30$, $d=5$, $p=1$, $\sigma=0.01$ and four levels of Besov smoothness $\alpha\in\{2,\, 2.5,\, 3,\, 4\}$. The generating mechanism of our pretraining data is as described in Section~\ref{sec:multi-index-regression}, with $C_0=2$, $(a_k^{(\alpha)}), (b_{\ell,\gamma}^{(\alpha)}) \overset{\mathrm{iid}}{\sim} \mathrm{Unif}[-1,1]$ in~\eqref{eq:random-g-alpha} and $P_X$ being the uniform distribution on $\mathbb{B}^d$. 
For our Transformer, we use a GPT-2 medium-style model \citep{radford2019language} with $L=24$, $H=16$, $d_{\mathrm{model}} = 1024$, $d_{\mathrm{ffn}}=4096$ and GELU activation function. Further training details are provided in Appendix~\ref{sec:details-for-simulations}.

At test time, we randomly generate new single-index models with fixed smoothness levels under two settings:
\begin{itemize}
    \item[(i)] No distribution shift: the random test functions are generated from the same distribution as the pretraining data of that particular smoothness,
    \item[(ii)] With distribution shift: similarly as (i), except that the random variables $(a_k^{(\alpha)})$ in~\eqref{eq:random-g-alpha} are sampled from $\mathrm{Unif}[0,1]$ instead of $\mathrm{Unif}[-1,1]$.
\end{itemize} 
See Figure~\ref{fig:random-besov} for random Besov functions with and without distribution shift (blue: no distribution shift; orange: with distribution shift). 

Figure~\ref{fig:MSE}(a) shows the mean squared error (MSE) of our Transformer for different levels of $\alpha$; again, blue denotes `no distribution shift' and orange denotes `with distribution shift' at test time. We see that as the smoothness level $\alpha$ increases, our Transformer is able to adapt to the smoothness and achieve a faster rate of convergence, moreover, our Transformer is also robust to distribution shift at test time. Figure~\ref{fig:MSE}(b) plots the logarithm of the MSE against $2\alpha/(2\alpha+1)$, revealing that the log MSE is approximately linear in $2\alpha/(2\alpha+1)$.  This is in line with the conclusion of Theorem~\ref{thm:multi-index-adaptive-icl}. 

\begin{figure}[htbp]
    \centering
    \begin{subfigure}{0.49\linewidth}
        \includegraphics[width=\linewidth]{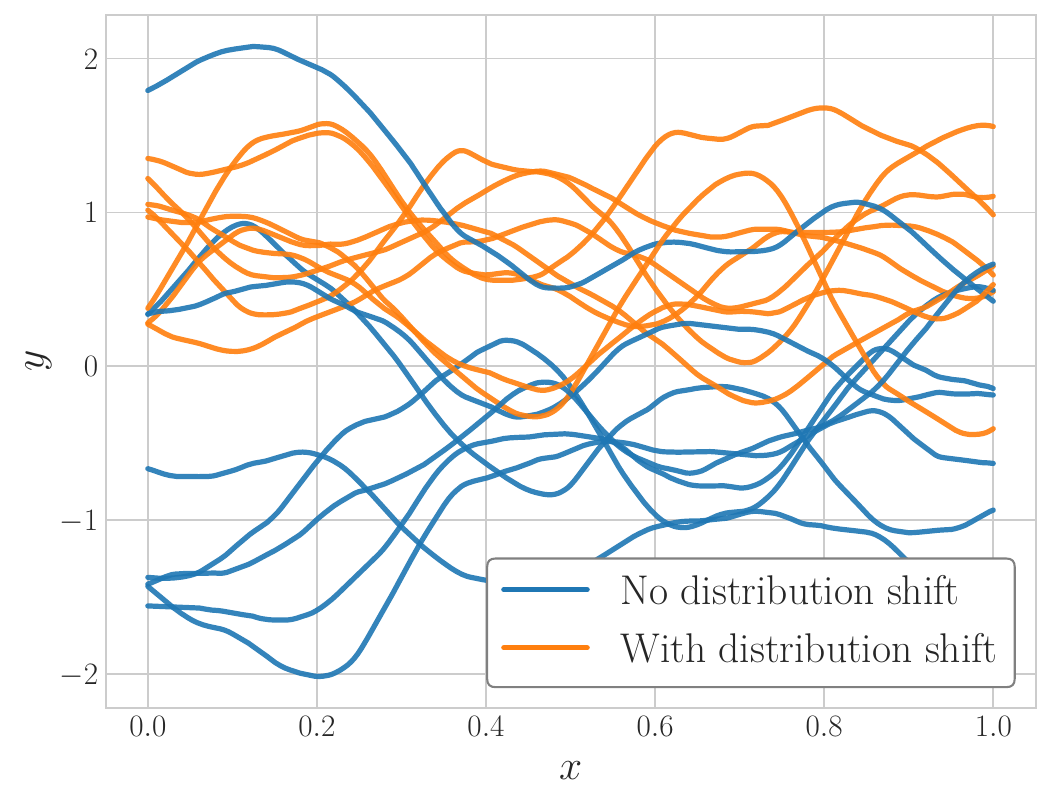}
        \caption{$\alpha=2$.}
    \end{subfigure}
    \begin{subfigure}{0.49\linewidth}
        \includegraphics[width=\linewidth]{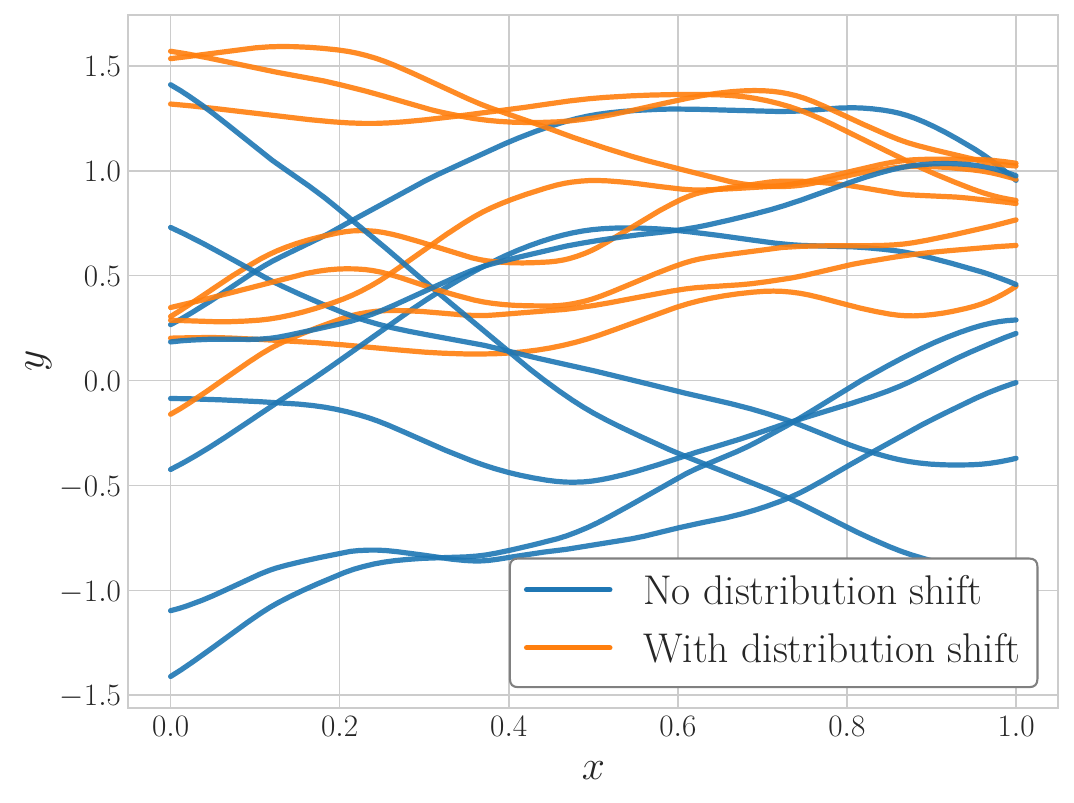}
        \caption{$\alpha=4$.}
    \end{subfigure}
    \caption{Random Besov functions with smoothness $\alpha\in\{2,4\}$. Blue represents functions with no distribution shift and orange represents functions with distribution shift.}
    \label{fig:random-besov}
\end{figure}

\begin{figure}[htbp]
    \centering
    \begin{subfigure}[t]{0.49\linewidth}
        \includegraphics[width=\linewidth]{figures/loss.pdf}
        \caption{MSE against $\alpha$.}
    \end{subfigure}\vspace{1em}
    \begin{subfigure}[t]{0.49\linewidth}
        \includegraphics[width=\linewidth]{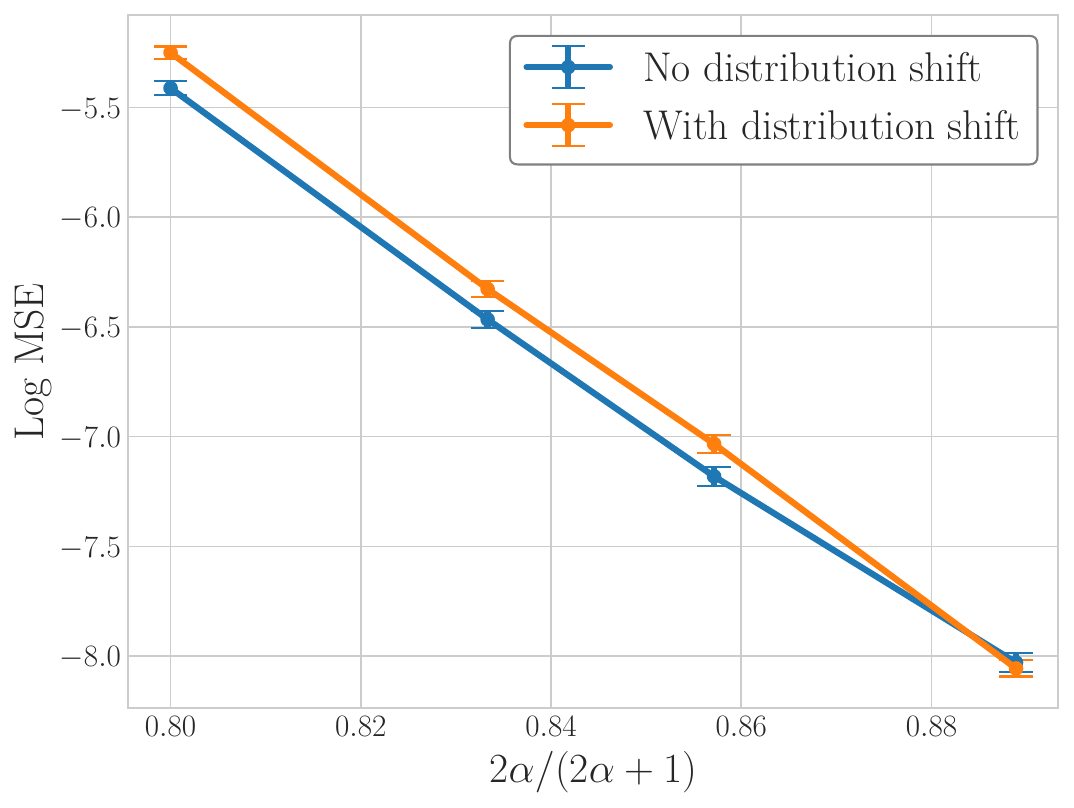}
        \caption{Log MSE against $\frac{2\alpha}{2\alpha+1}$.}
    \end{subfigure}
    \caption{MSE and log MSE at test time based on 100{,}000 random test functions.  Error bars represent 90\% confidence intervals.}
    \label{fig:MSE}
\end{figure}

\section{Conclusion}
In this work, we show that pretrained Transformers adapt to different task difficulties and are robust to distribution shifts at test time. This is proved by first showing that Transformers are universal approximators of posterior regression functions, and then establishing rates of convergence of the posterior regression functions on each task. Our results connect ICL with posterior contraction theory, and provide a rigorous explanation for the empirical success of ICL. 

A natural extension of this work would be to combine our theory with the quantitative approximation theory derived in independent, concurrent work by \citet{wakayama2025in} to provide more explicit values of $d_{\mathrm{model}}^\circ$, $d_{\mathrm{ffn}}^\circ$ and~$T$ that guarantee the conclusions of Theorems~\ref{thm:holder-adaptive-icl} and~\ref{thm:multi-index-adaptive-icl}.  More broadly, our work opens the door to providing adaptive and distributionally robust ICL guarantees for Transformers in other settings, including heavy-tailed noise, classification and dependent data.

\section*{Acknowledgement} 
The research of the first and third author was supported by RJS's European Research Council Advanced Grant 101019498.

\bibliographystyle{custom}
\bibliography{bibliography}	

\appendix

\section{Notation for the Appendix}
Let $(\mathcal{P},d)$ be a metric space, $\epsilon>0$ and $\mathcal{P}'\subseteq\mathcal{P}$, we write $\mathcal{N}(\epsilon,\mathcal{P}',d)$ for the minimum number of closed $\epsilon$-balls in $d$ needed to cover $\mathcal{P}'$. For $m,n\in\mathbb{N}$, let $I_n \in \mathbb{R}^{n\times n}$ be the identity matrix; for a matrix $A\in\mathbb{R}^{n\times m}$, define the operator norm of $A$ by $\|A\|_{\mathrm{op}} \coloneqq \sup_{x\in\mathbb{R}^m : \|x\|_2=1} \|Ax\|_2$ and define its entrywise $\ell_{\infty}$-norm by $\|A\|_{\infty} \coloneqq \max_{i\in[n],j\in[m]} |A_{ij}|$.

Let $P$ and $Q$ be probability measures on $\mathcal{Z}$, and let $\nu$ be a measure on $\mathcal{Z}$ such that $P\ll\nu$ and $Q\ll\nu$ (such a measure always exists, e.g.~$\nu=P+Q$). Write $p=\frac{\d P}{\d \nu}$ and $q=\frac{\d Q}{\d \nu}$, then the \emph{Hellinger distance} between $P$ and $Q$ is defined as
\begin{align*}
    d_{\mathrm{H}}(P,Q) \coloneqq \Bigl(\int_{\mathcal{Z}} \bigl(\sqrt{p(z)} - \sqrt{q(z)}\bigr)^2 \,\d\nu(z)\Bigr)^{1/2}.
\end{align*}
The definition is independent of the choice of the dominating measure $\nu$, and $d_{\mathrm{H}}$ is a metric on the space of distributions on $\mathcal{Z}$. The \emph{Kullback--Leibler divergence} from $Q$ to $P$ is defined as
\begin{align*}
    \mathrm{KL}(P,Q) \coloneqq \mathbb{E}_P\biggl(\log\frac{\d P}{\d Q}(Z)\biggr) = \int_{\mathcal{Z}} \log\frac{\d P}{\d Q}(z) \,\d P(z)
\end{align*}
if $P\ll Q$, and infinity otherwise; this is the $P$-expectation of the log-likelihood ratio between $P$ and $Q$. We further define
\begin{align*}
    \mathrm{V}_2(P,Q) \coloneqq \Var_P\biggl(\log\frac{\d P}{\d Q}(Z)\biggr) = \int_{\mathcal{Z}} \biggl(\log\frac{\d P}{\d Q}(z) - \mathrm{KL}(P,Q)\biggr)^2 \,\d P(z)
\end{align*}
if $P\ll Q$, and infinity otherwise; this is the $P$-variance of the log-likelihood ratio between $P$ and $Q$.

For $d\in\mathbb{N}$, $p\in[1,\infty)$, a Borel measurable function $f:\mathbb{R}^d \to\mathbb{R}$ and a Borel measure~$\mu$ on $\mathbb{R}^d$, we define
\[
\|f\|_{L^p(\mu)} \coloneqq \biggl(\int_{\mathbb{R}^d} |f(x)|^p \,\d \mu(x) \biggr)^{1/p}.
\]
and write $L^p(\mu)$ for the set of functions $f:\mathbb{R}^d \to\mathbb{R}$ with $\|f\|_{L^p(\mu)}<\infty$.
If the domain of~$f$ is restricted to $\mathcal{X}\subseteq\mathbb{R}^d$ and $\mu$ is the Lebesgue measure restricted to a Borel set $\mathcal{X}$, then we write $\|f\|_{L^p(\mathcal{X})} \equiv \|f\|_{L^p(\mu)}$ and $L^p(\mathcal{X}) \equiv L^p(\mu)$; when $\mathcal{X}$ is clear from context, we further write $\|f\|_{L^p} \equiv \|f\|_{L^p(\mathcal{X})}$.

For a zero-mean random variable $X$ on $\mathbb{R}$, we say that $X$ is \emph{sub-Gamma in the right tail} with variance parameter $\sigma^2>0$ and scale parameter $c>0$ if for all $\lambda\in[0,1/c)$, we have
\begin{align*}
    \mathbb{E} e^{\lambda X} \leq \exp\biggl(\frac{\sigma^2\lambda^2}{2(1-c\lambda)}\biggr).
\end{align*}
Similarly, $X$ is \emph{sub-Gamma in the left tail} if $-X$ is sub-Gamma in the right tail. And we say $X$ is \emph{sub-Gamma in both tails} is it is both sub-Gamma in the right tail and sub-Gamma in the left tail.

Finally, we define three commonly used activation functions in Transformers. For $x\in\mathbb{R}$, define
\begin{gather}
    \mathsf{ReLU}(x) \coloneqq \max\{0,x\}, \quad \mathsf{GELU}(x) \coloneqq x\Phi(x) \quad\text{and}\quad \mathsf{SiLU}(x) \coloneqq \frac{x}{1+e^{-x}}, \label{eq:activations}
\end{gather}
where $\Phi(x) \coloneqq \int_{-\infty}^x \frac{1}{\sqrt{2\pi}} e^{-t^2/2} \,\d t$ is the distribution function of standard Gaussian distribution.

\section{Proof of Proposition~\ref{prop:R_mu-decomposition}}
\begin{proof}[Proof of Proposition~\ref{prop:R_mu-decomposition}]
    We may assume without loss of generality that $\mu\ll\pi$ since otherwise, the right-hand side of~\eqref{eq:ICL-excess-risk-ub} is infinite.  Note that
    \begin{align*}
        \mathbb{E}\mathcal{R}_{\pi}(\tilde{f}_T) - \mathcal{R}_{\pi}(g_\pi) &= \mathbb{E}\, \mathbb{E}_\pi \mathbb{E}_P \bigl[  \bigl\{Y - \tilde{f}_T(X,\mathcal{D}_n)\bigr\}^2 - \bigl\{Y - g_\pi(X,\mathcal{D}_n)\bigr\}^2 \bigr]\\
        &= \mathbb{E}\, \mathbb{E}_\pi \mathbb{E}_P \bigl\{ -2Y\tilde{f}_T(X,\mathcal{D}_n) + \tilde{f}_T^2(X,\mathcal{D}_n) + 2Yg_\pi(X,\mathcal{D}_n) - g_\pi^2(X,\mathcal{D}_n) \bigr\},
    \end{align*}
    where the outer expectation is taken over the randomness of $\tilde{f}_T$.
    Moreover,
    \begin{align*}
        \mathbb{E}_\pi \mathbb{E}_P \bigl\{ Y\tilde{f}_T(X,\mathcal{D}_n)\bigr\} &= \mathbb{E}_\pi \mathbb{E}_P \bigl[\mathbb{E}_{P\sim\pi,\, (\mathcal{D}_n,X,Y)\sim P^{\otimes (n+1)}} \bigl\{Y\tilde{f}_T(X,\mathcal{D}_n)\,|\,X,\mathcal{D}_n\bigr\}\bigr]\\
        &= \mathbb{E}_\pi \mathbb{E}_P \bigl\{\tilde{f}_T(X,\mathcal{D}_n) \cdot \mathbb{E}_{P\sim\pi,\, (\mathcal{D}_n,X,Y)\sim P^{\otimes (n+1)}} \bigl(Y\,|\,X,\mathcal{D}_n\bigr)\bigr\}\\
        &= \mathbb{E}_\pi \mathbb{E}_P \bigl\{ \tilde{f}_T(X,\mathcal{D}_n) g_\pi(X,\mathcal{D}_n)\bigr\},
    \end{align*}
    and similarly
    \begin{align*}
        \mathbb{E}_\pi \mathbb{E}_P \bigl\{Yg_\pi(X,\mathcal{D}_n)\bigr\} &= \mathbb{E}_\pi \mathbb{E}_P \bigl[\mathbb{E}_{P\sim\pi,\, (\mathcal{D}_n,X,Y)\sim P^{\otimes (n+1)}} \bigl\{Yg_\pi(X,\mathcal{D}_n)\,|\,X,\mathcal{D}_n\bigr\}\bigr]\\
        &= \mathbb{E}_\pi\mathbb{E}_P \bigl\{g_\pi^2(X,\mathcal{D}_n)\bigr\}.
    \end{align*}
    Therefore,
    \begin{align*}
        \mathcal{E}(\tilde{f}_T) &= \mathbb{E}\mathcal{R}_{\pi}(\tilde{f}_T) - \mathcal{R}_{\pi}(g_\pi) = \mathbb{E}\, \mathbb{E}_\pi\mathbb{E}_P \bigl\{ -2g_\pi(X,\mathcal{D}_n)\tilde{f}_T(X,\mathcal{D}_n) + \tilde{f}_T^2(X,\mathcal{D}_n) + g_\pi^2(X,\mathcal{D}_n) \bigr\}\nonumber\\
        &= \mathbb{E}\, \mathbb{E}_\pi\mathbb{E}_P \bigl[\bigl\{ \tilde{f}_T(X,\mathcal{D}_n) - g_\pi(X,\mathcal{D}_n) \bigr\}^2\bigr].
    \end{align*}
    By~\eqref{eq:def-posterior-regression-function}, we have $|g_{\pi}(X,\mathcal{D}_n)| \leq R$ almost surely.
    Now, for every fixed $P\in\mathcal{P}$, define $Z_P \coloneqq \mathbb{E}\,\mathbb{E}_P \bigl[\bigl\{ \tilde{f}_T(X,\mathcal{D}_n) - g_\pi(X,\mathcal{D}_n) \bigr\}^2\bigr]$, so that by Fubini's theorem and the Cauchy--Schwarz inequality,
    \begin{align}
        &\mathbb{E}\, \mathbb{E}_\mu \mathbb{E}_P \bigl[\bigl\{ \tilde{f}_T(X,\mathcal{D}_n) - g_\pi(X,\mathcal{D}_n) \bigr\}^2\bigr] = \mathbb{E}_{\mu}(Z_P) = \mathbb{E}_{\pi}\biggl( Z_P \cdot \frac{\d\mu}{\d\pi}(P)\biggr)\nonumber\\
        &\leq \sqrt{\mathbb{E}_{\pi}(Z_P^2) \cdot \mathbb{E}_{\pi}\biggl\{\biggl(\frac{\d\mu}{\d\pi}(P)\biggr)^2\biggr\}} \leq \sqrt{4R^2\mathbb{E}_{\pi}(Z_P) \bigl\{\chi^2(\mu,\pi)+1\bigr\}}= 2R\sqrt{\bigl\{\chi^2(\mu,\pi)+1\bigr\} \mathcal{E}(\tilde{f}_T)}. \label{eq:l2-mu-bound}
    \end{align}
    Further,
    \begin{align}
        \mathbb{E}_\mu \mathbb{E}_P \bigl\{Y\tilde{f}_T(X,\mathcal{D}_n)\bigr\} &= \mathbb{E}_\mu \mathbb{E}_P \bigl[ \mathbb{E}_P\bigl\{Y\tilde{f}_T(X,\mathcal{D}_n) \,|\, X,\mathcal{D}_n\bigr\} \bigr]\nonumber\\
        &= \mathbb{E}_\mu \mathbb{E}_P \bigl\{ \tilde{f}_T(X,\mathcal{D}_n) \cdot \mathbb{E}_P(Y \,|\, X,\mathcal{D}_n) \bigr\}\nonumber\\
        &= \mathbb{E}_\mu \mathbb{E}_P \bigl\{ \tilde{f}_T(X,\mathcal{D}_n) \cdot \mathbb{E}_P(Y \,|\, X) \bigr\}, \label{eq:prop-1-bound-1}
    \end{align}
    and similarly,
    \begin{align}
        \mathbb{E}_\mu \mathbb{E}_P \bigl\{Y \mathbb{E}_P(Y \,|\, X)\bigr\} = \mathbb{E}_\mu \mathbb{E}_P \bigl[ \mathbb{E}_P\bigl\{Y \mathbb{E}_P(Y \,|\, X) \,|\, X\bigr\}\bigr] = \mathbb{E}_\mu \mathbb{E}_P \bigl\{\bigl(\mathbb{E}_P(Y \,|\, X)\bigr)^2\bigr\}. \label{eq:prop-1-bound-2}
    \end{align}
    Hence, we deduce that
    \begin{align}
        \mathcal{R}_{\mu}^{\mathrm{ICL}}(\tilde{f}_T) &= \mathbb{E}\mathcal{R}_{\mu}(\tilde{f}_T) - \mathbb{E}_\mu \mathbb{E}_P \bigl[\bigl\{Y - \mathbb{E}_P(Y\,|\,X)\bigr\}^2\bigr] \nonumber\\
        &= \mathbb{E}\, \mathbb{E}_\mu \mathbb{E}_P \bigl\{ -2Y\tilde{f}_T(X,\mathcal{D}_n) + \tilde{f}_T^2(X,\mathcal{D}_n) + 2Y\mathbb{E}_P(Y\,|\,X) - \bigl(\mathbb{E}_P(Y\,|\,X)\bigr)^2 \bigr\} \nonumber\\
        \overset{(i)}&{=} \mathbb{E}\, \mathbb{E}_\mu \mathbb{E}_P \bigl[ \bigl\{\tilde{f}_T(X,\mathcal{D}_n) - \mathbb{E}_P(Y\,|\,X)\bigr\}^2 \bigr] \label{eq:ICL-excess-risk-alternative-form}\\
        &\leq 2\mathbb{E}\, \mathbb{E}_\mu \mathbb{E}_P \bigl[ \bigl\{\tilde{f}_T(X,\mathcal{D}_n) - g_\pi(X,\mathcal{D}_n)\bigr\}^2 \bigr] + 2\mathbb{E}_\mu \mathbb{E}_P \bigl[ \bigl\{g_\pi(X,\mathcal{D}_n) - \mathbb{E}_P(Y\,|\,X)\bigr\}^2 \bigr] \nonumber\\
        \overset{(ii)}&{\leq}  4R\sqrt{\bigl\{\chi^2(\mu,\pi)+1\bigr\} \mathcal{E}(\tilde{f}_T)} + 2\mathbb{E}_\mu \mathbb{E}_P \bigl[\bigl\{g_\pi(X,\mathcal{D}_n) - \mathbb{E}_P(Y\,|\,X)\bigr\}^2\bigr], \nonumber
    \end{align}
    where~$(i)$ follows from~\eqref{eq:prop-1-bound-1} and~\eqref{eq:prop-1-bound-2}, and $(ii)$ follows from~\eqref{eq:l2-mu-bound}.
\end{proof}

\section{Proofs for Section~\ref{sec:universal-approximation}}
\subsection{Proof of Theorem~\ref{thm:universal-approximation}}

We work under the assumptions of Section~\ref{sec:universal-approximation} and we assume that $\rho$ is not a polynomial. For $g\in\mathcal{G}$, $(X,Y_g)\sim P_g$ and $z\in\mathbb{R}$, let $y \mapsto f_{\xi}(y,z)$ denote the conditional density of $Y_g\,|\,\{g(X)=z\}$ with respect to the dominating measure $\nu$. Further let $\bar{f}_{\xi} \coloneqq f_{\xi}/R'$.  
By Bayes' theorem and~\eqref{eq:alternative-def-posterior-regression-function}, the posterior regression function $g_{\pi}$ can be written as
\begin{align}
    g_{\pi}(x,D_n) = \frac{\int_{\mathcal{G}} g(x) \prod_{i=1}^n \bar{f}_{\xi}\bigl(y_i - g(x_i)\bigr) \,\d \tilde{\pi}(g)}{\int_{\mathcal{G}} \prod_{i=1}^n \bar{f}_{\xi}\bigl(y_i - g(x_i)\bigr) \,\d \tilde{\pi}(g)}. \label{eq:g-pi-with-f-bar}
\end{align}
Throughout this section, we assume that $\bigl((X_1,Y_1),\ldots,(X_N,Y_N),(X,Y)\bigr) \,|\, P \sim P^{\otimes(N+1)}$ with $P\sim\pi$. For $n\in[N]$ and $d_{\mathrm{model}}\geq d+2$, define
\begin{align}
    Z_{\mathrm{in}}^{(n,d_{\mathrm{model}})} \coloneqq \begin{pmatrix}
        X_1^\top & Y_1 & 0_{d_{\mathrm{model}}-d-2}^\top & 0\\
        \vdots & \vdots & \vdots & \vdots\\
        X_n^\top & Y_n & 0_{d_{\mathrm{model}}-d-2}^\top & 0\\
        X^\top & 0 & 0_{d_{\mathrm{model}}-d-2}^\top & 1
    \end{pmatrix} \in \mathbb{R}^{(n+1) \times d_{\mathrm{model}}}. \label{eq:input-matrix-(n,D)}
\end{align}

We first write down the rows of attention layers and FFN layers before we start.
Consider the notation in Definition~\ref{defn:attention-layer}, if $Z\in\mathbb{R}^{(n+1) \times d_{\mathrm{model}}}$ has rows $Z_1^\top,\ldots,Z_{n+1}^\top\in\mathbb{R}^{1\times d_{\mathrm{model}}}$, then the $i$th row $[\attn_{\theta_{\mathrm{attn}}}(Z)]_{i,:}$ of $\attn_{\theta_{\mathrm{attn}}}(Z)$ can be written as
\begin{align*}
    [\attn_{\theta_{\mathrm{attn}}}(Z)]_{i,:} = Z_i^\top + \sum_{h=1}^H \sum_{j=1}^{n+1} Z_j^\top V_h \cdot \frac{\exp(Z_i^\top Q_h K_h^\top Z_j/\sqrt{d_{\mathrm{model}}})}{\sum_{\ell=1}^{n+1} \exp(Z_i^\top Q_h K_h^\top Z_\ell/\sqrt{d_{\mathrm{model}}})} \in\mathbb{R}^{1\times d_{\mathrm{model}}}.
\end{align*}
Next, consider the notation in Definition~\ref{defn:ffn-layer}, if $Z\in\mathbb{R}^{(n+1) \times d_{\mathrm{model}}}$ has rows $Z_1^\top,\ldots,Z_{n+1}^\top\in\mathbb{R}^{1\times d_{\mathrm{model}}}$, then the $i$th row $[\mathsf{FFN}_{\theta_{\mathrm{ffn}}}(Z)]_{i,:}$ of $\mathsf{FFN}_{\theta_{\mathrm{ffn}}}(Z)$ can be written as
\begin{align*}
    [\mathsf{FFN}_{\theta_{\mathrm{ffn}}}(Z)]_{i,:} = \bigl(Z_i + W_2^\top\rho(W_1^\top Z_i + v)\bigr)^\top.
\end{align*}
Thus the FFN layer applies a single-hidden layer neural network to each row of $Z$.

\begin{lemma} \label{lemma:approximate-integral}
    For any $\epsilon,\delta>0$, there exist $M\in\mathbb{N}$ and $g_1,\ldots,g_M \in \mathcal{G}$ such that with probability at least $1-\delta$, we have 
    \begin{align*}
        \max_{n \in [N]} \biggl| \frac{1}{M} \sum_{m=1}^M g_m(X) \prod_{i=1}^n \bar{f}_{\xi}\bigl(Y_i, g_m(X_i)\bigr) - \int_{\mathcal{G}} g(X) \prod_{i=1}^n \bar{f}_{\xi}\bigl(y_i, g(x_i)\bigr) \,\d \tilde{\pi}(g) \biggr| \leq \epsilon
    \end{align*}
    and
    \begin{align*}
        \max_{n \in [N]} \biggl| \frac{1}{M} \sum_{m=1}^M \prod_{i=1}^n \bar{f}_{\xi}\bigl(Y_i, g_m(X_i)\bigr) - \int_{\mathcal{G}} \prod_{i=1}^n \bar{f}_{\xi}\bigl(y_i, g(x_i)\bigr) \,\d \tilde{\pi}(g) \biggr| \leq \epsilon.
    \end{align*}
\end{lemma}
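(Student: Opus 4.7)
The plan is to use the probabilistic method: draw $g_1,\ldots,g_M\overset{\mathrm{iid}}{\sim}\tilde\pi$ independently of the test data $((X_i,Y_i))_{i=1}^N$ and $X$, apply Hoeffding's inequality conditionally on the data, union bound over $n\in[N]$, and then extract a good deterministic realization via a Fubini/Markov argument. Since $\mathcal{G}=\mathrm{supp}(\tilde\pi)$, $\tilde\pi^{\otimes M}$-almost every realization lies in $\mathcal{G}^M$, so the set-membership requirement $g_m\in\mathcal{G}$ is automatic.

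For the concentration step, I would use the uniform boundedness $|g(X)|\leq R$ together with $\bar{f}_\xi=f_\xi/R'\in[0,1]$: conditional on the data, the random variables
\[
U_m^{(n)}\coloneqq g_m(X)\prod_{i=1}^n\bar{f}_\xi\bigl(Y_i,g_m(X_i)\bigr),\qquad V_m^{(n)}\coloneqq\prod_{i=1}^n\bar{f}_\xi\bigl(Y_i,g_m(X_i)\bigr)
\]
are iid in $m$, bounded by $R$ and $1$ respectively, with conditional means exactly matching the two target integrals (this uses only that $g_m$ is independent of the data and distributed as $\tilde\pi$). Hoeffding's inequality then yields, for each fixed $n$, conditional-failure probabilities of order $2\exp\bigl(-M\epsilon^2/(2R^2)\bigr)$ and $2\exp\bigl(-2M\epsilon^2\bigr)$. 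A union bound over $n\in[N]$ and over the two approximation statements gives that the conditional probability (given the data) of any failure is at most $4N\exp\bigl(-M\epsilon^2/(2(R\vee 1)^2)\bigr)$. Choosing any $M\geq 2(R\vee 1)^2\epsilon^{-2}\log(8N/\delta)$ makes this bound at most $\delta/2$, and hence by Fubini the joint probability of failure over both $g$'s and data is at most $\delta/2$.

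To convert this into a deterministic choice of $g_1,\ldots,g_M$, let $p(g_1,\ldots,g_M)\coloneqq \mathbb{P}_{\mathrm{data}}\bigl(\text{either approximation fails}\bigm| g_1,\ldots,g_M\bigr)$. By the tower property, $\mathbb{E}_{g\sim\tilde\pi^{\otimes M}}[p(g)]\leq\delta/2$, and Markov's inequality gives $\mathbb{P}_{g\sim\tilde\pi^{\otimes M}}\bigl(p(g)>\delta\bigr)\leq 1/2<1$. Consequently there exists a realization $(g_1,\ldots,g_M)\in\mathcal{G}^M$ with $p(g_1,\ldots,g_M)\leq\delta$, which is exactly the conclusion of the lemma. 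The only conceptually nontrivial point is this final Markov step, which is needed to convert the random-$g$ concentration statement into a deterministic-$g$ existence statement; the Hoeffding bound itself is immediate once the uniform boundedness $|g(X)|\leq R$ and $\bar f_\xi\leq 1$ are in hand, and the union bound over $n\in[N]$ costs only a logarithmic factor in~$M$.
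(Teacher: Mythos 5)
Your proposal is correct and follows essentially the same route as the paper: draw $g_1,\ldots,g_M\overset{\mathrm{iid}}{\sim}\tilde\pi$ independently of the data, apply Hoeffding's inequality conditionally on the data, union bound over $n\in[N]$ and the two statements, and then use Fubini/tower to extract a deterministic realization. The only cosmetic difference is that you target $\delta/2$ and invoke Markov's inequality, whereas the paper notes directly that a nonnegative random variable with expectation at most $\delta$ must be at most $\delta$ with positive probability.
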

\begin{proof}
    Let $M\coloneqq 2\epsilon^{-2}(R^2 \vee 1)\log(2N/\delta)$ and let $\tilde{g}_1,\ldots,\tilde{g}_M \overset{\mathrm{iid}}{\sim} \tilde{\pi}$ be independent of $(X_i,Y_i)_{i=1}^N$ and $(X,Y)$. 
    For $n\in[N]$, define events 
    \begin{align*}
        A_n \coloneqq \Biggl\{ \biggl| \frac{1}{M} \sum_{m=1}^M \tilde{g}_m(X) \prod_{i=1}^n \bar{f}_{\xi}\bigl(Y_i, \tilde{g}_m(X_i)\bigr) - \int_{\mathcal{G}} g(X) \prod_{i=1}^n \bar{f}_{\xi}\bigl(y_i, g(x_i)\bigr) \,\d \tilde{\pi}(g) \biggr| \leq \epsilon \Biggr\}
    \end{align*}
    and
    \begin{align*}
        B_n \coloneqq \Biggl\{ \biggl| \frac{1}{M} \sum_{m=1}^M \prod_{i=1}^n \bar{f}_{\xi}\bigl(Y_i, \tilde{g}_m(X_i)\bigr) - \int_{\mathcal{G}} \prod_{i=1}^n \bar{f}_{\xi}\bigl(y_i, g(x_i)\bigr) \,\d \tilde{\pi}(g) \biggr| \leq \epsilon \Biggr\}.
    \end{align*}
    We have by Hoeffding's inequality that
    \begin{align*}
        \mathbb{P}(A_n^{\mathrm{c}} \,|\, X_1,Y_1,\ldots,X_n,Y_n,X) \leq \frac{\delta}{2N} \quad\text{and}\quad
        \mathbb{P}(B_n^{\mathrm{c}} \,|\, X_1,Y_1,\ldots,X_n,Y_n,X) \leq \frac{\delta}{2N}.
    \end{align*}
    Then by a union bound,
    \begin{align*}
        \mathbb{E}\biggl\{ \mathbb{P}\biggl( \bigcup_{n=1}^N (A_n^{\mathrm{c}} \cup B_n^{\mathrm{c}}) \,\bigg|\, \tilde{g}_1,\ldots,\tilde{g}_M \biggr) \biggr\} &= \mathbb{P}\biggl( \bigcup_{n=1}^N (A_n^{\mathrm{c}} \cup B_n^{\mathrm{c}})\biggr)\\
        & = \mathbb{E}\biggl\{ \mathbb{P}\biggl( \bigcup_{n=1}^N (A_n^{\mathrm{c}} \cup B_n^{\mathrm{c}}) \,\bigg|\, X_1,Y_1,\ldots,X_N,Y_N,X,Y \biggr) \biggr\} \leq \delta.
    \end{align*}
    Therefore, 
    \begin{align*}
        \mathbb{P}\biggl\{ \mathbb{P}\biggl( \bigcup_{n=1}^N (A_n^{\mathrm{c}} \cup B_n^{\mathrm{c}}) \,\biggm|\, \tilde{g}_1,\ldots,\tilde{g}_M \biggr) \leq \delta\biggr\} > 0.
    \end{align*}
    This proves the claim.
\end{proof}

\begin{lemma}\label{lemma:denominator-lb}
    For all $n\in[N]$, we have
    \begin{align*}
        \int_{\mathcal{G}} \prod_{i=1}^n \bar{f}_{\xi}\bigl(y_i, g(x_i)\bigr) \,\d \tilde{\pi}(g) > 0
    \end{align*}
    almost surely. 
\end{lemma}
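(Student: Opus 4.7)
The plan is to recognise the integral as a marginal conditional density (up to a multiplicative constant) under the data-generating process, and then invoke the elementary fact that a conditional density is almost surely strictly positive under the law it represents. Since $N$ is finite, it suffices to prove the claim for a fixed $n \in [N]$.

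\textbf{Step 1: identification as a density.} Under the generating process, $\tilde{g}\sim \tilde{\pi}$, the covariates $X_1,\ldots,X_n$ are i.i.d.\ with law $P_X$ and independent of $\tilde{g}$ (since $P_X$ does not depend on $\tilde{g}$), and conditionally on $(\tilde{g},X_i)$ the response $Y_i$ has density $f_\xi(\cdot,\tilde{g}(X_i))$ with respect to $\nu$. Marginalising over $\tilde{g}\sim\tilde{\pi}$ by Fubini, the conditional density of $(Y_1,\ldots,Y_n)$ given $(X_1,\ldots,X_n)$ with respect to $\nu^{\otimes n}$ is
\begin{align*}
q_n(y_1,\ldots,y_n\,|\,x_1,\ldots,x_n) = \int_{\mathcal{G}}\prod_{i=1}^n f_\xi\bigl(y_i,g(x_i)\bigr)\,\d\tilde{\pi}(g) = (R')^n \int_{\mathcal{G}}\prod_{i=1}^n \bar{f}_\xi\bigl(y_i,g(x_i)\bigr)\,\d\tilde{\pi}(g).
\end{align*}
Thus the integral of interest, evaluated at the random data $(X_i,Y_i)_{i=1}^n$, equals $(R')^{-n}q_n(Y_1,\ldots,Y_n\,|\,X_1,\ldots,X_n)$.

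\textbf{Step 2: density is a.s.\ positive under its own law.} A standard measure-theoretic fact says that if $(Y\,|\,X)$ has conditional density $q(\cdot\,|\,X)$ with respect to a $\sigma$-finite measure $\nu'$, then $q(Y\,|\,X)>0$ almost surely: indeed
\begin{align*}
\mathbb{P}\bigl(q(Y\,|\,X)=0\,\bigm|\,X\bigr) = \int_{\{y:\,q(y|X)=0\}} q(y\,|\,X)\,\d\nu'(y) = 0,
\end{align*}
and taking expectations gives $\mathbb{P}(q(Y\,|\,X)=0)=0$. Applying this with $\nu' = \nu^{\otimes n}$ and $q=q_n$ yields $q_n(Y_1,\ldots,Y_n\,|\,X_1,\ldots,X_n)>0$ almost surely, hence $\int_{\mathcal{G}}\prod_{i=1}^n \bar{f}_\xi(Y_i,g(X_i))\,\d\tilde{\pi}(g)>0$ almost surely, as required.

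\textbf{Main obstacle.} There is no serious technical obstacle: the argument is essentially an unfolding of the definitions plus the density-positivity fact. The only mild point to check is joint measurability of $(g,x,y)\mapsto \bar{f}_\xi(y,g(x))$ so that Fubini applies, but this is implicit in the setup of Section~\ref{sec:universal-approximation} (the function class $\mathcal{G}$ is a measurable space and the Radon--Nikodym derivative is fixed). Nonnegativity of $\bar{f}_\xi$ ensures that positivity of the integral at the random point is the same as nonvanishing of $q_n$, so no cancellations can occur.
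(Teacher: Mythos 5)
Your proof is correct and follows essentially the same route as the paper: identify the integral (up to the factor $(R')^n$) as the conditional density of $(Y_1,\ldots,Y_n)$ given $(X_1,\ldots,X_n)$ with respect to $\nu^{\otimes n}$, then observe that a conditional density is almost surely positive under its own law. The paper states this slightly more tersely via the tower property, $\mathbb{P}(\cdot)=\mathbb{E}\{\mathbb{P}(\cdot\mid X_1,\ldots,X_n)\}=0$, but the argument is the same; your spelled-out Step 2 makes the key measure-theoretic fact explicit.
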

\begin{proof}
    The density (with respect to $\nu$) of $(Y_1,\ldots,Y_n)$ conditional on $(X_1,\ldots,X_n)$ is given by
    \begin{align*}
        (y_1,\ldots,y_n) \mapsto \int_{\mathcal{G}} \prod_{i=1}^n f_\xi\bigl(y_i, g(X_i)\bigr) \,\d \tilde{\pi}(g).
    \end{align*}
    Hence, by definition, 
    \begin{align*}
        &\mathbb{P}\biggl( \int_{\mathcal{G}} \prod_{i=1}^n \bar{f}_{\xi}\bigl(y_i, g(x_i)\bigr) \,\d \tilde{\pi}(g) = 0\biggr) = \mathbb{E}\biggl\{ \mathbb{P}\biggl( \int_{\mathcal{G}} \prod_{i=1}^n f_\xi\bigl(y_i, g(X_i)\bigr) \,\d \tilde{\pi}(g) = 0 \,\biggm|\, X_1,\ldots,X_n\biggr) \biggr\} = 0.
    \end{align*}
    This holds for all $n\in[N]$, which proves the claim. 
\end{proof}

\begin{lemma}\label{lemma:universal-approximation-of-NN}
    Let $d_1,d_2 \in \mathbb{N}$, let $f:\mathbb{R}^{d_1} \to \mathbb{R}^{d_2}$ be Borel measurable and let $\mu$ be a Borel probability measure on $\mathbb{R}^{d_1}$.  Then for every $\epsilon>0$ and $\delta\in(0,1)$, there exist $d'\in\mathbb{N}$, $W_1\in\mathbb{R}^{d_1 \times d'}$, $W_2\in\mathbb{R}^{d'\times d_2}$ and $v\in\mathbb{R}^{d'}$ such that 
    \begin{align*}
        \mu\Bigl(\bigl\{x\in\mathbb{R}^{d_1} : \bigl\|f(x) - W_2^\top \rho(W_1^\top x + v)\bigr\|_{\infty} \leq \epsilon\bigr\}\Bigr) \geq 1-\delta.
    \end{align*}
\end{lemma}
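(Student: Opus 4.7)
The plan is to combine three classical tools: inner regularity of Borel probability measures on $\mathbb{R}^{d_1}$, Lusin's theorem (plus Tietze extension) to reduce Borel measurable approximation to continuous approximation, and the Leshno--Lin--Pinkus--Schocken universal approximation theorem for single-hidden-layer networks with a non-polynomial activation.

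First, since every Borel probability measure on $\mathbb{R}^{d_1}$ is Radon, I choose a compact set $K\subseteq\mathbb{R}^{d_1}$ with $\mu(K)\geq 1-\delta/2$.  Applying Lusin's theorem componentwise to the Borel measurable $f:\mathbb{R}^{d_1}\to\mathbb{R}^{d_2}$, I obtain a closed set $F\subseteq K$ with $\mu(K\setminus F)\leq \delta/2$ on which $f|_F$ is continuous.  Since $F$ is a closed subset of the compact set $K$, it is itself compact, so $f|_F$ is bounded.  By Tietze's extension theorem applied to each coordinate, I extend $f|_F$ to a continuous function $\tilde{f}:\mathbb{R}^{d_1}\to\mathbb{R}^{d_2}$ with $\tilde{f}|_F=f|_F$.

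Second, since $\rho$ is not a polynomial Lebesgue almost everywhere, the classical universal approximation theorem (Leshno, Lin, Pinkus and Schocken, 1993; cf. Pinkus, 1999, Theorem~3.1) applied coordinatewise to~$\tilde{f}$ on the compact set~$K$ yields $d'\in\mathbb{N}$ and parameters $W_1\in\mathbb{R}^{d_1\times d'}$, $W_2\in\mathbb{R}^{d'\times d_2}$, $v\in\mathbb{R}^{d'}$ such that
\[
\sup_{x\in K}\bigl\|\tilde{f}(x)-W_2^\top\rho(W_1^\top x+v)\bigr\|_\infty \leq \epsilon.
\]
Combining the two steps, for every $x\in F$ we have $f(x)=\tilde{f}(x)$ and hence $\|f(x)-W_2^\top\rho(W_1^\top x+v)\|_\infty\leq\epsilon$, while $\mu(F)\geq\mu(K)-\delta/2\geq 1-\delta$, yielding the claim.

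The step I expect to require the most care is the invocation of the universal approximation theorem for an activation that is only assumed to be Borel measurable and non-polynomial almost everywhere.  The original Leshno et al. statement assumes $\rho$ is locally essentially bounded with discontinuities of Lebesgue measure zero, which is clearly satisfied by $\mathsf{ReLU}$, $\mathsf{GELU}$ and $\mathsf{SiLU}$ (all continuous) but needs an additional argument in the fully general setting.  The standard workaround is to mollify $\rho$ to obtain a $C^\infty$ non-polynomial activation $\rho_\eta$ for which Leshno--Pinkus applies directly, and then replace $\rho_\eta$ by $\rho$ inside the network using density of $\rho$ in $\rho_\eta$ in $L^1_{\mathrm{loc}}$ and boundedness of the pre-activations on the compact set $K$; this introduces an additional error that can be made smaller than $\epsilon$ by appropriate choice of the mollification scale.
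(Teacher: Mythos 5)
Your argument --- inner regularity to pick a compact set, Lusin's theorem (plus Tietze) to replace $f$ by a globally continuous function off a set of small $\mu$-measure, the Leshno--Lin--Pinkus--Schocken density theorem on the compact set, and a coordinatewise treatment of $d_2>1$ --- is exactly the paper's proof of this lemma, up to immaterial bookkeeping (the paper cites the Folland form of Lusin's theorem, which already yields a continuous function on all of $\mathbb{R}^{d_1}$, and splits $\delta$ into thirds rather than halves). The paper does not carry out your concluding mollification caveat; it simply invokes Leshno et al.\ Theorem~1 as your main argument does, so your proposal matches the published proof.
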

\begin{proof}
    We first prove the claim for $d_2=1$.  Let $K_0 \subseteq \mathbb{R}^{d_1}$ be a compact set such that $\mu(K_0) \geq 1-\delta/3$.
    By Lusin's theorem \citep[Theorems~7.8 and~7.10]{folland1999real}, there exists a (compactly supported) continuous function $h:\mathbb{R}^{d_1} \to \mathbb{R}$ such that $\mu\bigl(\{x \in K_0: f(x) \neq h(x)\}\bigr) \leq \delta/3$. Since one-hidden layer neural networks with non-polynomial activation are dense in the set of continuous functions on compact domains equipped with the uniform norm \citep[Theorem~1]{leshno1993multilayer}, there exist $d'\in\mathbb{N}$, $W_1\in\mathbb{R}^{d_1\times d'}$, $W_2\in\mathbb{R}^{d'\times d_2}$ and $v\in\mathbb{R}^{d'}$ such that for all $x \in K_0$,
    \begin{align*}
        \bigl|h(x) - W_2^\top \rho(W_1^\top x + v)\bigr| \leq \frac{\delta\epsilon}{3}.
    \end{align*}
    It follows by Markov's inequality that
    \begin{align*}
        &\mu\Bigl(\bigl\{x \in \mathbb{R}^{d_1}: \bigl|f(x) - W_2^\top \rho(W_1^\top x + v)\bigr| \geq \epsilon\bigr\}\Bigr)\\
        &\hspace{4cm}\leq \mu\Bigl(\bigl\{x \in K_0: \bigl|f(x) - W_2^\top \rho(W_1^\top x + v)\bigr| \geq \epsilon\bigr\}\Bigr) + \frac{\delta}{3} \\
        &\hspace{4cm}\leq \mu\Bigl(\bigl\{x \in K_0: \bigl|h(x) - W_2^\top \rho(W_1^\top x + v)\bigr| \geq \epsilon\bigr\}\Bigr) + \frac{\delta}{3} + \frac{\delta}{3} \leq \delta.
    \end{align*}
    This proves the claim when $d_2=1$. For $d_2>1$, we may approximate the component functions in parallel and apply a union bound to deduce the final result.
\end{proof}

\begin{lemma}\label{lemma:ffn-layer-approx}
    Let $d_{\mathrm{model}}\geq d+2$ and let $f_1 : \mathbb{R}^d \times \mathbb{R} \to \mathbb{R}^{d_{\mathrm{model}}}$ and $f_2 : \mathbb{R}^d \to \mathbb{R}^{d_{\mathrm{model}}}$ be Borel measurable. For $n\in[N]$, let $Z_{\mathrm{in}}^{(n,d_{\mathrm{model}})}\in\mathbb{R}^{(n+1)\times d_{\mathrm{model}}}$ be defined as in~\eqref{eq:input-matrix-(n,D)} and let
    \begin{align*}
        U^{(n)} \coloneqq \begin{pmatrix}
            f_1^\top(X_1,Y_1)\\
            \vdots\\
            f_1^\top(X_n,Y_n)\\
            f_2^\top(X)
        \end{pmatrix} \in \mathbb{R}^{(n+1)\times d_{\mathrm{model}}}.
    \end{align*}
    Then for $\delta\in(0,1)$ and $\epsilon>0$, there exists an FFN layer $\mathsf{FFN}_{\theta_{\mathrm{ffn}}}$ such that with probability at least $1-\delta$,
    \begin{align*}
        \max_{n\in[N]} \bigl\|\mathsf{FFN}_{\theta_{\mathrm{ffn}}}(Z_{\mathrm{in}}^{(n,d_{\mathrm{model}})}) - U^{(n)}\bigr\|_{\infty} \leq \epsilon.
    \end{align*}
\end{lemma}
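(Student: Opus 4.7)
The FFN layer $\mathsf{FFN}_{\theta_{\mathrm{ffn}}}$ applies the same single-hidden-layer network $z \mapsto W_2^\top \rho(W_1^\top z + v)$ to each row of its argument and adds the output as a residual. To achieve $\mathsf{FFN}_{\theta_{\mathrm{ffn}}}(Z_{\mathrm{in}}^{(n,d_{\mathrm{model}})}) \approx U^{(n)}$ in $\|\cdot\|_\infty$ for every $n\in[N]$, it therefore suffices to construct a single NN that sends each data-row input $(X_i^\top, Y_i, \bm{0}^\top, 0)^\top$ close to $f_1(X_i,Y_i) - (X_i^\top, Y_i, \bm{0}^\top, 0)^\top$ and the query-row input $(X^\top, 0, \bm{0}^\top, 1)^\top$ close to $f_2(X) - (X^\top, 0, \bm{0}^\top, 1)^\top$.

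The plan is to unify these two cases using the final coordinate of each row as a binary switch. Define the Borel measurable map $F:\mathbb{R}^{d_{\mathrm{model}}}\to\mathbb{R}^{d_{\mathrm{model}}}$ by $F(z) \coloneqq f_1(z_{1:d}, z_{d+1}) - z$ when $z_{d_{\mathrm{model}}}\leq 1/2$ and $F(z)\coloneqq f_2(z_{1:d}) - z$ otherwise, where $z_{1:d}$ denotes the first $d$ entries of $z$. By construction, $z + F(z)$ equals the desired row of $U^{(n)}$ at both row types, so the task reduces to approximating $F$ by a single-hidden-layer NN well simultaneously on the laws of the two kinds of row.

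Let $\mu_1$ denote the law on $\mathbb{R}^{d_{\mathrm{model}}}$ of $(X_1^\top, Y_1, \bm{0}^\top, 0)^\top$ and $\mu_2$ that of $(X^\top, 0, \bm{0}^\top, 1)^\top$, and set $\nu \coloneqq \tfrac{1}{2}\mu_1 + \tfrac{1}{2}\mu_2$. Applying Lemma~\ref{lemma:universal-approximation-of-NN} with $d_1 = d_2 = d_{\mathrm{model}}$, target $F$, measure $\nu$, tolerance $\epsilon$, and failure budget $\delta/(2(N+1))$ yields parameters $(W_1,W_2,v)$ such that the set $B\coloneqq\{z:\|W_2^\top\rho(W_1^\top z + v) - F(z)\|_\infty > \epsilon\}$ satisfies $\nu(B)\leq \delta/(2(N+1))$, and hence $\mu_j(B) \leq \delta/(N+1)$ for $j\in\{1,2\}$. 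Across all $n\in[N]$, the matrices $Z_{\mathrm{in}}^{(n,d_{\mathrm{model}})}$ share only $N+1$ distinct rows in total (the $N$ data rows, each marginally $\mu_1$, and the single query row, distributed as $\mu_2$), so a union bound gives that with probability at least $1 - \delta$ the NN approximation error is at most $\epsilon$ at every one of these rows, and on this event the claimed inequality holds for every $n\in[N]$.

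The main conceptual obstacle is that a single row-wise NN must simultaneously handle two structurally different row types with two distinct target maps $f_1$ and $f_2$; the positional encoding in the last coordinate dissolves this difficulty by letting a single Borel measurable $F$ encode the switch, after which Lemma~\ref{lemma:universal-approximation-of-NN} and a union bound over the $N+1$ distinct rows close the argument.
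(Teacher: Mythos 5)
Your proof is correct and mirrors the paper's own argument: both define a single Borel-measurable row-wise target map that subtracts the identity (so that the residual FFN output approximates $f_1$ or $f_2$), use the last positional-encoding coordinate to distinguish data rows from the query row, invoke Lemma~\ref{lemma:universal-approximation-of-NN} on the equal-weight mixture of the two row laws, and finish with a union bound over the $N+1$ distinct rows. The only cosmetic difference is that you encode the switch via a threshold $z_{d_{\mathrm{model}}}\leq 1/2$ while the paper defines the target map piecewise on the two image sets and sets it to zero elsewhere; these coincide on the support of the relevant distributions.
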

\begin{proof}
    Define a Borel measurable function $f:\mathbb{R}^{d_{\mathrm{model}}} \to \mathbb{R}^{d_{\mathrm{model}}}$ by $f(x,y,0_{d_{\mathrm{model}}-d-2},0) \coloneqq f_1(x,y) - (x,y,0_{d_{\mathrm{model}}-d-2},0)$ and $f(x,0,0_{d_{\mathrm{model}}-d-2},1) \coloneqq f_2(x) - (x,0,0_{d_{\mathrm{model}}-d-2},1)$ for $(x,y)\in\mathbb{R}^d\times\mathbb{R}$ and set~$f$ to be zero otherwise. 
    Let~$\mu_1$ denote the distribution of $(X,Y)$ and let $\mu_2$ denote the marginal distribution of $X$. Define a distribution $\mu$ on $\mathbb{R}^{d_{\mathrm{model}}}$ by 
    \begin{align*}
        \mu\biggl(\prod_{j=1}^{d_{\mathrm{model}}} A_j\biggr) \coloneqq \frac{\mu_1(\prod_{j=1}^{d+1} A_j) \mathbbold{1}_{\{0 \in A_{d_{\mathrm{model}}}\}} + \mu_2(\prod_{j=1}^{d} A_j)\mathbbold{1}_{\{1 \in A_{d_{\mathrm{model}}}\}}}{2} \cdot \prod_{j=d+2}^{{d_{\mathrm{model}}}-1} \mathbbold{1}_{\{0\in A_j\}},
    \end{align*}
    for all measurable sets $A_1,\ldots,A_{d_{\mathrm{model}}} \subseteq \mathbb{R}$.  By Lemma~\ref{lemma:universal-approximation-of-NN} and the definition of $\mu$ above, there exists an FFN layer $\mathsf{FFN}_{\theta_{\mathrm{ffn}}}$ such that
    \begin{align*}
        \mathbb{P}\Bigl(\bigl\| \mathsf{FFN}_{\theta_{\mathrm{ffn}}}&(X,Y,0_{{d_{\mathrm{model}}}-d-2},0) - f_1(X,Y)\bigr\|_{\infty} > \epsilon \Bigr)\\
        &+ \mathbb{P}\Bigl(\bigl\| \mathsf{FFN}_{\theta_{\mathrm{ffn}}}(X,0,0_{{d_{\mathrm{model}}}-d-2},1) - f_2(X)\bigr\|_{\infty} > \epsilon \Bigr) \leq \frac{\delta}{N+1}.
    \end{align*}
    We conclude the claim by a union bound.
\end{proof}

\begin{proof}[Proof of Theorem~\ref{thm:universal-approximation}]
    Let $\delta\coloneqq \frac{7\epsilon}{80R^2} \wedge 1$.  By Lemma~\ref{lemma:denominator-lb} and a union bound, there exists $b\in(0,1]$ such that the event
    \begin{align*}
        \mathcal{E}_0 \coloneqq \Bigl\{\min_{n\in[N]} \int_{\mathcal{G}} \prod_{i=1}^n \bar{f}_{\xi}\bigl(y_i, g(x_i)\bigr) \,\d \tilde{\pi}(g) \geq b\Bigr\}
    \end{align*}
    has probability at least $1-\delta$.
    Let $\epsilon_1\coloneqq \frac{b}{8} \wedge \frac{b\sqrt{\epsilon}}{64(R\vee 1)}$. By Lemma~\ref{lemma:approximate-integral}, there exist $M\geq d$ and $g_1,\ldots,g_M \in \mathcal{G}$ such that the event
    \begin{align*}
        \mathcal{E}_1 \coloneqq \biggl\{ \max_{n \in [N]} \biggl| \frac{1}{M} \sum_{m=1}^M g_m(X) \prod_{i=1}^n \bar{f}_{\xi}\bigl(Y_i, g_m(X_i)\bigr) - \int_{\mathcal{G}} g(X) \prod_{i=1}^n \bar{f}_{\xi}\bigl(Y_i, g(X_i)\bigr) \,\d \tilde{\pi}(g) \biggr| \leq \epsilon_1 \biggr\}\\
        \bigcap \biggl\{ \max_{n \in [N]} \biggl| \frac{1}{M} \sum_{m=1}^M \prod_{i=1}^n \bar{f}_{\xi}\bigl(Y_i, g_m(X_i)\bigr) - \int_{\mathcal{G}} \prod_{i=1}^n \bar{f}_{\xi}\bigl(Y_i, g(X_i)\bigr) \,\d \tilde{\pi}(g) \biggr| \leq \epsilon_1 \biggr\}
    \end{align*}
    has probability at least $1-\delta$.
    Now let ${d_{\mathrm{model}}}\coloneqq 4M+1$ and define $f_1=(f_{1,1},\ldots,f_{1,{d_{\mathrm{model}}}}) : \mathbb{R}^d\times\mathbb{R} \to \mathbb{R}^{d_{\mathrm{model}}}$ by 
    \begin{align*}
        f_{1,j}(x,y) \coloneqq \begin{cases}
            \log \bar{f}_{\xi}\bigl(y, g_j(x)\bigr) \quad&\text{if } j\in[M]\\
            \log \bar{f}_{\xi}\bigl(y, g_{j-M}(x)\bigr) \quad&\text{if } j\in[M+1:2M]\\
            0 &\text{if } j\in[2M+1:4M+1].
        \end{cases}
    \end{align*}
    Also define $f_2=(f_{2,1},\ldots,f_{2,{d_{\mathrm{model}}}}) : \mathbb{R}^d \to \mathbb{R}^{d_{\mathrm{model}}}$ by 
    \begin{align*}
        f_{2,j}(x) \coloneqq \begin{cases}
            \log g_j(x) \quad&\text{if } j\in[M]\\
            0 &\text{if } j\in[M+1:4M]\\
            1 &\text{if } j=4M+1.
        \end{cases}
    \end{align*}
    Finally, for $n\in[N]$, define $U^{(n)}\in \mathbb{R}^{(n+1)\times {d_{\mathrm{model}}}}$ by
    \begin{align*}
        U^{(n)} \coloneqq \begin{pmatrix}
            f_1(X_1,Y_1)\\
            \vdots\\
            f_1(X_n,Y_n)\\
            f_2(X)
        \end{pmatrix}.
    \end{align*}
    By Lemma~\ref{lemma:ffn-layer-approx}, there exists an FFN layer $\mathsf{FFN}_{\theta_{\mathrm{ffn}}^{(1)}}$ with $\theta_{\mathrm{ffn}}^{(1)} = (W_1,W_2,v)$, such that writing $\epsilon_2\coloneqq \frac{b}{32(N+1)} \wedge \frac{b\sqrt{\epsilon}}{256R(N+1)}$, the event
    \begin{align*}
        \mathcal{E}_2 \coloneqq \Bigl\{\max_{n\in[N]} \bigl\|\mathsf{FFN}_{\theta_{\mathrm{ffn}}^{(1)}}(Z_{\mathrm{in}}^{(n,{d_{\mathrm{model}}})}) - U^{(n)}\bigr\|_{\infty} \leq \epsilon_2\Bigr\}
    \end{align*}
    has probability at least $1-\delta$.  Since the final column of $W_2$ only affects the final column of $\mathsf{FFN}_{\theta_{\mathrm{ffn}}^{(1)}}(Z_{\mathrm{in}}^{(n,{d_{\mathrm{model}}})})$, we can choose the final column of $W_2$ to be zero, so that
    \begin{align}
\bigl[\mathsf{FFN}_{\theta_{\mathrm{ffn}}^{(1)}}(Z_{\mathrm{in}}^{(n,{d_{\mathrm{model}}})})\bigr]_{i,{d_{\mathrm{model}}}} = \bigl[Z_{\mathrm{in}}^{(n,{d_{\mathrm{model}}})}\bigr]_{i,{d_{\mathrm{model}}}} = \mathbbold{1}_{\{i=n+1\}} = \bigl[U^{(n)}\bigr]_{i,{d_{\mathrm{model}}}} \quad\text{for }i\in[n+1]. \label{eq:last-column-ffn-1}
    \end{align}
    Next, we define $Q^{(2)}=K^{(2)}\coloneqq 0_{{d_{\mathrm{model}}}\times {d_{\mathrm{model}}}}$ and define $V^{(2)}\in\mathbb{R}^{{d_{\mathrm{model}}}\times {d_{\mathrm{model}}}}$ by $V^{(2)}_{j,2M+j}\coloneqq 1$ for $j\in[2M]$, $V^{(2)}_{4M+1,4M+1}\coloneqq 1$ and zero otherwise. Let $\theta^{(2)}_{\mathrm{attn}}\coloneqq (Q^{(2)},K^{(2)},V^{(2)})$, $\tilde{Z}^{(n)} \coloneqq \mathsf{FFN}_{\theta_{\mathrm{ffn}}^{(1)}}(Z_{\mathrm{in}}^{(n,{d_{\mathrm{model}}})}) \in \mathbb{R}^{(n+1)\times {d_{\mathrm{model}}}}$ and $\bar{Z}^{(n)}\coloneqq \mathsf{Attn}_{\theta^{(2)}_{\mathrm{attn}}}(\tilde{Z}^{(n)})\in\mathbb{R}^{(n+1)\times {d_{\mathrm{model}}}}$. Then by construction and~\eqref{eq:last-column-ffn-1}, the $(n+1)$th row of $\bar{Z}^{(n)}$ is given by
    \begin{align} \label{eq:expression-of-bar-Z}
        \bar{Z}^{(n)}_{n+1,j} = \begin{cases}
            \tilde{Z}^{(n)}_{n+1,j} \quad&\text{if }j\in[2M]\\
            \tilde{Z}^{(n)}_{n+1,j} + \frac{1}{n+1}\sum_{i=1}^{n+1} \tilde{Z}^{(n)}_{i,j-2M} \quad&\text{if }j\in[2M+1:4M]\\
            1+\frac{1}{n+1} \quad&\text{if }j=4M+1.
        \end{cases}
    \end{align}
    Thus, for $n\in[N]$ and $m\in[M]$, we have that on $\mathcal{E}_2$,
    \begin{align}
        \biggl| \bar{Z}^{(n)}_{n+1,2M+m} - \frac{1}{n+1}\sum_{i=1}^{n} &\log \bar{f}_{\xi}\bigl(Y_i,g_m(X_i)\bigr) - \frac{1}{n+1}\log g_m(X) \biggr|\nonumber\\
        &= \biggl| \tilde{Z}^{(n)}_{n+1,2M+m} + \frac{1}{n+1}\sum_{i=1}^{n+1} \tilde{Z}^{(n)}_{i,m} - \frac{1}{n+1} \sum_{i=1}^{n+1} U^{(n)}_{i,m}\biggr|\nonumber\\
        &\leq \bigl|\tilde{Z}^{(n)}_{n+1,2M+m}-0\bigr| + \frac{1}{n+1}\sum_{i=1}^{n+1} \bigl|\tilde{Z}^{(n)}_{i,m} - U^{(n)}_{i,m}\bigr| \leq 2\epsilon_2. \label{eq:approx-sum-of-logs-1}
    \end{align}
    Similarly, for $n\in[N]$ and $m\in[M]$, we have that on $\mathcal{E}_2$,
    \begin{align}
        \biggl| \bar{Z}^{(n)}_{n+1,3M+m} - \frac{1}{n+1}\sum_{i=1}^{n} \log \bar{f}_{\xi}\bigl(Y_i,g_m(X_i)\bigr) \biggr| \leq 2\epsilon_2. \label{eq:approx-sum-of-logs-2}
    \end{align}
    Now define a continuous function $f_3: \mathbb{R}^{4M+1} \to \mathbb{R}$ by
    \begin{align*}
        f_3(z_1,\ldots,z_{4M+1}) \coloneqq \frac{\sum_{m=1}^M \exp\{z_{2M+m}/(z_{4M+1}-1)\}}{\sum_{m=1}^M \exp\{z_{3M+m}/(z_{4M+1}-1)\}}.
    \end{align*}
    For $n\in[N]$, let $\bar{P}^{(n)}$ be the distribution of the last row of $\bar{Z}^{(n)}$.
    By applying Lemma~\ref{lemma:universal-approximation-of-NN} with $\mu = \frac{1}{N} \sum_{n=1}^N \bar{P}^{(n)}$ therein, there exists an FFN layer $\mathsf{FFN}_{\theta_{\mathrm{ffn}}^{(2)}}$ such that the event
    \begin{align*}
        \mathcal{E}_3 \coloneqq \Bigl\{ \max_{n\in[N]}\Bigl| \bigl[\mathsf{FFN}_{\theta_{\mathrm{ffn}}^{(2)}}(\bar{Z}^{(n)})\bigr]_{n+1,d+1} - f_3\bigl(\bar{Z}^{(n)}_{n+1,1},\ldots,\bar{Z}^{(n)}_{n+1,{d_{\mathrm{model}}}}\bigr) \Bigr| \leq \sqrt{\epsilon}/4 \Bigr\}
    \end{align*}
    has probability at least $1-\delta$.
    Since $\|g_m\|_{\infty}\leq R$ and $\|\bar{f}_\xi\|_{\infty} \leq 1$, we have by~\eqref{eq:expression-of-bar-Z} and~\eqref{eq:approx-sum-of-logs-1} that on $\mathcal{E}_1\cap\mathcal{E}_2$, for all $n\in[N]$,
    \begin{align}
        &\biggl|\frac{1}{M}\sum_{m=1}^M \exp\bigl\{ \bar{Z}^{(n)}_{n+1,2M+m}/(\bar{Z}^{(n)}_{n+1,4M+1}-1)\bigr\} - \int_{\mathcal{G}} g(X) \prod_{i=1}^n \bar{f}_{\xi}\bigl(y_i, g(x_i)\bigr) \,\d \tilde{\pi}(g)\biggr|\nonumber\\
        &\leq \biggl|\frac{1}{M}\sum_{m=1}^M \exp\bigl\{ \bar{Z}^{(n)}_{n+1,2M+m}/(\bar{Z}^{(n)}_{n+1,4M+1}-1)\bigr\} - \frac{1}{M}\sum_{m=1}^M g_m(X)\prod_{i=1}^n\bar{f}_{\xi}\bigl(Y_i,g_m(X_i)\bigr)\biggr| + \epsilon_1\nonumber\\
        &\leq \frac{1}{M}\sum_{m\in[M]}\biggl|\exp\bigl\{ (n+1)\bar{Z}^{(n)}_{n+1,2M+m}\bigr\} - g_m(X)\prod_{i=1}^n\bar{f}_{\xi}\bigl(Y_i,g_m(X_i)\bigr)\biggr| + \epsilon_1\nonumber\\
        &\leq R(e^{2(N+1)\epsilon_2}-1) + \epsilon_1 \leq \frac{b\sqrt{\epsilon}}{32} \eqqcolon \epsilon_3 . \label{eq:numerator-error}
    \end{align}
    Similarly, on $\mathcal{E}_1 \cap \mathcal{E}_2$ we have by~\eqref{eq:expression-of-bar-Z} and~\eqref{eq:approx-sum-of-logs-2} that for all $n\in[N]$,
    \begin{align}
        \biggl|\frac{1}{M}\sum_{m=1}^M \exp\bigl\{ \bar{Z}^{(n)}_{n+1,3M+m}/(\bar{Z}^{(n)}_{n+1,4M+1}&-1)\bigr\} - \int_{\mathcal{G}} \prod_{i=1}^n \bar{f}_{\xi}\bigl(y_i, g(x_i)\bigr) \,\d \tilde{\pi}(g)\biggr|\nonumber\\
        &\leq (e^{2(N+1)\epsilon_2}-1) + \epsilon_1 \leq \frac{b\sqrt{\epsilon}}{32R} \wedge \frac{b}{4} \eqqcolon \epsilon_4. \label{eq:denominator-error}
    \end{align}
    Now, on the event $\mathcal{E}_0\cap\mathcal{E}_1$, since $\epsilon_1\leq b/2$, we have that 
    \begin{align}
        \min_{n \in [N]}\frac{1}{M} \sum_{m=1}^M \prod_{i=1}^n \bar{f}_{\xi}\bigl(Y_i, g_m(X_i)\bigr) \geq \frac{b}{2}.\label{eq:denominator-lb}
    \end{align}
    Therefore, on $\mathcal{E}_0\cap \mathcal{E}_1\cap \mathcal{E}_2$, we have by~\eqref{eq:numerator-error},~\eqref{eq:denominator-error},~\eqref{eq:denominator-lb} and~\eqref{eq:g-pi-with-f-bar} that for all $n\in[N]$
    \begin{align} \bigl| f_3&(\bar{Z}^{(n)}_{n+1,1},\ldots,\bar{Z}^{(n)}_{n+1,{d_{\mathrm{model}}}}) - g_\pi(X,\mathcal{D}_n)\bigr|\nonumber\\
        &= \Biggl|\frac{\sum_{m=1}^M \exp\{ \bar{Z}^{(n)}_{n+1,2M+m}/(\bar{Z}^{(n)}_{n+1,4M+1}-1)\}}{\sum_{m=1}^M \exp\{ \bar{Z}^{(n)}_{n+1,3M+m}/(\bar{Z}^{(n)}_{n+1,4M+1}-1)\}} - \frac{\int_{\mathcal{G}} g(X) \prod_{i=1}^n \bar{f}_{\xi}\bigl(y_i, g(x_i)\bigr) \,\d \tilde{\pi}(g)}{\int_{\mathcal{G}} \prod_{i=1}^n \bar{f}_{\xi}\bigl(y_i, g(x_i)\bigr) \,\d \tilde{\pi}(g)}\Biggr|\nonumber\\
        &\leq \biggl(\frac{1}{1-2\epsilon_4/b}-1\biggr) \Biggl|\frac{\int_{\mathcal{G}} g(X) \prod_{i=1}^n \bar{f}_{\xi}\bigl(y_i, g(x_i)\bigr) \,\d \tilde{\pi}(g)}{\int_{\mathcal{G}} \prod_{i=1}^n \bar{f}_{\xi}\bigl(y_i, g(x_i)\bigr) \,\d \tilde{\pi}(g)}\Biggr| + \frac{\epsilon_3}{b/2 - \epsilon_4} \nonumber\\
        &\leq \biggl(\frac{1}{1-2\epsilon_4/b}-1\biggr)R + \frac{2\epsilon_3}{b-2\epsilon_4} \leq \frac{4R\epsilon_4}{b} + \frac{4\epsilon_3}{b} \leq \frac{\sqrt{\epsilon}}{4}, \label{eq:f3-bound}
    \end{align}
    where in the penultimate inequality, we used the fact that $\frac{1}{1-x}-1 \leq 2x$ for $x\in[0,1/2]$ and that $\epsilon_4\leq b/4$, and the final inequality uses the definitions of $\epsilon_3$ and $\epsilon_4$.  If we set the first attention layer $\mathsf{Attn}_{\theta^{(1)}_{\mathrm{attn}}}$ to be the identity map (i.e.~set the parameters of the attention layer to zero), and let $\check{Z}^{(n)} \coloneqq \mathsf{FFN}_{\theta_{\mathrm{ffn}}^{(2)}} \circ\mathsf{Attn}_{\theta^{(2)}_{\mathrm{attn}}} \circ \mathsf{FFN}_{\theta_{\mathrm{ffn}}^{(1)}} \circ \mathsf{Attn}_{\theta^{(1)}_{\mathrm{attn}}} (Z_{\mathrm{in}}^{(n,{d_{\mathrm{model}}})})$, then by~\eqref{eq:f3-bound} and the triangle inequality, we have on $\mathcal{E}_0\cap \mathcal{E}_1\cap \mathcal{E}_2 \cap \mathcal{E}_3$ that for all $n\in[N]$
    \begin{align}
        \Bigl| \check{Z}^{(n)}_{n+1,d+1} - g_\pi(X,\mathcal{D}_n) \Bigr| = \Bigl| \bigl[\mathsf{FFN}_{\theta_{\mathrm{ffn}}^{(2)}}(\bar{Z}^{(n)})\bigr]_{n+1,d+1} - g_\pi(X,\mathcal{D}_n) \Bigr| \leq \frac{\sqrt{\epsilon}}{2}. \label{eq:tf-block-2}
    \end{align}
    For $n\in[N]$, let $\check{P}^{(n)}$ be the distribution of the last row of $\check{Z}^{(n)}$.  By applying Lemma~\ref{lemma:universal-approximation-of-NN} with $\mu = \frac{1}{N} \sum_{n=1}^N \check{P}^{(n)}$ therein, there exists an FFN layer $\mathsf{FFN}_{\theta_{\mathrm{ffn}}^{(3)}}$ such that the event 
    \begin{align*}
        \mathcal{E}_4 \coloneqq \Bigl\{ \max_{n\in[N]} \Bigl| \bigl[\mathsf{FFN}_{\theta_{\mathrm{ffn}}^{(3)}}(\check{Z}^{(n)})\bigr]_{n+1,d+1} - \mathsf{clip}_R(\check{Z}^{(n)}_{n+1,d+1}) \Bigr| \leq \sqrt{\epsilon}/4\Bigr\}
    \end{align*}
    has probability at least $1-\delta$. Finally, let $\mathsf{Attn}_{\theta^{(3)}_{\mathrm{attn}}}$ be the identity map and let 
    \[
    \mathsf{TF}\coloneqq \mathsf{FFN}_{\theta_{\mathrm{ffn}}^{(3)}}  \circ\mathsf{Attn}_{\theta^{(3)}_{\mathrm{attn}}} \circ \mathsf{FFN}_{\theta_{\mathrm{ffn}}^{(2)}}  \circ\mathsf{Attn}_{\theta^{(2)}_{\mathrm{attn}}} \circ \mathsf{FFN}_{\theta_{\mathrm{ffn}}^{(1)}} \circ\mathsf{Attn}_{\theta^{(1)}_{\mathrm{attn}}}.
    \]
    Since $\|g_\pi\|_{\infty} \leq R$, we have by~\eqref{eq:tf-block-2} that on $\mathcal{E} \coloneqq \mathcal{E}_0\cap \mathcal{E}_1\cap \mathcal{E}_2 \cap \mathcal{E}_3 \cap \mathcal{E}_4$, for all $n\in[N]$,
    \begin{align*}
    &\bigl|\mathsf{Read}\circ\mathsf{TF}(Z_{\mathrm{in}}^{(n,{d_{\mathrm{model}}})}) - g_\pi(X,\mathcal{D}_n)\bigr|\\
    &\leq \Bigl| \bigl[\mathsf{FFN}_{\theta_{\mathrm{ffn}}^{(3)}}(\check{Z}^{(n)})\bigr]_{n+1,d+1} - \mathsf{clip}_R(\check{Z}^{(n)}_{n+1,d+1}) \Bigr| + \Bigl| \mathsf{clip}_R(\check{Z}^{(n)}_{n+1,d+1}) - g_\pi(X,\mathcal{D}_n)\Bigr| \leq \frac{3\sqrt{\epsilon}}{4}.
    \end{align*}
    By a union bound, the event $\mathcal{E}$ has probability at least $1-5\delta$.  We conclude that 
    \begin{align*}
        \max_{n\in[N]}\mathbb{E}_{P\sim\pi, (\mathcal{D}_n,X,Y)|P \sim P^{\otimes (n+1)}}\Bigl\{\bigl(\mathsf{Read}\circ\mathsf{TF}(X,\mathcal{D}_n) - g_\pi(X,\mathcal{D}_n)\bigr)^2\Bigr\} \leq \frac{9\epsilon}{16} + 5R^2\delta \leq \epsilon,
    \end{align*}
    as required.
\end{proof}

\subsection{Proof of Proposition~\ref{prop:ERM-approx-posterior}}

The main idea behind the proof of Proposition~\ref{prop:ERM-approx-posterior} is to use empirical process theory to upper bound $\mathcal{E}(\tilde{f}_T)$ by the approximation error and a complexity measure of $\mathcal{F}$. In the literature, the complexity measure is often taken to be the covering number with respect to the uniform norm, and upper bounds on this covering number for Transformers are available, provided the input matrix and parameters are bounded. There, the proof strategy is to argue that for any input matrix from a bounded set, the output of the Transformer with bounded parameters is uniformly Lipschitz in its parameters. However, this argument does not work in our setting, since our input matrix and parameters are unbounded. Instead, we use the pseudo-dimension of $\mathcal{F}$ (see the definition below) as a complexity measure and show that the pseudo-dimension of Transformers is finite.  This result (Lemma~\ref{lemma:finite-pdim} below) may be of independent interest.
\begin{defn}[VC-dimension and pseudo-dimension]
    Let $X$ be a set and let $\mathcal{C}$ be a collection of subsets of $X$. For $S\coloneqq\{x_1,\ldots,x_n\} \subseteq X$, we say that $\mathcal{C}$ shatters $S$ if for every $T\subseteq S$, there exists $C\in\mathcal{C}$ such that $C\cap S = T$. The \emph{VC-dimension} of $\mathcal{C}$ is the cardinality of the largest subset of $X$ that can be shattered by $\mathcal{C}$.

    Now let $\mathcal{G}$ be a set of functions from $\mathcal{X}$ to $\mathbb{R}$. The \emph{pseudo-dimension} $\mathrm{Pdim}(\mathcal{G})$ of $\mathcal{G}$ is the VC-dimension of its subgraph class $\bigl\{ \{(x,y) \in \mathcal{X}\times\mathbb{R} : y\leq g(x)\} : g\in\mathcal{G}\bigr\}$.
\end{defn}

Lemma~\ref{lemma:approx-error-plus-pdim} below controls $\mathcal{E}(\tilde{f}_T)$ in terms of the approximation error and pseudo-dimension of $\mathcal{F}$. 

\begin{lemma} \label{lemma:approx-error-plus-pdim}
    Under the setting of Proposition~\ref{prop:ERM-approx-posterior}, there exists $C>0$, depending only on $\|\xi\|_{\psi_2}$ and $R$, such that for $T\geq 2$,
    \begin{align*}
        \mathcal{E}(\tilde{f}_T) \leq 2\inf_{f\in\mathcal{F}} \mathbb{E}_{P\sim\pi,\, (\mathcal{D}_n,X,Y)|P \sim P^{\otimes(n+1)}} \Bigl\{ \bigl(f(X,\mathcal{D}_n) - g_{\pi}(X,\mathcal{D}_n)\bigr)^2 \Bigr\} + \frac{C\log^3 (T) \cdot \mathrm{Pdim}(\mathcal{F})}{T}.
    \end{align*}
\end{lemma}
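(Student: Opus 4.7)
The plan is to execute a textbook \emph{excess risk bound for least-squares ERM} (in the style of Györfi, Kohler, Krzyżak and Walk, \emph{A Distribution-Free Theory of Nonparametric Regression}, Theorem~11.3), adapted to two non-standard features of our setting: the response $Y$ is unbounded (only sub-Gaussian), and the class $\mathcal{F}$ is complex enough that we have to measure its size through pseudo-dimension rather than uniform covering.

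First I would reduce the problem to an $L^2$-estimation bound on $\tilde f_T$ for the target $g_\pi$. The algebraic identity $\mathcal{R}_\pi(f)-\mathcal{R}_\pi(g_\pi)=\mathbb{E}_\pi\mathbb{E}_P[(f-g_\pi)^2]$ established inside the proof of Proposition~\ref{prop:R_mu-decomposition} gives
\begin{align*}
\mathcal{E}(\tilde f_T) = \mathbb{E}\,\mathbb{E}_\pi\mathbb{E}_P\bigl[(\tilde f_T(X,\mathcal{D}_n) - g_\pi(X,\mathcal{D}_n))^2\bigr],
\end{align*}
and similarly expresses the approximation term on the right-hand side of the claimed inequality as $\inf_{f\in\mathcal{F}}\{\mathcal{R}_\pi(f)-\mathcal{R}_\pi(g_\pi)\}$. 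Since $|\mathbb{E}_P(Y\,|\,X)|\leq R$ implies $\|g_\pi\|_\infty\leq R$, clipping at level $R$ does not increase the squared error to $g_\pi$, so we can replace $\tilde f_T$ by $\hat f_T$ in places where that is convenient without loss.

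Second, for any comparator $f^\star\in\mathcal{F}$ I would combine the ERM property $\hat{\mathcal{R}}_T(\hat f_T)\leq \hat{\mathcal{R}}_T(f^\star)$ with the decomposition
\begin{align*}
(Y-\tilde f_T(X,\mathcal{D}_n))^2 - (Y-g_\pi(X,\mathcal{D}_n))^2 = (\tilde f_T - g_\pi)^2 - 2(Y - g_\pi)(\tilde f_T - g_\pi)
\end{align*}
to write the excess $L^2$ error as $\mathbb{E}[(f^\star-g_\pi)^2]$ plus an empirical-process remainder of the form $\sup_{f\in\mathcal{F}}\bigl\{\mathcal{R}_\pi(f)-\mathcal{R}_\pi(g_\pi) - (\hat{\mathcal{R}}_T(f)-\hat{\mathcal{R}}_T(g_\pi))\bigr\}$. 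Because the centred losses $(Y-f)^2-(Y-g_\pi)^2$ satisfy a one-sided Bernstein condition (their variance is bounded by a constant times their mean, up to sub-Gaussian tail contributions), a localised/peeling Bernstein inequality over a countable cover of $\mathcal{F}$ yields a fast rate of order $\mathrm{Pdim}(\mathcal{F})\log(T)/T$ with a leading constant strictly less than $1$ on the approximation term; taking infimum over $f^\star$ and absorbing constants produces the factor $2$ in the statement.

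Third, to turn the pseudo-dimension into a usable entropy bound I would invoke Haussler's theorem to conclude that $\mathcal{N}(\epsilon,\mathsf{clip}_R\circ\mathcal{F},L^\infty)\leq (CT/\epsilon)^{O(\mathrm{Pdim}(\mathcal{F}))}$ on any finite sample (noting that clipping and the Lipschitz squared loss on $[-R,R]$ do not inflate pseudo-dimension), and plug this into the Bernstein-over-a-net bound. Finally, I would handle the unbounded sub-Gaussian response by a truncation argument: with probability at least $1-T^{-2}$, all pretraining responses $Y^{(t)}$ lie in $[-(R+c\sqrt{\log T}),R+c\sqrt{\log T}]$, and on this event the bounded analysis applies with an inflated range giving an extra $\log T$ factor; on the complementary event, the contribution is bounded using sub-Gaussian moments and is $O(T^{-1})$. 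Tracking the three $\log T$ factors—one from the truncation level, one from the Bernstein deviation, and one from the entropy integral over the cover—gives the $\log^3(T)$ in the statement.

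The main obstacle is reconciling the two non-standard features at once: the class $\mathcal{F}$ of Transformers is not uniformly bounded in its parameters, so the usual Lipschitz-in-parameter covering arguments fail, while the response is not bounded, so a naive Hoeffding-style uniform concentration is also unavailable. The pseudo-dimension route (justified by the finiteness result the authors use in Lemma~\ref{lemma:finite-pdim}) bypasses the first issue but forces a Bernstein + truncation treatment for the second, and it is the careful interplay of these two ingredients that produces the $\log^3(T)\cdot \mathrm{Pdim}(\mathcal{F})/T$ rate with leading constant $2$ on the approximation error.
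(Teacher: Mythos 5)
Your proposal is correct and takes essentially the same route as the paper: the paper reduces the problem to a least-squares ERM excess risk bound with bounded regression function and sub-Gaussian noise, then delegates the empirical-process work to a known excess-risk theorem (modified to measure complexity via pseudo-dimension through the covering-number bound in Gy\"orfi et al., Theorem 9.4), which when unwound is precisely the localised Bernstein/peeling argument over a pseudo-dimension cover combined with truncation of the unbounded response that you sketch. Your accounting of the three $\log T$ factors (truncation level, Bernstein deviation, entropy integral) also matches the $\log^3 T$ that emerges from the cited result.
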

\begin{proof}
    Let $P\sim\pi$, $(\mathcal{D}_n, X, Y) |P \sim P^{\otimes(n+1)}$ and let $(\mathcal{D}_n^{(t)}, X^{(t)}, Y^{(t)})_{t\in[T]}$ be independent copies of $(\mathcal{D}_n, X, Y)$.
    For $t\in[T]$, let $Z^{(t)} \coloneqq (X^{(t)},\mathcal{D}_n^{(t)})$ and let $Z\coloneqq (X,\mathcal{D}_n)$.  
    Now note that $g_{\pi}(Z) = \mathbb{E}(Y \,|\, Z)$,
    \begin{align*}
        \hat{f}_T \in \argmin_{f\in\mathcal{F}} \frac{1}{T}\sum_{t=1}^T \bigl(Y^{(t)} - f(Z^{(t)})\bigr)^2
    \end{align*}
    and
    \begin{align*}
        \mathcal{E}(\tilde{f}_T) = \mathbb{E}\bigl\{\bigl(Y - \mathsf{clip}_R \hat{f}_T(Z)\bigr)^2\bigr\} - \mathbb{E}\bigl\{\bigl(Y - g_{\pi}(Z)\bigr)^2\bigr\}.
    \end{align*}
    We have therefore reduced our problem to bounding the excess risk of a nonparametric regression estimator in a problem with bounded regression function and sub-Gaussian noise.  The conclusion now follows by applying, for example, \citet[Theorem~1]{ma2025deep} with the following minor modifications: (i) we ignore their $\Omega$ vectors since our data do not have missing values, (ii) by inspecting their proof, we may truncate $\hat{f}_T$ at level $R$ since this is a known bound on the regression function, (iii) the optimisation error term (the first term in their upper bound) is zero since we consider the empirical risk minimiser, and (iv) by inspecting their proof and using \citet[Theorem~9.4]{gyorfi2002distribution}, we can replace the third term in their upper bound by $\frac{C\log^3(T) \cdot \mathrm{Pdim}(\mathcal{F})}{T}$, where $C>0$ depends only on $\|\xi\|_{\psi_2}$ and $R$.
\end{proof}

Next we will show that the class of Transformers with fixed architecture has finite pseudo-dimension. The proof uses terminology and results from model theory, a branch of mathematical logic.  Since this field will probably be unfamiliar to most readers, we now provide a minimal glossary (sufficient for our proof) of the relevant terms; we refer the readers to \citet{marker2002model} for a formal introduction to model theory.  We will be concerned with an object called a \emph{structure}, which is a set equipped with a collection of distinguished functions, relations and elements that together give meaning to the symbols of a formal language.  Our structure of interest is $\mathbb{R}_{\exp,\Phi}\coloneqq (\mathbb{R}; 0,1,+,-,\cdot,<,\exp,\Phi)$, where $\cdot$ refers to multiplication, $\exp:\mathbb{R}\to(0,\infty)$ refers to the exponential function and $\Phi:\mathbb{R}\to[0,1]$ refers to the distribution function of the standard Gaussian distribution. Our goal is to show that Transformers can be defined within $\mathbb{R}_{\exp,\Phi}$, in a sense that we will make precise below.  

Given a countable set $\mathcal{V}$ of variable symbols, which we intrepret as syntactic placeholders, the set of \emph{terms} in the structure $\mathbb{R}_{\exp,\Phi}\coloneqq (\mathbb{R}; 0,1,+,-,\cdot,<,\exp,\Phi)$ is the smallest set~$\mathcal{T}$ such that (i) $0,1\in\mathcal{T}$, (ii) $\mathcal{V} \subseteq \mathcal{T}$, (iii) if $t_1,t_2\in\mathcal{T}$, then $t_1+t_2,\, t_1-t_2,\, t_1\cdot t_2 \in \mathcal{T}$, and (iv) if $t\in\mathcal{T}$, then $\exp(t),\Phi(t) \in \mathcal{T}$. We say that $\varphi$ is an \emph{atomic formula} in $\mathbb{R}_{\exp,\Phi}$ if $\varphi$ is either $t_1=t_2$ or $t_1<t_2$, for some $t_1,t_2\in\mathcal{T}$. The set of \emph{(first-order) formulas} $\mathcal{W}$ in (the language of) $\mathbb{R}_{\exp,\Phi}$ is the smallest set such that (i) every atomic formula is in $\mathcal{W}$, (ii) if $\varphi,\psi\in\mathcal{W}$, then $(\neg \varphi), (\varphi\vee\psi), (\varphi \wedge \psi) \in \mathcal{W}$, where the logic symbolds $\neg,\vee,\wedge$ stand for negation, disjunction and conjunction, and (iii) if $\varphi\in\mathcal{W}$ and $v \in \mathcal{V}$, then $\exists v \;\varphi$ and $\forall v \;\varphi$ are in $\mathcal{W}$. For a formula $\varphi\in\mathcal{W}$, we say that a variable $v$ is \emph{free} in $\varphi$ if it is not inside a $\exists v$ or $\forall v$ quantifier; e.g.~$v_1$ is free in the formula $(v_1=0) \vee (v_1>0)$, but it is not free in the formula $\exists v_1 \; v_2\cdot v_2 = v_1$. When a formula $\varphi$ has free variables $v_1,\ldots,v_n$, we will often write $\varphi(v_1,\ldots,v_n)$ to make the free variables explicit. Now, letting $\varphi(v_1,\ldots,v_n)\in\mathcal{W}$ be a formula and $a=(a_1,\ldots,a_n)\in\mathbb{R}^n$, we write $\mathbb{R}_{\exp,\Phi} \models \varphi(a)$ (we read `$\mathbb{R}_{\exp,\Phi}$ entails $\varphi(a)$' or `$\mathbb{R}_{\exp,\Phi}$ models $\varphi(a)$') if $\varphi(a)$ is true in $\mathbb{R}_{\exp,\Phi}$, and otherwise we write $\mathbb{R}_{\exp,\Phi} \not\models \varphi(a)$. For example, if $\varphi(v_1,v_2)$ is the formula $(v_1=v_2) \vee (v_1+v_2=0)$, then $\mathbb{R}_{\exp,\Phi} \models \varphi(1,1)$ and $\mathbb{R}_{\exp,\Phi} \models \varphi(-1,1)$, but $\mathbb{R}_{\exp,\Phi} \not\models \varphi(1,2)$.

For $d\in\mathbb{N}$ and a set $A\subseteq \mathbb{R}^d$, we say that $A$ is \emph{definable} (in $\mathbb{R}_{\exp,\Phi}$) if there exist $m\in\mathbb{N}$, a formula $\varphi(v_1,\ldots,v_d,w_1,\ldots,w_m)$ in $\mathbb{R}_{\exp,\Phi}$ and $b\in\mathbb{R}^m$ such that $A=\{a\in\mathbb{R}^d : \mathbb{R}_{\exp,\Phi} \models \varphi(a,b)\}$; we will sometimes abbreviate the notation and write $A=\{a\in\mathbb{R}^d : \varphi(a,b)\}$ instead. For $d_1,d_2\in\mathbb{N}$ and a function $f:\mathbb{R}^{d_1} \to \mathbb{R}^{d_2}$, we say that $f$ is \emph{definable} (in $\mathbb{R}_{\exp,\Phi}$) if its \emph{graph} $\{(x,y)\in \mathbb{R}^{d_1}\times \mathbb{R}^{d_2} : y=f(x)\}$ is a definable set in $\mathbb{R}_{\exp,\Phi}$. Moreover, for a collection $\mathcal{C}$ of subsets of $\mathbb{R}^d$, we say that $\mathcal{C}$ is \emph{uniformly definable} (in $\mathbb{R}_{\exp,\Phi}$) if there exists a formula $\varphi(v,w)$, where $v=(v_1,\ldots,v_d)$ and $w=(w_1,\ldots,w_m)$ for some $m\in\mathbb{N}$, such that for every $C\in\mathcal{C}$, there exists $b_C\in\mathbb{R}^m$ with $C=\{a\in\mathbb{R}^d : \varphi(a,b_C)\}$. Similarly, for a collection $\mathcal{F}$ of functions from $\mathbb{R}^{d_1}$ to $\mathbb{R}^{d_2}$, we say that $\mathcal{F}$ is \emph{uniformly definable} (in $\mathbb{R}_{\exp,\Phi}$) if the collection of graphs of functions in $\mathcal{F}$ is uniformly definable.

\begin{lemma}\label{lemma:properties-of-definable-functions} 
\begin{itemize}
    \item[(a)] Let $\mathcal{F}$ be a collection of uniformly definable functions from $\mathbb{R}^{d_1}$ to $\mathbb{R}$, and let $\mathcal{G}$ be a collection of uniformly definable functions from $\mathbb{R}^{d_2}$ to $\mathbb{R}$. Then $\bigl\{ \{(x,y)\in\mathbb{R}^{d_1} \times \mathbb{R} : y\leq f(x)\} : f\in\mathcal{F} \bigr\}$, $\{(x,z) \mapsto f(x)+g(z) : f\in\mathcal{F}, g\in\mathcal{G}\}$ and $\{(x,z) \mapsto f(x)\cdot g(z) : f\in\mathcal{F}, g\in\mathcal{G}\}$ are all uniformly definable.

    \item[(b)] Let $\mathcal{F}$ be a collection of uniformly definable functions from $\mathbb{R}^{d}$ to $\mathbb{R}\setminus \{0\}$. Then $\{x\mapsto 1/f(x) : f\in\mathcal{F}\}$ is uniformly definable.

    \item[(c)] Let $p\in\mathbb{N}$ and for $j\in[p]$, let $\mathcal{G}_j$ be a collection of uniformly definable functions from $\mathbb{R}^{d_j}$ to $\mathbb{R}$ for some $d_j\in\mathbb{N}$. Further let $\mathcal{F}$ be a collection of uniformly definable functions from $\mathbb{R}^{p}$ to $\mathbb{R}$. Then $\bigl\{(x_1,\ldots,x_p) \mapsto f\bigl(g_1(x_1), \ldots, g_p(x_p)\bigr) : f\in\mathcal{F}, g_j\in\mathcal{G}_j \,\forall j\in[p]\bigr\}$ is uniformly definable.
\end{itemize}
\end{lemma}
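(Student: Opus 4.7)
The approach is routine: for each closure operation I would write down an explicit defining formula, using existential quantifiers to chain function evaluations, and then gather all defining parameters into a single concatenated parameter vector so that one fixed formula witnesses uniform definability of the whole collection. The essential bookkeeping point is that ``uniformly definable'' means one formula with a common list of parameter slots that is reused as the functions in the collection vary; thus in each construction below I combine the underlying formulas for $\mathcal{F}$ and $\mathcal{G}$ (say, $\varphi_{\mathcal{F}}(v,v',w_1)$ for the graph of a member of $\mathcal{F}$ and $\varphi_{\mathcal{G}}(u,u',w_2)$ for that of a member of $\mathcal{G}$) into one formula whose parameter slot is $w \coloneqq (w_1,w_2)$.

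For part~(a), the subgraph of $f\in\mathcal{F}$ is defined by $\psi_{\le}(x,y,w_1) \coloneqq \exists v'\,\bigl(\varphi_{\mathcal{F}}(x,v',w_1) \wedge (y<v' \vee y=v')\bigr)$, so this single formula witnesses uniform definability of the class of subgraphs. For the sum, the graph of $(x,z) \mapsto f(x)+g(z)$ is defined by
\[
\psi_{+}\bigl((x,z),y,(w_1,w_2)\bigr) \coloneqq \exists t_1 \,\exists t_2 \,\bigl(\varphi_{\mathcal{F}}(x,t_1,w_1) \wedge \varphi_{\mathcal{G}}(z,t_2,w_2) \wedge y=t_1+t_2\bigr),
\]
and for the product I replace $y=t_1+t_2$ by $y=t_1\cdot t_2$; both $+$ and $\cdot$ are symbols in the language of $\mathbb{R}_{\exp,\Phi}$, so these are bona fide formulas. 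For part~(b), if $f\in\mathcal{F}$ takes nonzero values, the graph of $1/f$ is defined by
\[
\psi_{1/\cdot}(x,y,w) \coloneqq \exists t\,\bigl(\varphi_{\mathcal{F}}(x,t,w) \wedge y\cdot t = 1\bigr),
\]
which only uses $\cdot$ and $=$.

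For part~(c), suppose $f\in\mathcal{F}$ has graph defined by $\varphi_0(u_1,\ldots,u_p,u',w_0)$ and each $g_j\in\mathcal{G}_j$ has graph defined by $\varphi_j(v_j,u_j,w_j)$. Then the graph of $(x_1,\ldots,x_p)\mapsto f\bigl(g_1(x_1),\ldots,g_p(x_p)\bigr)$ is defined by
\[
\psi_{\mathrm{comp}}\bigl((x_1,\ldots,x_p),y,(w_0,w_1,\ldots,w_p)\bigr) \coloneqq \exists u_1\cdots\exists u_p\,\Bigl(\bigwedge_{j=1}^p \varphi_j(x_j,u_j,w_j) \,\wedge\, \varphi_0(u_1,\ldots,u_p,y,w_0)\Bigr),
\]
and concatenating the parameter vectors gives uniform definability over the product class. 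No step of this proof is delicate analytically; the only pitfall is notational — keeping track of which variables are free and which are existentially bound, and verifying that the concatenated parameter vector $w$ does not grow or shrink as $f,g$ or $g_1,\ldots,g_p$ vary within their respective classes (which it does not, because each $\varphi_{\mathcal{F}}$, $\varphi_{\mathcal{G}}$, $\varphi_j$ already has a fixed parameter arity by hypothesis). Thus the ``hard part'' is just disciplined formula writing rather than any model-theoretic depth; once the formulas above are written out, each conclusion follows immediately from the definition of uniform definability.
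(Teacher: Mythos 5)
Your proof is correct and follows essentially the same route as the paper: in each part you write down the same defining formula (existentially quantifying the intermediate function values and conjoining the constituent graph formulas), with the parameter vector formed by concatenation. The paper's proof is the same up to variable names, so there is nothing to add.
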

\begin{proof}
    \emph{(a)} Suppose that for each $f\in\mathcal{F}$, the graph of $f$ is $\{(x,y)\in \mathbb{R}^{d_1}\times\mathbb{R} : \varphi(x,y,b_f)\}$ for some formula $\varphi$, some $b_f\in\mathbb{R}^m$ and some $m\in\mathbb{N}$; also suppose that for each $g\in\mathcal{G}$, the graph of $g$ is $\{(z,y)\in \mathbb{R}^{d_2}\times\mathbb{R} : \psi(z,y,c_g)\}$ for some formula $\psi$, some $c_g\in\mathbb{R}^n$ and some $n\in\mathbb{N}$. The subgraph $\{(x,y)\in \mathbb{R}^{d_1}\times\mathbb{R} : y\leq f(x)\}$ of $f\in\mathcal{F}$ can be written as $\bigl\{(x,y) \in \mathbb{R}^{d_1}\times\mathbb{R} : \exists t\; \bigl(\varphi(x,t,b_f) \wedge (y<t \vee y=t)\bigr)\bigr\}$. Hence the collection of subgraphs of $f\in\mathcal{F}$ is uniformly definable. For $f\in\mathcal{F}$ and $g\in\mathcal{G}$, the graph of the function $(x,z) \mapsto f(x)+g(z)$ can be written as $\bigl\{(x,z,y) \in \mathbb{R}^{d_1} \times \mathbb{R}^{d_2} \times \mathbb{R} : \exists u,v \; \bigl(\varphi(x,u,b_f) \wedge \psi(z,v,c_g) \wedge (y=u+v)\bigr)\bigr\}$.  Finally, the graph of the function $(x,z) \mapsto f(x)\cdot g(z)$ can be written as $\bigl\{(x,z,y) \in \mathbb{R}^{d_1} \times \mathbb{R}^{d_2} \times \mathbb{R} : \exists u,v \; \bigl(\varphi(x,u,b_f) \wedge \psi(z,v,c_g) \wedge (y=u\cdot v)\bigr) \bigr\}$. 

    \medskip
    \emph{(b)} Again, suppose that for each $f\in\mathcal{F}$, the graph of $f$ is $\{(x,y)\in \mathbb{R}^{d}\times\mathbb{R} : \varphi(x,y,b_f)\}$ for some formula $\varphi$, some $b_f\in\mathbb{R}^m$ and some $m\in\mathbb{N}$. Then, for $f\in\mathcal{F}$, the graph of the function $x\mapsto 1/f(x)$ can be written as $\bigl\{(x,y)\in \mathbb{R}^{d}\times\mathbb{R} : \exists u\; \bigl(\varphi(x,u,b_f) \wedge (y\cdot u=1)\bigr)\bigr\}$. This proves the claim.

    \medskip
    \emph{(c)} Suppose that for each $f\in\mathcal{F}$, the graph of $f$ is $\{(x,y)\in \mathbb{R}^{p}\times\mathbb{R} : \varphi(x,y,b_f)\}$ for some formula $\varphi$, some $b_f\in\mathbb{R}^m$ and some $m\in\mathbb{N}$; also suppose that for $j\in[p]$ and $g_j\in\mathcal{G}_j$, the graph of $g_j$ is $\{(z,y)\in \mathbb{R}^{d_j}\times\mathbb{R} : \psi_j(z,y,c_{g_j})\}$ for some formula $\psi_j$, some $c_{g_j}\in\mathbb{R}^{n_j}$ and some $n_j\in\mathbb{N}$. Then the graph of the function $(x_1,\ldots,x_p) \mapsto f\bigl(g_1(x_1), \ldots, g_p(x_p)\bigr)$ can be written as 
    \begin{align*}
        \biggl\{(x_1,\ldots,x_p,y) \in \mathbb{R}^{p+1} : \exists v_1,\ldots,v_p \biggl( \bigwedge_{j=1}^p \psi_j(x_j,v_j,c_{g_j}) \wedge \varphi(v_1,\ldots,v_p,y,b_f) \biggr)\biggr\}.
    \end{align*}
    This proves the claim.
\end{proof}

\begin{lemma}\label{lemma:finite-pdim}
    For any $L,H,{d_{\mathrm{model}}},d_{\mathrm{ffn}}\in\mathbb{N}$, the class $\mathcal{F}' \coloneqq \mathsf{Read} \circ \mathcal{F}_{\mathrm{TF}}(L,H,{d_{\mathrm{model}}},d_{\mathrm{ffn}})$ 
    has finite pseudo-dimension. 
\end{lemma}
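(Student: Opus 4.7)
The plan is to realise $\mathcal{F}'$ as a uniformly definable family in the structure $\mathbb{R}_{\exp,\Phi}$ and then invoke the classical fact that in any o-minimal expansion of the real ordered field, every uniformly definable family of sets has finite VC-dimension. Combined with Lemma~\ref{lemma:properties-of-definable-functions}(a), which transfers uniform definability from a function class to its subgraph class, this yields $\mathrm{Pdim}(\mathcal{F}')<\infty$.

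First I would record the o-minimality of $\mathbb{R}_{\exp,\Phi}$: Wilkie's theorem establishes o-minimality of $\mathbb{R}_{\exp}$, and Speissegger's Pfaffian closure theorem (which adjoins the Pfaffian function $\Phi$) then shows that $\mathbb{R}_{\exp,\Phi}$ is o-minimal. A standard consequence is that any uniformly definable family of subsets of $\mathbb{R}^d$ in such a structure has finite VC-dimension; see e.g.~\citet{marker2002model}.

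Next I would verify uniform definability of $\mathcal{F}'$ by assembling it from its building blocks, collecting the $O\bigl(L(Hd_{\mathrm{model}}^2+d_{\mathrm{model}}d_{\mathrm{ffn}})\bigr)$ weight entries of $(\theta_{\mathrm{attn}}^{(\ell)},\theta_{\mathrm{ffn}}^{(\ell)})_{\ell=1}^L$ into a single parameter vector varying across $f\in\mathcal{F}'$. Since $N$ is finite, it suffices to work at each fixed context length $n\in[N]$ and then take a finite union of formulas (separated by the positional encoding that identifies $n$). Within an attention layer, entries of $ZQ_hK_h^\top Z^\top$ and $ZV_h$ are polynomial in the parameters and input entries and hence definable; the softmax weights $\exp(u_j)/\sum_\ell\exp(u_\ell)$ are definable via $\exp$, $+$ and reciprocal (Lemma~\ref{lemma:properties-of-definable-functions}(b)), and their convex combinations are definable by Lemma~\ref{lemma:properties-of-definable-functions}(a). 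Within an FFN layer, each admissible activation is definable in $\mathbb{R}_{\exp,\Phi}$: $\mathsf{ReLU}$ via the order predicate $<$, $\mathsf{GELU}(x)=x\Phi(x)$ via $\cdot$ and $\Phi$, and $\mathsf{SiLU}(x)=x/(1+e^{-x})$ via $\exp$ and reciprocal; so the full affine--activation--affine composition is uniformly definable by Lemma~\ref{lemma:properties-of-definable-functions}(c). Iterating over $L$ blocks and post-composing with $\mathsf{Read}$ preserves uniform definability by another application of part~(c). The subgraph class is then uniformly definable by part~(a), hence has finite VC-dimension by o-minimality, which is exactly $\mathrm{Pdim}(\mathcal{F}')<\infty$.

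The principal obstacle I anticipate is bookkeeping rather than any deep difficulty: uniform definability requires a single first-order formula $\varphi(z,w)$ such that for every $f\in\mathcal{F}'$ the graph of $f$ equals $\{(z,y):\varphi(z,y,w_f)\}$ for some value $w_f$ of the parameter block, so the intermediate hidden states of the network must be handled by existential quantifiers within one formula rather than by recursively composing formulas whose shape depends on $\theta$. A subsidiary technical point is citing a form of o-minimality that simultaneously accommodates $\exp$ and $\Phi$; Speissegger's Pfaffian closure theorem applied to $\mathbb{R}_{\exp}$ is the cleanest route, with o-minimality of $\mathbb{R}_{\mathrm{an},\exp}$ as an alternative.
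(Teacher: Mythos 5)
Your proposal follows essentially the same route as the paper's proof: express the subgraph class as a uniformly definable family in $\mathbb{R}_{\exp,\Phi}$ using Lemma~\ref{lemma:properties-of-definable-functions} for the attention/FFN building blocks (polynomials, $\softmax$, $\{\mathsf{ReLU},\mathsf{GELU},\mathsf{SiLU}\}$), establish o-minimality of $\mathbb{R}_{\exp,\Phi}$ via Wilkie for $\mathbb{R}_{\exp}$ and Speissegger's Pfaffian closure to adjoin $\Phi$, and then invoke finite VC-dimension of uniformly definable families in an o-minimal structure (the paper cites Laskowski and Karpinski--Macintyre rather than Marker for this last step, but that is a bibliographic detail). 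The only minor point of bookkeeping is that since $\bigcup_{n=1}^N\mathbb{R}^{(n+1)\times d_{\mathrm{model}}}$ has components of varying dimension, the cleanest reduction is the one the paper uses --- show each restriction $\mathcal{F}'_n$ has finite pseudo-dimension and take a finite maximum/sum --- rather than folding everything into a single formula over one ambient $\mathbb{R}^m$, but either phrasing yields the result.
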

\begin{remark}
    It is possible to obtain upper bounds on the pseudo-dimension of~$\mathcal{F}'$ with polynomial dependence on $L$, $H$, ${d_{\mathrm{model}}}$ and $d_{\mathrm{ffn}}$ via a more detailed analysis, but a finite pseudo-dimension is already enough for our purpose.
\end{remark}
\begin{proof}
    Note that $\mathcal{F}'$ is defined on the space $\bigcup_{n=1}^N \mathbb{R}^{(n+1)\times {d_{\mathrm{model}}}}$, equipped  with the disjoint union topology. Thus, it suffices to show that, for each $n \in [N]$, the restriction $\mathcal{F}'_n$ of the functions in $\mathcal{F}'$ to $\mathbb{R}^{(n+1)\times {d_{\mathrm{model}}}}$ has finite pseudo-dimension. 
    Now let 
    \begin{align*}
        \mathcal{C}_n \coloneqq \Bigl\{ \bigl\{(Z,y) \in \mathbb{R}^{(n+1)\times {d_{\mathrm{model}}}} \times \mathbb{R} : y \leq f(Z)\bigr\} : f\in \mathcal{F}'_n \Bigr\} 
    \end{align*}
    denote the collection of subgraphs of $f \in \mathcal{F}'_n$. Then by definition, the pseudo-dimension of $\mathcal{F}'_n$ is equal to the VC-dimension of $\mathcal{C}_n$.  We will now show that $\mathcal{C}_n$ is uniformly definable in $\mathbb{R}_{\exp,\Phi}$.  The graph $\{(x,y)\in\mathbb{R}^2 : y=\relu(x)\}$ of $\relu$ can be written as
    \begin{align*}
        \bigl\{(x,y)\in\mathbb{R}^2 : \bigl(\neg(x<0) \wedge y=x\bigr) \vee (x<0 \wedge y=0)\bigr\},
    \end{align*}
    so $\relu : \mathbb{R} \to \mathbb{R}$ is definable on $\mathbb{R}_{\exp,\Phi}$. By Lemma~\ref{lemma:properties-of-definable-functions}, sums, products, reciprocals and compositions of definable functions are definable. Thus, for any $d,p\in\mathbb{N}$, the set of polynomials on $\mathbb{R}^d$ with degree at most $p$ is uniformly definable in $\mathbb{R}_{\exp,\Phi}$. Moreover, since $\exp$ and $\Phi$ are definable in $\mathbb{R}_{\exp,\Phi}$ by definition, we deduce that $\mathsf{GELU}$, $\mathsf{SiLU}$, and $\softmax$ are also definable in $\mathbb{R}_{\exp,\Phi}$. Each function $f\in\mathcal{F}'_n$ comprises compositions, sums and products of $\rho\in\{\mathsf{ReLU}, \mathsf{GELU}, \mathsf{SiLU}\}$, $\softmax$ and polynomials of degree at most two, we deduce by Lemma~\ref{lemma:properties-of-definable-functions} that $\mathcal{F}'_n$ is uniformly definable in $\mathbb{R}_{\exp,\Phi}$. Further, by Lemma~\ref{lemma:properties-of-definable-functions}\emph{(a)}, we have that the subgraph class of $\mathcal{F}'_n$ is also uniformly definable on $\mathbb{R}_{\exp,\Phi}$ and
    \begin{align*}
        \mathcal{C}_n \subseteq \Bigl\{\bigl\{a\in\mathbb{R}^{(n+1)\times {d_{\mathrm{model}}}} \times \mathbb{R} : \mathbb{R}_{\exp,\Phi} \models \varphi(a,b)\bigr\} : b\in\mathbb{R}^m\Bigr\}.
    \end{align*}
    By \citet[Therorem~3.4.37]{marker2002model}, the structure $\mathbb{R}_{\exp}\coloneqq (\mathbb{R}; 0,1,+,-,\cdot,<,\exp,\Phi)$ is \emph{o-minimal}, i.e.~every definable set is a finite union of open intervals and singleton points.  Further, the derivative $\Phi'$ given by $\Phi'(x) = \frac{1}{\sqrt{2\pi}}e^{-x^2/2}$ is definable in $\mathbb{R}_{\exp}$, so by \citet{Speissegger1999pfaffian}, there exists an o-minimal extension $\tilde{\mathbb{R}}$ of $\mathbb{R}_{\exp}$ in which $\Phi$ is definable. Now, every definable set in $\mathbb{R}_{\exp,\Phi}$ is definable in $\tilde{\mathbb{R}}$, so $\mathbb{R}_{\exp,\Phi}$ is o-minimal.
    We may therefore apply \citet[Theorem~3.2]{laskowski1992vapnik} \citep[see also][Theorem~2]{karpinski1995polynomial} to deduce that the VC-dimension of $\mathcal{C}_n$ is finite.
\end{proof}

\begin{proof}[Proof of Proposition~\ref{prop:ERM-approx-posterior}]
    By Theorem~\ref{thm:universal-approximation}, there exist $d_{\mathrm{model}}^\circ\geq d+2$ and $d_{\mathrm{ffn}}^\circ\in\mathbb{N}$ such that if ${d_{\mathrm{model}}}\geq d_{\mathrm{model}}^\circ$, $d_{\mathrm{ffn}} \geq d_{\mathrm{ffn}}^\circ$ and $\mathcal{F} \coloneqq \mathsf{Read} \circ \mathcal{F}_{\mathrm{TF}}(3,1,{d_{\mathrm{model}}},d_{\mathrm{ffn}})$, then 
    \begin{align*}
        \inf_{f\in\mathcal{F}} \mathbb{E}_{P\sim\pi,\, (\mathcal{D}_n,X,Y)|P \sim P^{\otimes(n+1)}} \Bigl\{ \bigl(f(X,\mathcal{D}_n) - g_{\pi}(X,\mathcal{D}_n)\bigr)^2 \Bigr\} \leq \frac{\epsilon}{4},
    \end{align*}
    Now by Lemma~\ref{lemma:finite-pdim}, we have that $\mathrm{Pdim}(\mathcal{F})$ is finite. Thus, for all $T$ sufficiently large, we have $\frac{C\log^3(T) \cdot \mathrm{Pdim}(\mathcal{F})}{T} \leq \frac{\epsilon}{2}$, where $C>0$ is taken from Lemma~\ref{lemma:approx-error-plus-pdim}. The claim then follows from Lemma~\ref{lemma:approx-error-plus-pdim}. 
\end{proof}

\section{Function Spaces and Wavelets} \label{sec:function-spaces-and-wavelets}
\subsection{Function Spaces} \label{sec:function-spaces}

In this section, we restrict the domain of the functions to $[0,1]^d$.  By Jensen's inequality, $\|f\|_{L^{p_1}} \leq \|f\|_{L^{p_2}}$ for all $1\leq p_1 \leq p_2 \leq \infty$.
For $f,g\in L^2([0,1]^d)$, define the inner product
\begin{align*}
    \langle f,g \rangle \coloneqq \int_{[0,1]^d} f(x)g(x) \,\d x.
\end{align*}
Let $C([0,1]^d)$ denote the set of continuous functions from $[0,1]^d$ to $\mathbb{R}$. For $m=(m_1,\ldots,m_d)^\top \in\mathbb{N}_0^d$ with $k\coloneqq \|m\|_1$, and a $k$-times differentiable $f:[0,1]^d \rightarrow \mathbb{R}$, we use the multi-index notation $\partial^m f \coloneqq \partial^{m_1}\cdots\partial^{m_d}f$ for partial derivatives. 
\begin{defn}[H\"older space]\label{defn:holder-space}
    For $\alpha>0$, let $\alpha_0\coloneqq \lceil \alpha \rceil - 1$ be the largest integer strictly smaller than $\alpha$ and define the \emph{H\"older space} $H^{\alpha}([0,1]^d)$ to be the set of $\alpha_0$-times differentiable $f:[0,1]^d \to \mathbb{R}$ with
    \begin{align*}
    \sup_{x\neq y} \sum_{m\in \mathbb{N}_0^d : \|m\|_1 = \alpha_0} \frac{\partial^m f(x) - \partial^m f(y)}{\|x-y\|_2^{\alpha-\alpha_0}} < \infty. 
    \end{align*}
\end{defn}

For $r\in\mathbb{N}$, define the \emph{$r$th order difference operator} $\Delta_h^r:L^1([0,1]^d) \rightarrow L^1([0,1]^d)$ by
\begin{align*}
    \Delta_h^r(f)(x) = \mathbbold{1}_{\{x+rh\in[0,1]^d\}}\sum_{k=0}^r \binom{r}{k} (-1)^{r-k} f(x+kh). 
\end{align*}
Note that $\Delta_h^1(f)(x)=\{f(x+h) - f(x)\}\mathbbold{1}_{\{x+h\in[0,1]^d\}}$ and $\Delta_h^r(f)(x) = \Delta_h^1 \bigl(\Delta_h^{r-1}(f)(x)\bigr)$ if $x+rh\in[0,1]^d$.  Now, for $p\in[1,\infty]$, $f\in L^p([0,1]^d)$ and $t \geq 0$, define the \emph{$r$th modulus of smoothness}
\begin{align*}
    \omega_{r,p}(f,t) \coloneqq \sup_{\|h\|_2 \leq t} \|\Delta_h^r(f)\|_{L^p}.
\end{align*}

\begin{defn}[Besov space]\label{defn:besov-space}
    For $p,q \in [1,\infty]$ and $\alpha>0$, let $r\coloneqq \lfloor \alpha \rfloor+1$ and define the \emph{Besov space}  $B_{p,q}^{\alpha}([0,1]^d)$ by
    \begin{align*}
        B_{p,q}^{\alpha}([0,1]^d) \coloneqq \begin{cases}
            {\displaystyle \biggl\{f\in L^p([0,1]^d) : \biggl[\int_0^{\infty} \bigl(t^{-\alpha} \omega_{r,p}(f,t)\bigr)^q \frac{\d t}{t}\biggr]^{1/q} < \infty\biggr\}} \quad & \text{for } q \in [1,\infty) \\[1em] 
            {\displaystyle \Bigl\{f\in L^p([0,1]^d) : \sup_{t>0} t^{-\alpha} \omega_{r,p}(f,t) < \infty \Bigr\}} \quad & \text{for } q=\infty.
        \end{cases}
    \end{align*}
\end{defn}
For non-integer smoothness levels, H\"older spaces are special cases of Besov spaces in the sense that $H^{\alpha}([0,1]^d) = B^{\alpha}_{\infty,\infty}([0,1]^d)$ for all $\alpha\in (0,\infty)\setminus \mathbb{N}$, while for $\alpha\in\mathbb{N}$, we have $H^{\alpha}([0,1]^d) \subseteq B^{\alpha}_{\infty,\infty}([0,1]^d)$, and in fact, $B_{\infty,\infty}^{\alpha}([0,1]^d)$ coincides with Zygmund spaces when $\alpha\in\mathbb{N}$; see e.g.~\citet[p.~351]{gine2021mathematical} or \citet[Chapter~2.5.7]{triebel1983theory}.

\subsection{Wavelets}
\label{subsec:wavelets}
We start with the univariate setting where the notation is simpler.  For $S\in\mathbb{N}$, denote the Cohen–Daubechies–Vial (CDV) wavelet basis on $[0,1]$ \citep{cohen1993wavelets}, constructed from $S$-regular and $S$-times continuously differentiable father and mother Daubechies wavelets \citep{daubechies1988orthonormal}, by
\begin{align}
    \bigl\{\phi_{k}: k\in[0:2^{\ell_0}-1]\bigr\} \bigcup \bigl\{\psi_{\ell,k}: \ell\in[{\ell_0}:\infty),\, k\in[0:2^{\ell}-1]\bigr\}, \label{eq:one-dim-CDV}
\end{align}
where $\ell_0\in\mathbb{N}$ depends only on $S$.  The precise definition and construction of this wavelet basis is not important for us, but the functions in~\eqref{eq:one-dim-CDV} are $S$-times continuously differentiable and form an orthonormal basis for $L^2([0,1])$ \citep[Theorem~4.4]{cohen1993wavelets}. Moreover, $\phi_k$ is supported on $\bigl[\frac{k}{2^{\ell_0}}, \frac{k+1}{2^{\ell_0}}\bigr]$ for $k\in[0:2^{\ell_0}-1]$, and $\psi_{\ell,k}$ is supported on $\bigl[\frac{k}{2^{\ell}}, \frac{k+1}{2^{\ell}}\bigr]$ for $\ell\in[{\ell_0}:\infty),\, k\in[0:2^{\ell}-1]$.
The one-dimensional CDV wavelets in~\eqref{eq:one-dim-CDV} can be extended to an orthonormal basis for $L^2([0,1]^d)$ via tensor products. For $k=(k_1,\ldots,k_d)^\top \in [0:2^{\ell_0}-1]^d$ and $x=(x_1,\ldots,x_d)^\top \in [0,1]^d$, define
\begin{align}
    \Phi_k(x) \coloneqq \prod_{j=1}^d \phi_{k_j}(x_j). \label{eq:tensor-product-phi}
\end{align}
Further, for $\tau=(\tau_1,\ldots,\tau_d)^\top \in\{0,1\}^d\setminus \{0_d\}$, $\ell\in[{\ell_0}:\infty)$, $k=(k_1,\ldots,k_d)^\top \in [0:2^\ell-1]^d$ and $x=(x_1,\ldots,x_d)^\top \in [0,1]^d$, define
\begin{align}
    \Psi_{\ell,k}^{(\tau)}(x) \coloneqq \prod_{j=1}^d \psi^{(\tau_j)}_{\ell,k_j}(x_j), \label{eq:tensor-product-psi}
\end{align}
where $\psi^{(1)}_{\ell,k'} = \psi_{\ell,k'}$ for all $k'\in[0:2^{\ell}-1]$, and $(\psi^{(0)}_{\ell,k'})_{k'=0}^{2^{\ell}-1}$ are the boundary-corrected ($S$-regular and $S$-times continuously differentiable) father Daubechies wavelets at resolution~$\ell$. Again, the precise definition and construction are not important for us, but we will exploit the following properties. To simplify notation, define
\begin{align}
    K\coloneqq [0:2^{\ell_0}-1]^d \quad\text{and}\quad \Gamma_\ell \coloneqq [0:2^\ell-1]^d \times \bigl(\{0,1\}^d\setminus\{0_d\}\bigr) \text{ for } \ell\in[\ell_0:\infty), 
\end{align}
and for $\gamma = (k,\tau) \in \Gamma_{\ell}$, we write $\Psi_{\ell,\gamma} \coloneqq \Psi_{\ell,k}^{(\tau)}$. Note that
\begin{align}
    |K| = 2^{\ell_0 d} \quad\text{and}\quad |\Gamma_{\ell}| = 2^{\ell d} (2^d-1) < 2^{(\ell+1)d} \text{ for }\ell\in[\ell_0:\infty). \label{eq:number-of-wavelets}
\end{align}
Each term in the products in~\eqref{eq:tensor-product-phi} and~\eqref{eq:tensor-product-psi} is $S$-times continuously differentiable, and the functions
\begin{align*}
    \bigl\{\Phi_k : k\in K\bigr\} \cup \bigl\{\Psi_{\ell,\gamma} : \ell\in[{\ell_0}:\infty),\, \gamma\in\Gamma_{\ell}\bigr\}
\end{align*}
form an orthonormal basis for $L^2([0,1]^d)$; see e.g. \citet[Chapter~4.3.6]{gine2021mathematical} and \citet[Section~4]{dahmen1997wavelet}. By construction, there exist $C,C'>0$ (depending only on $S$ and $d$) such that
\begin{align}
    \|\Phi_k\|_{\infty} \leq C 2^{\ell_0 d/2} \quad\text{and}\quad \|\Psi_{\ell,\gamma}\|_{\infty} \leq C 2^{\ell d/2} \label{eq:ell-infty-norm-wavelet-functions}
\end{align}
for $k\in K$, $\ell\in[\ell_0:\infty)$ and $\gamma\in\Gamma_{\ell}$, and that 
\begin{align}
    \sum_{k\in K} \mathbbold{1}_{\{\Phi_{k}(x) \neq 0\}} \vee \sup_{\ell\in[\ell_0:\infty)} \sum_{\gamma\in\Gamma_{\ell}} \mathbbold{1}_{\{\Psi_{\ell,\gamma}(x) \neq 0\}} \leq C' \quad\text{for all }x\in[0,1]^d. \label{eq:number-of-non-zero-wavelets}
\end{align}
For $f\in L^2([0,1]^d)$, we have the wavelet decomposition
\begin{align}
    f = \sum_{k\in K}a_k \Phi_k + \sum_{\ell=\ell_0}^{\infty} \sum_{\gamma\in\Gamma_{\ell}} b_{\ell,\gamma} \Psi_{\ell,\gamma}, \label{eq:wavelet-decomposition-d}
\end{align}
where $a_k\coloneqq \langle f,\Phi_k\rangle$ and $b_{\ell,\gamma}\coloneqq \langle f,\Psi_{\ell,\gamma}\rangle$. Define
\begin{align}
    a\coloneqq (a_k)_{k\in K} \in \mathbb{R}^{K} \quad\text{and}\quad b_{\ell}\coloneqq (b_{\ell,\gamma})_{\gamma\in\Gamma_{\ell}} \in \mathbb{R}^{\Gamma_{\ell}} \text{ for }\ell\in[\ell_0:\infty). \label{eq:def-a-b-d}
\end{align}
For $f\in L^2([0,1]^d)$, $p\in[2, \infty]$, $q \in [1,\infty]$ and $\alpha\in(0,S)$, define its Besov norm by
\begin{align*}
    \|f\|_{B_{p,q}^{\alpha}} \coloneqq 
    \begin{cases}
        \displaystyle 2^{\ell_0(d/2-d/p)}\|a\|_p + \biggl\{\sum_{\ell={\ell_0}}^{\infty} \bigl(2^{\ell(\alpha+d/2-d/p)}\|b_{\ell}\|_p\bigr)^q \biggr\}^{1/q} \quad &\text{for } q \in [1,\infty)\\
        \displaystyle 2^{\ell_0(d/2-d/p)}\|a\|_p + \sup_{\ell\in[\ell_0:\infty)} 2^{\ell(\alpha+d/2-d/p)}\|b_{\ell}\|_p \quad &\text{for } q=\infty.
    \end{cases}
\end{align*}
For $p\in[2, \infty]$, $q \in [1,\infty]$ and $\alpha\in(0,S)$, the Besov space $B_{p,q}^{\alpha}([0,1]^d)$ in Definition~\ref{defn:besov-space} can be equivalently characterised by wavelet coefficients in the sense that $f\in B_{p,q}^{\alpha}([0,1]^d)$ if and only if $f\in L^2([0,1]^d)$ and $\|f\|_{B_{p,q}^{\alpha}} < \infty$ \citep[e.g.][Chapter~4.3.6]{gine2021mathematical}.  Therefore, we can define a closed ball $B_{p,q}^{\alpha}([0,1]^d, R)$ of radius $R>0$ in the space $B_{p,q}^{\alpha}([0,1]^d)$ by 
\begin{align}
    B_{p,q}^{\alpha}([0,1]^d,R) \coloneqq \{f\in L^2([0,1]^d) : \|f\|_{B_{p,q}^{\alpha}}\leq R\}. \label{eq:besov-ball}
\end{align}
As a special case, we have $\|f\|_{B_{\infty,\infty}^{\alpha}} = 2^{\ell_0 d/2}\|a\|_{\infty} + \sup_{\ell\in[\ell_0:\infty)} 2^{\ell(\alpha+d/2)}\|b_{\ell}\|_{\infty}$.

\section{Posterior Contraction and Adaptation}
\subsection{General Theory}

This section is primarily based on material from \citet{ghosal2008nonparametric}; see also \citet[Section~10.2]{ghosal2017fundamentals}. Here, we make some modifications in order to obtain finite-sample posterior contraction at an exponential rate.

Let $\mathcal{Z}$ be a measurable space and let $\mathcal{A}$ be a countable index set. For $\alpha\in \mathcal{A}$, let~$\mathcal{P}_{\alpha}$ be a space of distributions on $\mathcal{Z}$ dominated by a $\sigma$-finite measure $\nu$, and let $\mathcal{P} \coloneqq \bigcup_{\alpha\in\mathcal{A}} \mathcal{P}_{\alpha}$, which is a metric space when equipped with the Hellinger metric $d_{\mathrm{H}}$.  For example, $\alpha>0$ might index smoothness, in which case~$\mathcal{P}_{\alpha}$ might denote the set of distributions of $(X,Y)$ for which the regression function $x \mapsto \mathbb{E}(Y\,|\,X=x)$ is $\alpha$-smooth. For $\alpha\in\mathcal{A}$, let $\pi_{\alpha}$ be a distribution on $\mathcal{P}_{\alpha}$ and let $\lambda=(\lambda_{\alpha})_{\alpha\in\mathcal{A}}$ be such that $\lambda_{\alpha}>0$ for $\alpha\in\mathcal{A}$ and $\sum_{\alpha\in\mathcal{A}}\lambda_{\alpha}=1$. We consider a prior distribution $\pi$ on $\mathcal{P}$ of the form
\begin{align*}
    \pi \coloneqq \sum_{\alpha\in\mathcal{A}} \lambda_{\alpha} \pi_{\alpha}.
\end{align*}
For $z_1,\ldots,z_n \in \mathcal{Z}$, we may define a distribution on $\mathcal{P}$ by 
\begin{align*}
    \pi(B \,|\, z_1,\ldots,z_n) \coloneqq \frac{\int_B \prod_{i=1}^n \frac{\d P}{\d \nu}(z_i) \,\d \pi(P)}{\int_{\mathcal{P}} \prod_{i=1}^n \frac{\d P}{\d \nu}(z_i) \,\d \pi(P)},
\end{align*}
for any measurable $B\subseteq \mathcal{P}$.  Thus, $\pi(\cdot \,|\, Z_1,\ldots,Z_n)$ would be the posterior distribution when $P$ has prior distribution $\pi$ and $Z_1,\ldots,Z_n\,|\,P \stackrel{\mathrm{iid}}{\sim} P$.  In the sequel, however, we will assume instead that $Z_1,\ldots,Z_n \stackrel{\mathrm{iid}}{\sim} P_0$ for some fixed distribution $P_0$ on $\mathcal{Z}$, and seek bounds on the expected mass assigned by $\pi(\cdot \,|\, Z_1,\ldots,Z_n)$ to (Hellinger) balls around~$P_0$.

For $\alpha\in\mathcal{A}$, a distribution $P_0$ on $\mathcal{Z}$ dominated by $\nu$, and for $\epsilon > 0$, define
\begin{align}
    B_{\alpha}(P_0,\epsilon) \coloneqq \Bigl\{P\in\mathcal{P}_{\alpha}: \mathrm{KL}(P_0,P) \leq \epsilon^2,\, \mathrm{V}_2(P_0,P) \leq \epsilon^2 \Bigr\} \label{eq:B-ball}
\end{align}
and
\begin{align}
    B'_{\alpha}(P_0,\epsilon) \coloneqq \Bigl\{P\in\mathcal{P}_{\alpha}: d_{\mathrm{H}}(P_0,P) \leq \epsilon \Bigr\}. \label{eq:C-ball}
\end{align}
Let $(\epsilon_{n,\alpha})_{\alpha\in\mathcal{A}}$ be a positive sequence, thought of as the rate of convergence in Hellinger distance on each $\mathcal{P}_{\alpha}$. Fix $\beta\in\mathcal{A}$ and $H\geq 1$, and define
\begin{align*}
    \mathcal{A}_{\geq\beta} &\coloneqq \bigl\{\alpha\in\mathcal{A} : \epsilon_{n,\alpha}^2 \leq H\epsilon_{n,\beta}^2 \bigr\}\\
    \mathcal{A}_{<\beta} &\coloneqq \bigl\{\alpha\in\mathcal{A} : \epsilon_{n,\alpha}^2 > H\epsilon_{n,\beta}^2 \bigr\},
\end{align*}
to be the sets of indices that are `at least as regular as $\beta$' and `less regular than $\beta$', respectively. Here $\beta$ should be thought of as the index of the `best model' for $P_0$.

We now state five conditions for our posterior contraction theory: 
\begin{enumerate}[label=(C\arabic*)]
    \item \label{eq:entropy-condition} Covering number bound: There exists a positive sequence $(E_{\alpha})_{\alpha\in\mathcal{A}}$ such that
    \begin{align*}
    \sup_{\epsilon\geq\epsilon_{n,\alpha}} \log\mathcal{N}\bigl(\epsilon/3,\, B'_{\alpha}(P_0,2\epsilon),\, d_{\mathrm{H}}\bigr) \leq E_{\alpha}n\epsilon_{n,\alpha}^2 \quad\text{for all }\alpha\in\mathcal{A}. 
    \end{align*}
    \item \label{eq:prior-mass-condition-beta} Prior mass lower bound for $\beta$: There exists $F>0$ such that
    \begin{align*}
    \pi_{\beta}\bigl(B_{\beta}(P_0,\epsilon_{n,\beta})\bigr) \geq e^{-Fn\epsilon_{n,\beta}^2}. 
    \end{align*}
    \item \label{eq:mixture-weight-condition} Mixture weights: There exists a positive sequence $(\mu_{n,\alpha})_{\alpha\in\mathcal{A}}$ such that
    \begin{align*}
    \frac{\lambda_{\alpha}}{\lambda_{\beta}} \leq \mu_{n,\alpha}\, e^{n(H^{-1}\epsilon_{n,\alpha}^2 \vee \epsilon_{n,\beta}^2)} \quad\text{for all }\alpha\in\mathcal{A}.
    \end{align*}
    \item \label{eq:prior-mass-condition-alpha} Prior mass upper bound for $\mathcal{A}_{<\beta}$: There exist $M,G>0$ such that
    \begin{align*}
    \sum_{\alpha\in\mathcal{A}_{<\beta}} \frac{\lambda_{\alpha}}{\lambda_{\beta}} \cdot \pi_{\alpha}\bigl(B'_{\alpha}(P_0,M\epsilon_{n,\alpha})\bigr) \leq Ge^{-(F+3)n\epsilon_{n,\beta}^2}. 
    \end{align*}
    \item \label{eq:E-condition} Uniformity of covering number constants: There exists $E \in (0,\infty)$ such that
    \begin{align*}
    E\geq\sup_{\alpha\in\mathcal{A}_{<\beta}} E_{\alpha} \vee \sup_{\alpha\in\mathcal{A}_{\geq\beta}} \frac{E_{\alpha}\epsilon_{n,\alpha}^2}{\epsilon_{n,\beta}^2}. 
    \end{align*}
\end{enumerate}
The set $\mathcal{A}_{<\beta}$ may be empty; in that case, condition~\ref{eq:prior-mass-condition-alpha} is automatically satisfied, and the supremum over the empty set in~\ref{eq:E-condition} is defined as $-\infty$.

The following theorem is a modification of \citet[Theorem~2.1]{ghosal2008nonparametric}, where we use a slightly larger constant in the exponent in~\ref{eq:prior-mass-condition-alpha} and impose an additional sub-Gamma assumption on the log-likelihood ratios in order to obtain exponential posterior contraction (as opposed to the weaker conclusion that the left-hand side of~\eqref{eq:exponential-posterior-contraction} converges to zero as $n\to\infty$).  In our in-context learning problem, this is crucial to derive rates of convergence for the posterior regression function. The sub-Gamma assumption is verified in Lemma~\ref{lemma:properties-of-gaussian-regression}\emph{(c)} under a nonparametric regression setting with independent Gaussian noise. 

\begin{prop}\label{prop:exponential-posterior-contraction}
    Let $P_0$ be a distribution on $\mathcal{Z}$, let $\beta\in\mathcal{A}$ and let $Z_1,\ldots,Z_n \overset{\mathrm{iid}}{\sim} P_0$.
    Assume that the conditions \ref{eq:entropy-condition}--\ref{eq:E-condition} hold with $M^2 > 243(F+1)(HE+1)$ and $\sum_{\alpha\in\mathcal{A}} \sqrt{\mu_{n,\alpha}} \leq e^{n\epsilon_{n,\beta}^2}$. For $i\in[n]$ and $P\in\mathcal{P}_{\beta}$ with with $P \ll P_0$, let 
    \[
    W(P,Z_i) \coloneqq \log \frac{\d P}{\d P_0}(Z_i) - \mathbb{E}\log \frac{\d P}{\d P_0}(Z_i).
    \]
    Assume that there exist $a,c>0$ such that for all $P\in B_{\beta}(P_0,\epsilon_{n,\beta})$, we have that $W(P,Z_1)$ is sub-Gamma in the left tail with variance parameter $a\epsilon_{n,\beta}^2$ and scale parameter $c$. Then there exist $C_1,c_2,c_3>0$, not depending on $n$, such that if $n\epsilon_{n,\beta}^2 \geq c_3$, then
    \begin{align}
        \mathbb{E}_{(Z_1,\ldots,Z_n)\sim P_0^{\otimes n}} \bigl[ \pi\bigl(\{P\in\mathcal{P} : d_{\mathrm{H}}(P,P_0)\geq M \epsilon_{n,\beta}\} \,\big|\, Z_1,\ldots,Z_n\bigr) \bigr] \leq C_1e^{-c_2n\epsilon_{n,\beta}^2}.  \label{eq:exponential-posterior-contraction}
    \end{align}
\end{prop}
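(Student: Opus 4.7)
The plan is to follow the standard three-step Ghosal--van der Vaart architecture: (i) a lower bound on the evidence $\int_{\mathcal{P}}\prod_{i=1}^n \frac{\d P}{\d P_0}(Z_i)\,\d\pi(P)$, (ii) construction of exponentially powerful tests via the entropy condition, and (iii) an upper bound on the posterior mass of $A_n \coloneqq \{P \in \mathcal{P} : d_{\mathrm{H}}(P, P_0) \geq M\epsilon_{n,\beta}\}$ using the tests together with a prior-mass decomposition over $\mathcal{A}_{<\beta}$ and $\mathcal{A}_{\geq\beta}$. The essential modification relative to the classical argument is that the sub-Gamma hypothesis upgrades the evidence bound from a polynomial tail (what Chebyshev on $\mathrm{V}_2$ would yield) to an exponential tail, which is necessary because otherwise the failure event of step (i) alone would already dominate the right-hand side of~\eqref{eq:exponential-posterior-contraction}.

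For step (i), let $\tilde\pi$ be $\pi_\beta$ restricted to $B_\beta(P_0,\epsilon_{n,\beta})$ and normalised. Writing $\log\frac{\d P}{\d P_0}(Z_i) = W(P, Z_i) - \mathrm{KL}(P_0, P)$ and applying Jensen's inequality to $\log$ under $\tilde\pi$, I bound the denominator below by $\lambda_\beta\,\pi_\beta(B_\beta(P_0,\epsilon_{n,\beta}))\,\exp(\sum_i V(Z_i) - n\int \mathrm{KL}(P_0, P)\,\d\tilde\pi(P))$, where $V(z) \coloneqq \int W(P, z)\,\d\tilde\pi(P)$. The $\mathrm{KL}$ integral is at most $\epsilon_{n,\beta}^2$ by definition of $B_\beta$. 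A second application of Jensen's inequality, now to the convex map $x \mapsto e^{-\lambda x}$, transfers the sub-Gamma left-tail property of $W(P, Z_1)$ to $V(Z_1)$ with the same parameters $(a\epsilon_{n,\beta}^2, c)$. By independence of $Z_1, \ldots, Z_n$, the sum $\sum_i V(Z_i)$ is sub-Gamma in the left tail with parameters $(na\epsilon_{n,\beta}^2, c)$, and the standard Bernstein bound yields $\P(\sum_i V(Z_i) \leq -n\epsilon_{n,\beta}^2) \leq e^{-c_0 n\epsilon_{n,\beta}^2}$ for some $c_0 > 0$. Combined with~\ref{eq:prior-mass-condition-beta}, this produces an event $\mathcal{E}_0$ of $P_0^{\otimes n}$-probability at least $1 - e^{-c_0 n\epsilon_{n,\beta}^2}$ on which the denominator exceeds $\lambda_\beta e^{-(F+2)n\epsilon_{n,\beta}^2}$.

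For steps (ii) and (iii), I partition $A_n \cap \mathcal{P}_\alpha$ into Hellinger shells $S_{\alpha, j}$ with inner radius $j\epsilon_{n,\alpha}^\star$ and outer radius $(j+1)\epsilon_{n,\alpha}^\star$, where $\epsilon_{n,\alpha}^\star \coloneqq \epsilon_{n,\alpha} \vee M\epsilon_{n,\beta}$ and $j \geq 1$. Condition~\ref{eq:entropy-condition} provides a cover of each shell by at most $\exp(E_\alpha n \epsilon_{n,\alpha}^2)$ Hellinger balls, and around each centre the Birg\'e--Le Cam construction gives a test with both error probabilities at most $\exp(-cnj^2(\epsilon_{n,\alpha}^\star)^2)$. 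Taking $\phi_n$ to be the supremum of all such tests over $\alpha$ and $j$, condition~\ref{eq:E-condition} together with the lower bound on $M^2$ ensures that the covering-number growth is dominated by the test-error decay, so $\E_{P_0}\phi_n$ sums to at most $e^{-c_1 n\epsilon_{n,\beta}^2}$. On $\mathcal{E}_0 \cap \{\phi_n = 0\}$, the posterior mass of $A_n$ is at most $\lambda_\beta^{-1}e^{(F+2)n\epsilon_{n,\beta}^2}\int_{A_n}(1 - \phi_n)\prod_i \frac{\d P}{\d P_0}(Z_i)\,\d\pi(P)$. Taking $P_0^{\otimes n}$-expectation, interchanging integrals by Fubini, and splitting by $\alpha$, the contribution of each shell for $\alpha \in \mathcal{A}_{\geq\beta}$ is controlled by $\pi_\alpha(S_{\alpha,j}) \leq 1$ combined with~\ref{eq:mixture-weight-condition}, while for $\alpha \in \mathcal{A}_{<\beta}$ I invoke~\ref{eq:prior-mass-condition-alpha} to obtain a prior-mass bound of $e^{-(F+3)n\epsilon_{n,\beta}^2}$ that outweighs the $e^{(F+2)n\epsilon_{n,\beta}^2}$ amplification. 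The summability hypothesis $\sum_\alpha \sqrt{\mu_{n,\alpha}} \leq e^{n\epsilon_{n,\beta}^2}$ then renders the total contribution $O(e^{-c_2 n\epsilon_{n,\beta}^2})$, and the posterior mass on $\mathcal{E}_0^{\mathrm{c}}$ is trivially bounded by $1$, contributing a further $e^{-c_0 n\epsilon_{n,\beta}^2}$.

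The main obstacle will be the bookkeeping in step (iii): producing a single positive constant $c_2$ after summing over $\alpha \in \mathcal{A}$ and all shells $j \geq 1$, which requires the sharp numerical value of $M^2 > 243(F+1)(HE+1)$ and a careful balancing of three competing exponential rates -- the test error $\exp(-cnj^2(\epsilon_{n,\alpha}^\star)^2)$, the covering-number growth $\exp(E_\alpha n\epsilon_{n,\alpha}^2)$, and the mixture-plus-amplification factor from~\ref{eq:mixture-weight-condition} combined with the $e^{(F+2)n\epsilon_{n,\beta}^2}$ inherited from step (i). The single conceptually new ingredient, by contrast, is the exponential denominator bound via the sub-Gamma hypothesis, which is a minor but essential upgrade over the Chebyshev-based evidence bound used by~\citet{ghosal2008nonparametric}.
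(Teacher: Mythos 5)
Your proposal follows essentially the same route as the paper: the paper simply invokes \citet[Theorem~2.1]{ghosal2008nonparametric} with the numerical constants $K=1/9$, $L=1/27$, $I=\sqrt{27(F+1)}$, $B=M/I$, uses the stronger prior-mass condition~\ref{eq:prior-mass-condition-alpha} to replace their Eq.~(2.4), and — crucially, and exactly as in your step~(i) — swaps their Chebyshev-based Lemma~6.3 for the paper's Lemma~\ref{lemma:evidence-lower-bound}, whose proof is verbatim your Jensen-plus-Fubini argument transferring the sub-Gamma left-tail bound from $W(P,Z_1)$ to $\int W(P,\cdot)\,\d\tilde\pi$ and summing. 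Your steps~(ii)--(iii) are the standard Ghosal--Lember--van der Vaart shell/testing/prior-mass bookkeeping that the paper does not re-derive but merely cites, so the only conceptually new ingredient you identify — the exponential evidence lower bound via the sub-Gamma hypothesis — is precisely the one the paper isolates.
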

\begin{proof}
    Our assumptions ensure that the conditions of \citet[Theorem~2.1]{ghosal2008nonparametric} hold\footnote{As we seek a finite-sample bound, we do not require their condition that $n\epsilon_{n,\beta}^2 \to \infty$ as $n\to\infty$.} with $K=1/9$, $L=K/3=1/27$, $I=\sqrt{27(F+1)}$ and $B=M/I=M/\sqrt{27(F+1)}$ therein. Moreover, our conditions~\ref{eq:prior-mass-condition-beta} and~\ref{eq:prior-mass-condition-alpha} ensure that
    \begin{align}
        \sum_{\alpha\in\mathcal{A}_{<\beta}} \frac{\lambda_{\alpha}}{\lambda_{\beta}} \cdot \frac{\pi_{\alpha}\bigl(B'_{\alpha}(P_0,M\epsilon_{n,\alpha})\bigr)}{\pi_{\beta}\bigl(B_{\beta}(P_0,\epsilon_{n,\beta})\bigr)} \leq G e^{-3n\epsilon_{n,\beta}^2}. \label{eq:prior-mass-ratio}
    \end{align}
    Now, by following the proof of \citet[Theorem~2.1]{ghosal2008nonparametric}, replacing the use of their~Eq.~(2.4) by our~\eqref{eq:prior-mass-ratio}, and the use of their Lemma~6.3 by our Lemma~\ref{lemma:evidence-lower-bound} (which applies because of our sub-Gamma assumption on $W(P,Z_1)$), we deduce the desired conclusion.
\end{proof}

\subsection{Bayesian Nonparametric Regression} \label{sec:bayesian-nonparametric-regression}
Let $\mathcal{X}\subseteq \mathbb{R}^d$ and $\mathcal{Y}=\mathbb{R}$. In this section, we specialise to the setting of regression and take $\mathcal{Z}=\mathcal{X}\times\mathcal{Y}$.
Let $P_X$ be a Borel distribution on $\mathcal{X}$, let $X\sim P_X$ and let $\xi\sim N(0,\sigma^2)$ be independent of $X$. Let $R>0$, and for $\alpha\in\mathcal{A}$, let $\mathcal{G}_{\alpha}$ be a space of measurable functions from $\mathcal{X}$ to $[-R,R]$. For any Borel measurable function $g:\mathcal{X} \to \mathbb{R}$, we write $P_g$ for the distribution of $(X,Y_g)$ where $Y_g=g(X)+\xi$. For $\alpha\in\mathcal{A}$, let $\tilde{\pi}_{\alpha}$ be a distribution on~$\mathcal{G}_{\alpha}$, let $\tilde{g}_{\alpha} \sim \tilde{\pi}_{\alpha}$ and let $\pi_{\alpha}$ denote the distribution of the random measure $P_{\tilde{g}_{\alpha}}$, where the randomness comes from $\tilde{g}_{\alpha} \sim \tilde{\pi}_{\alpha}$.  Finally, let $\tilde{\pi}\coloneqq \sum_{\alpha\in\mathcal{A}} \lambda_{\alpha} \tilde{\pi}_{\alpha}$ and $\pi\coloneqq \sum_{\alpha\in\mathcal{A}} \lambda_{\alpha} \pi_{\alpha}$, where $\lambda_\alpha \geq 0$ for all $\alpha \in \mathcal{A}$ and $\sum_{\alpha \in \mathcal{A}} \lambda_\alpha = 1$, let $\mathcal{P}$ be the support of $\pi$ and $\mathcal{G}\coloneqq \bigcup_{\alpha\in\mathcal{A}} \mathcal{G}_{\alpha}$.

We first provide an alternative expression for the $\pi$-posterior regression function $g_{\pi}$, defined in~\eqref{eq:def-posterior-regression-function}.  For $P\in\mathcal{P}$, let $g_P:\mathcal{X}\to\mathcal{Y}$ be the regression function under~$P$, so that~$g_P(x) \coloneqq \mathbb{E}_{(X,Y)\sim P}(Y \,|\, X=x)$ for $x\in\mathcal{X}$. 
Then the $\pi$-posterior regression function~$g_{\pi}$ satisfies
\begin{align}
    g_\pi (x,D_n) &= \mathbb{E}_{P\sim\pi,\, (\mathcal{D}_n,X,Y)|P\sim P^{\otimes (n+1)}} \bigl\{\mathbb{E}(Y \,|\, X=x,\, P) \,|\, X=x,\, \mathcal{D}_n=D_n\bigr\} \nonumber \\
    &= \mathbb{E}_{P\sim\pi,\, (\mathcal{D}_n,X,Y)|P\sim P^{\otimes (n+1)}} \bigl(g_P(x) \,|\, \mathcal{D}_n=D_n\bigr) \nonumber \\
    &= \int_{\mathcal{P}} g_P(x) \,\d \pi(P\,|\, D_n) = \int_{\mathcal{G}} g(x) \,\d \tilde{\pi}(g\,|\, D_n), \label{eq:alternative-def-posterior-regression-function}
\end{align}
so $g_{\pi}(\cdot,D_n)$ can be thought of as the posterior mean of the regression function.

\begin{cor}\label{cor:posterior-regression-function-concentration}
    Let $\beta\in\mathcal{A}$, let $g^0: \mathcal{X}\to[-R,R]$ be measurable and write $P_0 \equiv P_{g^0}$.  Assume that the conditions \ref{eq:entropy-condition}--\ref{eq:E-condition} hold with $M^2 > 243(F+1)(HE+1)$ and $\sum_{\alpha\in\mathcal{A}} \sqrt{\mu_{n,\alpha}} \leq e^{n\epsilon_{n,\beta}^2}$. 
    There exist $C_1,C_2,c_3,c_4>0$, not depending on $n$, such that if $n\epsilon_{n,\beta}^2 \geq c_4$, then
    \begin{align*}
        \mathbb{E}_{(\mathcal{D}_n,X,Y)\sim P_0^{\otimes(n+1)}} \bigl[\bigl\{g_\pi(X,\mathcal{D}_n) - g^0(X)\bigr\}^2\bigr] \leq C_1\epsilon_{n,\beta}^2 + C_2e^{-c_3n\epsilon_{n,\beta}^2}.
    \end{align*}
\end{cor}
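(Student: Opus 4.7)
[Proof proposal for Corollary~\ref{cor:posterior-regression-function-concentration}]
The plan is to pass from the squared error of the posterior regression function to a Hellinger quantity to which Proposition~\ref{prop:exponential-posterior-contraction} can be applied. The starting point is the representation~\eqref{eq:alternative-def-posterior-regression-function}, which exhibits $g_\pi(x,D_n)$ as the posterior mean of $g(x)$ under $\tilde\pi(\cdot\,|\,D_n)$. By Jensen's inequality applied to the posterior,
\[
\bigl(g_\pi(X,\mathcal{D}_n) - g^0(X)\bigr)^2 \leq \int_{\mathcal{G}} \bigl(g(X) - g^0(X)\bigr)^2 \,\d\tilde{\pi}(g\,|\,\mathcal{D}_n).
\]
Since the pretraining query $X$ is independent of $\mathcal{D}_n$ with marginal $P_X$, Fubini gives
\[
\mathbb{E}_{P_0^{\otimes(n+1)}}\bigl[\bigl(g_\pi(X,\mathcal{D}_n) - g^0(X)\bigr)^2\bigr] \leq \mathbb{E}_{\mathcal{D}_n \sim P_0^{\otimes n}}\biggl[\int_{\mathcal{G}} \|g-g^0\|_{L^2(P_X)}^2 \,\d\tilde{\pi}(g\,|\,\mathcal{D}_n)\biggr].
\]

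Next I would convert the $L^2(P_X)$ quantity into a squared Hellinger distance. For Gaussian regression with common variance $\sigma^2$, a direct calculation yields
\[
d_{\mathrm{H}}^2(P_g,P_{g^0}) = 2\int_{\mathcal{X}}\Bigl\{1 - \exp\bigl(-(g(x)-g^0(x))^2/(8\sigma^2)\bigr)\Bigr\}\,\d P_X(x).
\]
Because $g,g^0$ are both bounded by $R$ (the support of $\tilde\pi$ consists of such functions by hypothesis), the argument of the exponential lies in $[0, R^2/(2\sigma^2)]$, and using the elementary inequality $1 - e^{-u} \geq u(1-e^{-U})/U$ on $[0,U]$ with $U\coloneqq R^2/(2\sigma^2)$, I obtain a constant $c^* = c^*(R,\sigma)>0$ such that $\|g-g^0\|_{L^2(P_X)}^2 \leq c^* \, d_{\mathrm{H}}^2(P_g,P_{g^0})$ uniformly over $g\in\mathcal{G}$. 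Combining this with the pushforward identity $\int_{\mathcal{G}} d_{\mathrm{H}}^2(P_g,P_0)\,\d\tilde{\pi}(g\,|\,\mathcal{D}_n) = \int_{\mathcal{P}} d_{\mathrm{H}}^2(P,P_0)\,\d\pi(P\,|\,\mathcal{D}_n)$ reduces matters to bounding the posterior expectation of $d_{\mathrm{H}}^2(P,P_0)$.

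Finally, I would split this integral at the Hellinger radius $M\epsilon_{n,\beta}$: on the ball $\{P : d_{\mathrm{H}}(P,P_0) \leq M\epsilon_{n,\beta}\}$ the integrand is at most $M^2\epsilon_{n,\beta}^2$, while on its complement the bound $d_{\mathrm{H}}^2\leq 2$ combined with Proposition~\ref{prop:exponential-posterior-contraction} yields
\[
\mathbb{E}_{P_0^{\otimes n}}\biggl[\int_{\mathcal{P}} d_{\mathrm{H}}^2(P,P_0)\,\d\pi(P\,|\,\mathcal{D}_n)\biggr] \leq M^2\epsilon_{n,\beta}^2 + 2C_1 e^{-c_2 n\epsilon_{n,\beta}^2},
\]
provided $n\epsilon_{n,\beta}^2$ exceeds the threshold $c_3$ from that proposition. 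Setting $C_1 \coloneqq c^* M^2$, $C_2 \coloneqq 2c^* C_1$, $c_3 \leftarrow c_2$ and $c_4 \coloneqq c_3$ then gives the stated bound.

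I expect no substantive obstacle: each ingredient is either a direct application of Jensen/Fubini, an explicit Gaussian computation, or an appeal to the already-established Proposition~\ref{prop:exponential-posterior-contraction}. The only point requiring mild care is the uniform Hellinger-to-$L^2$ comparison, which hinges on the common boundedness of regression functions in~$\mathcal{G}$ by~$R$ and on the common noise variance~$\sigma^2$; both are built into the setup of Section~\ref{sec:bayesian-nonparametric-regression}.
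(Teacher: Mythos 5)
Your proposal is correct and follows essentially the same route as the paper: Jensen's inequality on the posterior-mean representation~\eqref{eq:alternative-def-posterior-regression-function}, Fubini, the Hellinger-to-$L^2(P_X)$ comparison (your explicit Gaussian computation is exactly Lemma~\ref{lemma:properties-of-gaussian-regression}\emph{(d)}), a split of the posterior integral at radius $M\epsilon_{n,\beta}$, and Proposition~\ref{prop:exponential-posterior-contraction}. The only point you leave implicit is that invoking Proposition~\ref{prop:exponential-posterior-contraction} requires checking its sub-Gamma hypothesis on the log-likelihood ratios, which in this Gaussian-regression setting is the routine verification the paper delegates to Lemma~\ref{lemma:properties-of-gaussian-regression}\emph{(c)}.
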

\begin{proof}
    We write $\mathbb{E}_{P_0}$ for $\mathbb{E}_{(\mathcal{D}_n,X,Y)\sim P_0^{\otimes(n+1)}}$ and $C_1' \coloneqq 2R^2/\bigl(1-e^{-R^2/(2\sigma^2)}\bigr)$. We have by~\eqref{eq:alternative-def-posterior-regression-function} that
    \begin{align*}
        \mathbb{E}_{P_0} \bigl[\bigl\{g_\pi(X&,\mathcal{D}_n) - g^0(X)\bigr\}^2\bigr]
        = \mathbb{E}_{P_0} \biggl[ \biggl\{ \int_{\mathcal{P}}g_P(X) \,\d\pi(P\,|\,\mathcal{D}_n) - g^0(X) \biggr\}^2 \biggr]\\
        \overset{(i)}&{\leq} \mathbb{E}_{P_0} \biggl[ \int_{\mathcal{P}} \bigl\{ g_P(X)  - g^0(X) \bigr\}^2 \,\d\pi(P\,|\,\mathcal{D}_n) \biggr]\\
        \overset{(ii)}&{=}  \mathbb{E}_{\mathcal{D}_n\sim P_0^{\otimes n}} \biggl( \int_{\mathcal{P}} \mathbb{E}_{(X,Y)\sim P_0}\bigl[\bigl\{ g_P(X)  - g^0(X) \bigr\}^2\bigr] \,\d\pi(P\,|\,\mathcal{D}_n) \biggr) \\
        \overset{(iii)}&{\leq} C_1' \mathbb{E}_{\mathcal{D}_n\sim P_0^{\otimes n}} \biggl[ \int_{\mathcal{P}} d_{\mathrm{H}}^2(P,P_0) \,\d\pi(P\,|\,\mathcal{D}_n) \biggr]\\
        \overset{(iv)}&{\leq} C_1' M^2 \epsilon_{n,\beta}^2 + 2C_1' \mathbb{E}_{\mathcal{D}_n\sim P_0^{\otimes n}} \bigl[ \pi\bigl(\{P\in\mathcal{P} : d_{\mathrm{H}}(P,P_0)\geq M\epsilon_{n,\beta}\} \,\big|\, \mathcal{D}_n\bigr) \bigr]\\
        \overset{(v)}&{\leq} C_1\epsilon_{n,\beta}^2 + C_2e^{-c_3n\epsilon_{n,\beta}^2}.
    \end{align*}
    Here, $(i)$ follows from Jensen's inequality, $(ii)$ follows by Fubini's theorem, $(iii)$ follows from Lemma~\ref{lemma:properties-of-gaussian-regression}\emph{(d)}, $(iv)$ follows by splitting the integral into regions where $d_{\mathrm{H}}(P,P_0)< M\epsilon_{n,\beta}$ and $d_{\mathrm{H}}(P,P_0)\geq M\epsilon_{n,\beta}$ and $(v)$ follows from Proposition~\ref{prop:exponential-posterior-contraction} where the sub-Gamma assumption is verified in Lemma~\ref{lemma:properties-of-gaussian-regression}\emph{(c)}.
\end{proof}

\section{Proofs for Section~\ref{sec:adaptive-icl}}
\subsection{Proof of Proposition~\ref{prop:posterior-regression-function-wavelet-prior}} \label{sec:proof-of-posterior-contraction-holder}

We begin with two preliminary lemmas in the setting of Section~\ref{sec:holder-regression}.  Suppose that $g^\circ \in \mathcal{G}_{\beta}$ has wavelet decomposition
\begin{align}
    g^\circ = \sum_{k\in K} C_0 2^{-\ell_0 d/2} a_k^\circ \Phi_k + \sum_{\ell=\ell_0}^{\infty} \sum_{\gamma\in\Gamma_{\ell}} C_0 2^{-\ell(\beta + d/2)} b_{\ell,\gamma}^\circ \Psi_{\ell,\gamma}, \label{eq:def-g0}
\end{align}
where $(a_k^\circ)_{k\in K}$ and $(b_{\ell,\gamma}^\circ)_{\ell\in[\ell_0:\infty),\gamma\in\Gamma_{\ell}}$ are deterministic and satisfy $|a^\circ_k| \leq1$ and $|b^\circ_{\ell,\gamma}| \leq 1$ for $k\in K$, $\ell\in[\ell_0:\infty)$ and $\gamma\in\Gamma_{\ell}$.
Lemma~\ref{lemma:prior-mass-lb} below provides a lower bound on the prior mass; similar results for the one-dimensional case exist in the literature, e.g.~\citet[Proposition~1]{gine2011rates} and \citet[Lemma~4.1]{reiss2020posterior}.

\begin{lemma} \label{lemma:prior-mass-lb}
    Let $n\in\mathbb{N}$ and assume that~\eqref{eq:def-g0} holds. Then there exists $C>0$, depending only on $\beta,d,c_0,C_0$ and $S$, such that
    \begin{align}
        \tilde{\pi}_{\beta} \bigl(\bigl\{g\in \mathcal{G}_{\beta} : \|g-g^{\circ}\|_{\infty}^2 \leq C n^{-2\beta/(2\beta+d)} \bigr\}\bigr) \geq \exp\bigl(-C n^{d/(2\beta+d)} \bigr). \label{eq:prior-mass-lb-func}
    \end{align}
    Moreover, there exists $C' > 0$, depending only on $\beta,d,c_0,C_0,S$ and $\sigma$, such that 
    \begin{align}
        \pi_{\beta}\bigl(B_{\beta}(P_{g^{\circ}}, \epsilon_{n,\beta})\bigr) \geq \exp\bigl(-C' n \epsilon_{n,\beta}^2 \bigr), \label{eq:prior-mass-lb-dist}
    \end{align}
    where $\epsilon_{n,\beta} \coloneqq C'n^{-\beta/(2\beta+d)}$ and $B_{\beta}(P_{g^{\circ}}, \epsilon_{n,\beta})$ is defined by~\eqref{eq:B-ball}.
\end{lemma}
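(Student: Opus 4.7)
The plan is to prove the small-ball bound~\eqref{eq:prior-mass-lb-func} first, then deduce~\eqref{eq:prior-mass-lb-dist} from it via standard Kullback--Leibler and $\mathrm{V}_2$ computations for Gaussian regression. Specifically, with Gaussian noise of variance $\sigma^2$ and regression functions bounded by $R$ in absolute value, one has $\mathrm{KL}(P_g, P_{g'}) \leq \|g-g'\|_\infty^2/(2\sigma^2)$ and $\mathrm{V}_2(P_g, P_{g'}) \lesssim \|g-g'\|_\infty^2$ (the kind of bound supplied by Lemma~\ref{lemma:properties-of-gaussian-regression}), so the small-ball event $\{\|g-g^\circ\|_\infty^2 \leq C n^{-2\beta/(2\beta+d)}\}$ is contained in a Kullback--Leibler ball $B_\beta(P_{g^\circ},\epsilon_{n,\beta})$ for $\epsilon_{n,\beta}^2 \asymp n^{-2\beta/(2\beta+d)}$; since $n\epsilon_{n,\beta}^2 \asymp n^{d/(2\beta+d)}$, the two exponents match and~\eqref{eq:prior-mass-lb-dist} follows from~\eqref{eq:prior-mass-lb-func}.

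For~\eqref{eq:prior-mass-lb-func} itself, I would split the wavelet expansion of a sample $g \sim \tilde{\pi}_\beta$ at a cut-off resolution $L$ with $2^L \asymp n^{1/(2\beta+d)}$. The finite-overlap property~\eqref{eq:number-of-non-zero-wavelets}, combined with $\|\Psi_{\ell,\gamma}\|_\infty \lesssim 2^{\ell d/2}$ from~\eqref{eq:ell-infty-norm-wavelet-functions}, makes the high-resolution tail of $g - g^\circ$ deterministically small: since every wavelet coefficient of $g$ and $g^\circ$ at resolution $\ell$ has magnitude at most $C_0 \cdot 2^{-\ell(\beta+d/2)}$, the tail is bounded in $L^\infty$ by $O\bigl(\sum_{\ell > L} 2^{-\ell\beta}\bigr) = O(2^{-L\beta}) = O(n^{-\beta/(2\beta+d)})$. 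For the low-frequency part (resolutions $\ell_0 \leq \ell \leq L$), if $|a_k^{(\beta)} - a_k^\circ| \leq \delta_{\ell_0}$ for all $k \in K$ and $|b_{\ell,\gamma}^{(\beta)} - b_{\ell,\gamma}^\circ| \leq \delta_\ell$ for all $\gamma \in \Gamma_\ell$ and all $\ell$, the same tools give
\[
\|g - g^\circ\|_\infty \lesssim \delta_{\ell_0} + \sum_{\ell=\ell_0}^{L} 2^{-\ell\beta}\, \delta_\ell.
\]

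The crux is to choose the level-dependent radii $\delta_\ell$ so that the joint probability is as large as possible. I would take $\delta_\ell \coloneqq c_1 \cdot 2^{(\ell-L)\eta} \cdot 2^{\ell\beta} \cdot n^{-\beta/(2\beta+d)}$ for a small constant $c_1 > 0$ and some $\eta > 0$, so that $2^{-\ell\beta}\delta_\ell = c_1 \cdot 2^{(\ell-L)\eta} \cdot n^{-\beta/(2\beta+d)}$ forms a geometric sequence summing to $O(n^{-\beta/(2\beta+d)})$, while $\delta_L \asymp c_1$ ensures $\delta_\ell \leq 1$ throughout. Using the density lower bound $c_0/2$ on each random coefficient and independence, I would obtain
\[
\tilde{\pi}_\beta\bigl(\|g-g^\circ\|_\infty^2 \leq C n^{-2\beta/(2\beta+d)}\bigr) \geq (c_0\delta_{\ell_0}/2)^{|K|} \prod_{\ell=\ell_0}^{L} (c_0\delta_\ell/2)^{|\Gamma_\ell|}.
\]
Taking logs, noting that my choice yields $\log\delta_\ell = \log c_1 - (L-\ell)(\beta+\eta)\log 2$ and $|\Gamma_\ell| \lesssim 2^{\ell d}$, the dominant term becomes $-(\beta+\eta)\log 2 \cdot \sum_{\ell=\ell_0}^{L} 2^{\ell d}(L-\ell)$. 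Substituting $j = L - \ell$ gives $-\mathrm{const}\cdot 2^{Ld}\sum_{j\geq 0} j\,2^{-jd}$; the series converges for $d \geq 1$, so the total log-probability is $\asymp -2^{Ld} = -n^{d/(2\beta+d)}$, as required.

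The main obstacle is precisely this level-dependent tuning of the $\delta_\ell$. A uniform choice $\delta_\ell \equiv n^{-\beta/(2\beta+d)}$ would produce the weaker estimate $\exp\bigl(-C n^{d/(2\beta+d)} \log n\bigr)$, because the finest resolution carries $\asymp 2^{Ld}$ coefficients, each paying a probability cost of order $|\log\delta| = \Theta(\log n)$. Allowing $\delta_\ell$ to grow geometrically with $\ell$ exploits the prefactor $2^{-\ell(\beta+d/2)}$ at fine scales: large errors in fine-scale coefficients still translate to small $L^\infty$-perturbations of $g$, so they need not be tightly controlled, and the total probability cost compresses to the desired order $n^{d/(2\beta+d)}$ without the logarithmic factor.
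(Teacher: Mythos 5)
Your proposal is correct and follows essentially the same route as the paper's proof: truncate the wavelet expansion at $2^L\asymp n^{1/(2\beta+d)}$, bound the tail deterministically, impose level-dependent coefficient radii decaying geometrically in $L-\ell$ (the paper takes $2\cdot 2^{-2(L-\ell)\beta}$, which is your choice with $\eta=\beta$) so that the log-cost $\sum_\ell 2^{\ell d}(L-\ell)\asymp 2^{Ld}$ carries no $\log n$ factor, and then transfer the sup-norm small ball to the Kullback--Leibler/$\mathrm{V}_2$ ball via the Gaussian-regression bounds of Lemma~\ref{lemma:properties-of-gaussian-regression}. No substantive differences to report.
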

\begin{proof}
    Throughout the proof, $C_1,C_2,\ldots$ will denote positive quantities that may depend only on $\beta,d,c_0,C_0$ and $S$, while $C_1',C_2',\ldots$ may in addition depend on $\sigma$.  Let $\tilde{g}_{\beta}$ be given by~\eqref{eq:random-g-alpha} so that $\tilde{g}_{\beta}\sim\tilde{\pi}_{\beta}$, and write $a_k\equiv a_k^{(\beta)}$ and $b_{\ell,\gamma} \equiv b_{\ell,\gamma}^{(\beta)}$ where the random variables $(a_k^{(\beta)})$ and $(b_{\ell,\gamma}^{(\beta)})$ satisfy the conditions below~\eqref{eq:random-g-alpha}. By~\eqref{eq:ell-infty-norm-wavelet-functions} and~\eqref{eq:number-of-non-zero-wavelets}, writing $L\coloneqq \bigl\lceil \frac{\log_2 n}{2\beta+d} \bigr\rceil$ (and assuming for now that $L \geq \ell_0$), we have
    \begin{align}
        \|\tilde{g}_{\beta}-g^{\circ}\|_{\infty} 
        &\leq C_1 \max_{k\in K} |a_k-a^\circ_k| + C_1\sum_{\ell=\ell_0}^{L} 2^{-\ell\beta} \max_{\gamma\in\Gamma_{\ell}} |b_{\ell,\gamma} - b_{\ell,\gamma}^{\circ}| + 2C_0C_1\sum_{\ell=L+1}^{\infty} 2^{-\ell\beta}\nonumber\\
        &\leq C_1 \max_{k\in K} |a_k-a^\circ_k| + C_1\sum_{\ell=\ell_0}^{L} 2^{-\ell\beta} \max_{\gamma\in\Gamma_{\ell}} |b_{\ell,\gamma} - b_{\ell,\gamma}^{\circ}| + C_3 n^{-\beta/(2\beta+d)}. \label{eq:ub-g-g0}
    \end{align}
    Define the events
    \begin{align*}
        \mathcal{E}_1 &\coloneqq \bigcap_{k\in K} \Bigl\{ |a_k-a^\circ_k| \leq 2 \cdot n^{-\beta/(2\beta+d)} \Bigr\},\\
        \mathcal{E}_2 &\coloneqq \bigcap_{\ell=\ell_0}^L \bigcap_{\gamma\in\Gamma_{\ell}} \Bigl\{ |b_{\ell,\gamma} - b_{\ell,\gamma}^{\circ}| \leq 2 \cdot 2^{-2(L-\ell)\beta}\Bigr\}.
    \end{align*}
    Then, on the event $\mathcal{E}_1\cap \mathcal{E}_2$, we have by~\eqref{eq:ub-g-g0} that
    \begin{align*}
        \|\tilde{g}_{\beta}-g^{\circ}\|_{\infty} &\leq 2C_1 n^{-\beta/(2\beta+d)} + 2C_1 2^{-L\beta} \sum_{\ell=\ell_0}^L 2^{-(L-\ell)\beta} + C_3 n^{-\beta/(2\beta+d)} \leq C_4 n^{-\beta/(2\beta+d)}.
    \end{align*}
    Therefore,
    \begin{align}
        \mathbb{P}\bigl( \|\tilde{g}_{\beta}-g^{\circ}\|_{\infty} \leq C_4 n^{-\beta/(2\beta+d)} \bigr) &\geq \mathbb{P}(\mathcal{E}_1 \cap \mathcal{E}_2) = \mathbb{P}(\mathcal{E}_1) \mathbb{P}(\mathcal{E}_2). \label{eq:lb-pi-bar-alpha}
    \end{align}
    Now, by~\eqref{eq:number-of-wavelets}, we have
    \begin{align}
        \mathbb{P}(\mathcal{E}_1) \geq \bigl(c_0 n^{-\beta/(2\beta+d)} \bigr)^{2^{\ell_0 d}} = \exp(-C_5 \log n). \label{eq:P-E1}
    \end{align}
    Similarly,
    \begin{align}
       \mathbb{P}(\mathcal{E}_2) >  \prod_{\ell=\ell_0}^L \bigl( c_0 2^{-2(L-\ell)\beta} \bigr)^{2^{(\ell+1)d}} &= \exp\biggl( (\log c_0) \sum_{\ell=\ell_0}^L 2^{(\ell+1)d} -(2^{d+1}\log 2) \beta \sum_{\ell=\ell_0}^L (L-\ell)2^{\ell d} \biggr)\nonumber\\
       &\geq \exp(-C_6 n^{d/(2\beta+d)}), \label{eq:P-E2}
    \end{align}
    where the final inequality follows since if we set $A\coloneqq \sum_{\ell=\ell_0}^L (L-\ell)2^{\ell d}$, then $2^dA-A \leq \sum_{\ell=0}^L 2^{\ell d}\leq 2^{(L+1)d}/(2^d-1)$, so $A\leq 2^{(L+1)d}/(2^d-1)^2$.
    The conclusion~\eqref{eq:prior-mass-lb-func} for $n$ large enough that $L \geq \ell_0$ follows from~\eqref{eq:lb-pi-bar-alpha}, \eqref{eq:P-E1} and~\eqref{eq:P-E2}.  By increasing $C > 0$, depending only on $\beta,d,c_0,C_0$ and $S$, we can then ensure that~\eqref{eq:prior-mass-lb-func} holds for all $n \in \mathbb{N}$.

    For~\eqref{eq:prior-mass-lb-dist}, note that by Lemma~\ref{lemma:properties-of-gaussian-regression}\emph{(a)} and~\emph{(b)}, we have that if $g\in\mathcal{G}_{\beta}$ satisfies $\|g - g^{\circ}\|_{\infty}^2 \leq Cn^{-2\beta/(2\beta+d)}$, then 
    \begin{align*}
        \mathrm{KL}(P_{g^{\circ}}, P_g) \leq C_1'Cn^{-2\beta/(2\beta+d)} \quad\text{and}\quad \mathrm{V}_2(P_{g^{\circ}}, P_g) \leq C_1'Cn^{-2\beta/(2\beta+d)}.
    \end{align*}
    Writing $\epsilon_{n,\beta} \coloneqq \sqrt{C_1'C} \cdot n^{-\beta/(2\beta+d)}$, we deduce from~\eqref{eq:prior-mass-lb-func} that
    \begin{align*}
        \pi_{\beta}\bigl(B_{\beta}(P_{g^\circ}, \epsilon_{n,\beta})\bigr) \geq \exp\bigl(-Cn^{d/(2\beta+d)}\bigr) \geq \exp(-C_2'n \epsilon_{n,\beta}^2).
    \end{align*}
    Thus~\eqref{eq:prior-mass-lb-dist} holds with $C'\coloneqq \sqrt{C_1'C} \vee C_2'$.
\end{proof}

Lemma~\ref{lemma:prior-mass-ub} below provides an upper bound on the prior mass.
\begin{lemma} \label{lemma:prior-mass-ub}
    Let $\alpha\in\mathcal{A}$, $m>0$ and assume that~\eqref{eq:def-g0} holds. Suppose that $P_X$ is a Borel distribution on $[0,1]^d$ with the property that there exist $c_0>0$ and a hypercube $A\subseteq[0,1]^d$ of side length $\tau>0$ such that $P_X(A_0) \geq c_0\mathrm{Vol}_d(A_0)$ for all measurable $A_0\subseteq A$. Then there exist $c \equiv c(\alpha,d,c_0,C_0,\tau) > 0$ and $n_0 \equiv n_0(\alpha,d,c_0,C_0,S,m,\tau) \in \mathbb{N}$  such that 
    \begin{align}
        \tilde{\pi}_{\alpha}\Bigl(\Bigl\{g\in \mathcal{G}_{\alpha} : \|g-g^{\circ}\|_{L^2(P_X)}^2 \leq \frac{c_0}{4}m^2n^{-2\alpha/(2\alpha+d)} \Bigr\}\Bigr) \leq \exp\bigl( -c m^{-d/\alpha} n^{d/(2\alpha+d)} \bigr)\label{eq:prior-mass-ub-func}
    \end{align}
    for $n \geq n_0$.  Moreover, there exists $c'>0$, depending only on $\alpha,\beta,d,c_0,C_0,S,\tau$ and $\sigma$, such that for $n\geq n_0$,
    \begin{align}
        \pi_{\alpha}\bigl(B'_{\alpha}(P_{g^{\circ}}, c'mn^{-\alpha/(2\alpha+d)})\bigr) \leq \exp\bigl( -c m^{-d/\alpha} n^{d/(2\alpha+d)} \bigr) \label{eq:prior-mass-ub-dist}
    \end{align}
    where $B'_{\alpha}(P_{g^{\circ}}, c'mn^{-\alpha/(2\alpha+d)})$ is defined as in~\eqref{eq:C-ball}.
\end{lemma}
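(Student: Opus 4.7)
The plan is to reduce the bound on the prior mass of a small $L^2(P_X)$-ball around $g^\circ$ to an anti-concentration estimate for a single block of wavelet coefficients at a carefully chosen resolution $L = L(n,m)$.  Since the density of $P_X$ with respect to Lebesgue measure on $A$ is at least $c_0$, we have $\|g-g^\circ\|_{L^2(P_X)}^2 \geq c_0 \|g-g^\circ\|_{L^2(A)}^2$, so it suffices to bound the prior mass of $\{g \in \mathcal{G}_\alpha : \|g-g^\circ\|_{L^2(A)}^2 \leq m^2 n^{-2\alpha/(2\alpha+d)}/4\}$.  Let $\Gamma_L^A \subseteq \Gamma_L$ denote the indices $\gamma$ for which $\Psi_{L,\gamma}$ is supported inside $A$; by the support description of the CDV wavelets in Section~\ref{subsec:wavelets}, $|\Gamma_L^A| \geq c(\tau,d)\,2^{Ld}$ provided $\tau 2^L$ is sufficiently large.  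Crucially, the collection $\{\Psi_{L,\gamma} : \gamma \in \Gamma_L^A\}$ is orthonormal in $L^2(A)$ (it is orthonormal in $L^2([0,1]^d)$ and its members have support in $A$), so the orthogonal projection onto its span in $L^2(A)$ does not increase the $L^2(A)$-norm; computing the inner products, which coincide with the corresponding $L^2([0,1]^d)$-inner products, yields
\begin{align*}
    C_0^2\, 2^{-2L(\alpha+d/2)} \sum_{\gamma \in \Gamma_L^A} \bigl(b_{L,\gamma}^{(\alpha)} - 2^{L(\alpha-\beta)}b_{L,\gamma}^\circ\bigr)^2 \leq \|\tilde{g}_\alpha - g^\circ\|_{L^2(A)}^2.
\end{align*}

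I then choose $L$ so that $2^L \asymp K m^{-1/\alpha} n^{1/(2\alpha+d)}$ for a small constant $K>0$ to be fixed later.  With this choice, the event in question forces the random vector $(b_{L,\gamma}^{(\alpha)})_{\gamma \in \Gamma_L^A}$ to lie in a Euclidean ball in $\mathbb{R}^{N}$ (with $N = |\Gamma_L^A|$) of squared radius of order $K^{2\alpha}\, 2^{Ld}/C_0^2 \lesssim K^{2\alpha} N/c(\tau,d)$.  Since the coordinates of this vector are independent with Lebesgue density bounded by $c_0^{-1}/2$ on $[-1,1]$, its distribution has density at most $(c_0^{-1}/2)^N$ on $\mathbb{R}^N$, so the probability of the event is at most $(c_0^{-1}/2)^N$ times the volume of the ball.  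Using $\Gamma(N/2+1) \geq (N/(2e))^{N/2}$, the latter volume is of order $(\mathrm{const}\cdot K^{2\alpha})^{N/2}$, and the overall bound takes the form $\widetilde{C}(K)^N$ with $\widetilde{C}(K) \to 0$ as $K \to 0$.  Fixing $K$ small enough, depending only on $\alpha, d, c_0, C_0$ and $\tau$, to ensure $\widetilde{C}(K) \leq 1/e$, gives $\exp(-N) \leq \exp(-c\,m^{-d/\alpha} n^{d/(2\alpha+d)})$, which proves~\eqref{eq:prior-mass-ub-func}.

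The passage from~\eqref{eq:prior-mass-ub-func} to~\eqref{eq:prior-mass-ub-dist} uses the relationship between Hellinger distance and $L^2(P_X)$ distance under Gaussian noise recorded in Lemma~\ref{lemma:properties-of-gaussian-regression}: there exists $C''>0$, depending only on $R$ and $\sigma$, such that $\|g-g^\circ\|_{L^2(P_X)}^2 \leq C''\,d_{\mathrm{H}}^2(P_g, P_{g^\circ})$.  Setting $c' \coloneqq (c_0/(4C''))^{1/2}$ makes the Hellinger condition $d_{\mathrm{H}}(P_g, P_{g^\circ}) \leq c'\,m n^{-\alpha/(2\alpha+d)}$ imply $\|g-g^\circ\|_{L^2(P_X)}^2 \leq (c_0/4)\,m^2 n^{-2\alpha/(2\alpha+d)}$, whence~\eqref{eq:prior-mass-ub-dist} follows from~\eqref{eq:prior-mass-ub-func}.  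The main technical delicacy is the constant bookkeeping needed to ensure that the base $\widetilde{C}(K)$ in the volume-density bound is genuinely below~$1$; this pins down the smallness of $K$ and determines the dependence of $n_0$ on $m$, which enters through the requirement that $\tau 2^L$ be large enough for $|\Gamma_L^A|$ to be comparable to $2^{Ld}$ and for $L \geq \ell_0$, both of which become more demanding as $m$ increases.
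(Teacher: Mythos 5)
Your proposal is correct, but the core anti-concentration step is argued differently from the paper. The paper first re-centres at an arbitrary $g'\in\mathcal{G}_{\alpha}$ lying in the small ball (so that both sets of coefficients at level $L$ live in $[-1,1]$), then lower-bounds $\mathbb{E}\|b_L-b_L'\|_2^2$ using the density \emph{lower} bound $c_0/2$ (variance at least $c_0/3$ per coordinate), and finally applies Hoeffding's inequality to the lower tail of $\|b_L-b_L'\|_2^2$; you instead bound the small-ball probability directly by $(\sup\text{-density})^{N}\times\mathrm{Vol}_N(\text{Euclidean ball})$, using only the density \emph{upper} bound $c_0^{-1}/2$ together with $\Gamma(N/2+1)\geq (N/(2e))^{N/2}$, after restricting to the $N=|\Gamma_L^A|\gtrsim_{\tau,d} 2^{Ld}$ wavelets supported in $A$ and using the same Bessel-type projection inequality (the paper's Lemma~\ref{lemma:L2-subset-lb}). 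Your route is arguably more elementary and more robust: because a density-times-volume bound is uniform over the centre of the ball, you never need the re-centring device, and the possibly large rescaled coefficients $2^{L(\alpha-\beta)}b^{\circ}_{L,\gamma}$ of $g^\circ\in\mathcal{G}_\beta$ cause no difficulty; the trade-off is that the paper's Hoeffding argument gives the constant bookkeeping almost for free, whereas you must tune the constant $K$ in $2^L\asymp Km^{-1/\alpha}n^{1/(2\alpha+d)}$ so that the base $\widetilde{C}(K)$ drops below $e^{-1}$ (which you do, with $K$ depending only on $\alpha,d,c_0,C_0,\tau$, so the exponent constant $c$ is $m$-free as required). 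The choice of resolution level, the role of $n_0(m)$ (ensuring $L\geq\ell_0$ and $\tau 2^L$ large), and the passage from~\eqref{eq:prior-mass-ub-func} to~\eqref{eq:prior-mass-ub-dist} via Lemma~\ref{lemma:properties-of-gaussian-regression}\emph{(d)} coincide with the paper's treatment.
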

\begin{proof}
    Throughout the proof, $C_1,C_2,\ldots$ will denote positive quantities that may depend only on $\alpha,d,c_0,C_0$ and $\tau$, while $C_1',C_2',\ldots$ may depend in addition on $\beta,S$ and $\sigma$. Let $\epsilon\coloneqq n^{-\alpha/(2\alpha+d)}$.  Fix $g'\in\mathcal{G}_{\alpha}$ such that $\|g'-g^\circ\|_{L^2(P_X)}^2 \leq \frac{c_0}{4}m^2 \epsilon^2$; we may assume without loss of generality that such a~$g'$ exists, since otherwise the left-hand side of~\eqref{eq:prior-mass-ub-func} would be zero. Suppose that $g'$ has wavelet decomposition
    \begin{align*}
        g' = \sum_{k\in K} C_0 2^{-\ell_0 d/2} a_k' \Phi_k + \sum_{\ell=\ell_0}^{\infty} \sum_{\gamma\in\Gamma_{\ell}} C_0 2^{-\ell(\alpha + d/2)} b_{\ell,\gamma}' \Psi_{\ell,\gamma},
    \end{align*}
    where $|a_k'| \leq 1$ and $|b_{\ell,\gamma}'| \leq 1$. Note that if $g\in \mathcal{G}_{\alpha}$ satisfies $\|g-g^{\circ}\|_{L^2(P_X)}^2 \leq \frac{c_0}{4}m^2\epsilon^2$, then $\|g-g'\|_{L^2(P_X)}^2 \leq 2\|g-g^\circ\|_{L^2(P_X)}^2 + 2\|g'-g^\circ\|_{L^2(P_X)}^2 \leq c_0m^2\epsilon^2$. Thus,
    \begin{align}
        \tilde{\pi}_{\alpha}\Bigl(\Bigl\{g\in \mathcal{G}_{\alpha} : \|g &-g^{\circ}\|_{L^2(P_X)}^2 \leq \frac{c_0}{4}m^2\epsilon^2 \Bigr\}\Bigr) \leq \tilde{\pi}_{\alpha}\Bigl(\Bigl\{g\in \mathcal{G}_{\alpha} : \|g-g'\|_{L^2(P_X)}^2 \leq c_0m^2\epsilon^2 \Bigr\}\Bigr). \label{eq:center-at-g'}
    \end{align}
    Let $\tilde{g}_{\alpha}\sim\tilde{\pi}_{\alpha}$, and write $a_k\equiv a_k^{(\alpha)}$ and $b_{\ell,\gamma} \equiv b_{\ell,\gamma}^{(\alpha)}$ with representation~\eqref{eq:random-g-alpha}, where the random variables $(a_k^{(\alpha)})$ and $(b_{\ell,\gamma}^{(\alpha)})$ satisfy the conditions below~\eqref{eq:random-g-alpha}.  Further, let $C_1\coloneqq \lceil \log_2(2/\tau) \rceil$. Then, by the assumption on $P_X$, there exists $C_2\in[0:2^{C_1}-1]$ such that $\bigl[\frac{C_2}{2^{C_1}}, \frac{C_2+1}{2^{C_1}}\bigr]^d \subseteq A$.   Choose $n_0 \equiv n_0(\alpha,d,c_0,C_0,S,m,\tau) \in \mathbb{N}$ large enough that $L\coloneqq \bigl\lfloor \frac{1}{2\alpha} \log_2\bigl(\frac{c_0C_0^2}{6m^2\epsilon^2 2^{C_1d}}\bigr) \bigr\rfloor$ is at least $\ell_0\vee C_1$ for $n \geq n_0$.  Assume henceforth that $n \geq n_0$.  Let $\Gamma_L' \coloneqq \bigl\{\gamma\in\Gamma_L : \mathrm{supp}(\Psi_{L,\gamma}) \subseteq \bigl[\frac{C_2}{2^{C_1}}, \frac{C_2+1}{2^{C_1}}\bigr]^d\bigr\}$, $b_L \coloneqq (b_{L,\gamma})_{\gamma\in\Gamma_L'}$ and $b_L' \coloneqq (b_{L,\gamma}')_{\gamma\in\Gamma_L'}$. Since $\mathrm{supp}(\Psi_{L,\gamma})$ is contained in a dyadic cube of side length $2^{-L}$ (see Section~\ref{subsec:wavelets}), we deduce that $|\Gamma_L'| \geq 2^{(L-C_1)d}$ and that
    \begin{align*}
        C_0^2 2^{-L(2\alpha + d)}\mathbb{E}\bigl(\|b_L - b_L'\|_2^2\bigr) &\geq C_0^2 2^{-L(2\alpha + d)} \mathrm{tr}\bigl(\mathrm{Cov}(b_L)\bigr)\\
        &\geq |\Gamma_L'| C_0^2 2^{-L(2\alpha+d)} \cdot \frac{c_0}{3} \geq \frac{c_0C_0^2 2^{-2\alpha L - C_1d}}{3} \geq 2m^2\epsilon^2.
    \end{align*}
    Therefore, by the assumption on $P_X$ and Lemma~\ref{lemma:L2-subset-lb}, we have
    \begin{align}
        \mathbb{P}\bigl(\|\tilde{g}_{\alpha}-g'\|_{L^2(P_X)}^2  \leq c_0&m^2\epsilon^2 \bigr) \leq \mathbb{P}\bigl(c_0\|\tilde{g}_{\alpha}-g'\|_{L^2([C_2 2^{-C_1}, (C_2+1)2^{-C_1}]^d)}^2 \leq c_0m^2\epsilon^2 \bigr)\nonumber\\
        &\leq \mathbb{P}\biggl\{\|b_L - b_L'\|_2^2 - \mathbb{E}\bigl(\|b_L - b_L'\|_2^2\bigr) \leq \frac{m^2 \epsilon^2}{C_0^2 2^{-L(2\alpha + d)}} - \mathbb{E}\bigl(\|b_L - b_L'\|_2^2\bigr) \biggr\} \nonumber\\
        &\leq \mathbb{P}\biggl\{\|b_L - b_L'\|_2^2 - \mathbb{E}\bigl(\|b_L - b_L'\|_2^2\bigr) \leq -\frac{m^2 \epsilon^2}{C_0^2 2^{-L(2\alpha + d)}} \biggr\}. \label{eq:prior-mass-ub-1}
    \end{align}
    Since $(b_{L,\gamma} - b_{L,\gamma}')^2 \in [0, 4]$ for $\gamma\in\Gamma_L'$, we have by Hoeffding's inequality and~\eqref{eq:number-of-wavelets} that
    \begin{align}
        \mathbb{P}\biggl\{\|b_L &- b_L'\|_2^2 - \mathbb{E}\bigl(\|b_L - b_L'\|_2^2\bigr) \leq -\frac{m^2 \epsilon^2}{C_0^2 2^{-L(2\alpha + d)}} \biggr\} \leq \exp\biggl(-\frac{m^4\epsilon^4}{8C_0^4 2^{-(4\alpha+d) L - C_1d}}\biggr) \nonumber\\
        &\leq \exp\biggl\{ -\frac{c_0^2}{288 \cdot 2^{4\alpha+(C_1+1)d}}\biggl(\frac{c_0C_0^2}{6m^2\epsilon^2 2^{C_1d}}\biggr)^{d/(2\alpha)} \biggr\} \eqqcolon \exp\bigl( -c m^{-d/\alpha} n^{d/(2\alpha+d)} \bigr), \label{eq:prior-mass-ub-2}
    \end{align}
    where $c>0$ depends only on $\alpha,d,c_0,C_0$ and $\tau$. Combining~\eqref{eq:center-at-g'},~\eqref{eq:prior-mass-ub-1} and~\eqref{eq:prior-mass-ub-2} yields~\eqref{eq:prior-mass-ub-func}.

    For the second claim, note that by~\eqref{eq:ell-infty-norm-wavelet-functions} and~\eqref{eq:number-of-non-zero-wavelets},
    \begin{align}
        \|g^\circ\|_{\infty} \leq C_3 \max_{k\in K} |a_k^\circ| + C_3\sum_{\ell=\ell_0}^\infty 2^{-\ell\beta} \max_{\gamma\in\Gamma_\ell} |b_{\ell,\gamma}^\circ| \leq C_1', \label{eq:ell-infty-norm-g-circ}
    \end{align}
    and similarly, $\|\tilde{g}_{\alpha}\|_{\infty} \leq C_2'$ almost surely. Thus, if $d_{\mathrm{H}}(P_{g^{\circ}},P_g) \leq c'mn^{-\alpha/(2\alpha+d)}$ for some $c'>0$, then by Lemma~\ref{lemma:properties-of-gaussian-regression}\emph{(d)}, 
    \begin{align*}
        \|g-g^{\circ}\|_{L^2(P_X)}^2 \leq  C_3' d_{\mathrm{H}}^2(P_{g^{\circ}},P_g) \leq C_3'(c')^2 m^2 n^{-2\alpha/(2\alpha+d)}.
    \end{align*}
    Therefore, taking $c'\coloneqq \sqrt{c_0 /(4C_3')}$ and applying~\eqref{eq:prior-mass-ub-func} proves~\eqref{eq:prior-mass-ub-dist}.
\end{proof}

We are now in a position to prove Proposition~\ref{prop:posterior-regression-function-wavelet-prior}.
\begin{proof}[Proof of Proposition~\ref{prop:posterior-regression-function-wavelet-prior}]
    Throughout the proof, $C_1,C_2,\ldots$ will denote positive quantities that do not depend on $n$. By an argument similar to~\eqref{eq:ell-infty-norm-g-circ}, there exists $C_1>0$ such that $\|g^\circ\|_{\infty} \leq C_1$ and $\|g\|_{\infty} \leq C_1$ for all $g\in\mathcal{G}$.
    We now verify the assumptions of Corollary~\ref{cor:posterior-regression-function-concentration} with $H=1$. For each $\alpha\in\mathcal{A}$, let $C'>0$ (which does not depend on $n$) be the quantity in Lemma~\ref{lemma:prior-mass-lb} such that~\eqref{eq:prior-mass-lb-dist} holds, and let $\epsilon_{n,\alpha}\coloneqq C'n^{-\alpha/(2\alpha+d)}$ for $\alpha\in\mathcal{A}$. Then by Lemma~\ref{lemma:prior-mass-lb}, condition~\ref{eq:prior-mass-condition-beta} holds with $F=C'$. By Lemma~\ref{lemma:properties-of-gaussian-regression}\emph{(d)},~\eqref{eq:inclusion-G-alpha} and \citet[Eq.~(4.184)]{gine2021mathematical}, for each $\alpha\in\mathcal{A}$, we have
    \begin{align*}
        \sup_{\epsilon\geq\epsilon_{n,\alpha}} \log\mathcal{N}\bigl(\epsilon/3,\, B'_{\alpha}(P_0,2\epsilon),\, d_{\mathrm{H}}\bigr) &\leq \log \mathcal{N}\bigl(\epsilon_{n,\alpha}/3,\, \mathcal{P}_{\alpha},\, d_{\mathrm{H}}\bigr)\\
        &\leq \log \mathcal{N}\bigl(2\sigma \epsilon_{n,\alpha}/3,\, \mathcal{G}_{\alpha},\, \|\cdot\|_{\infty}\bigr) \\
        &\leq \log \mathcal{N}\bigl(2\sigma \epsilon_{n,\alpha}/3,\, B^{\alpha}_{\infty,\infty}([0,1]^d,2C_0),\, \|\cdot\|_{\infty}\bigr) \\
        &\leq E_{\alpha}n\epsilon_{n,\alpha}^2,
    \end{align*}
    for some $E_{\alpha}>0$ not depending on $n$; this verifies condition~\ref{eq:entropy-condition}. Moreover, since $\mathcal{A}$ is finite, condition~\ref{eq:mixture-weight-condition} is satisfied with $\mu_{n,\alpha}=1$ for all $\alpha\in\mathcal{A}$ when $n$ is sufficiently large. Similarly, condition~\ref{eq:E-condition} is satisfied for some $E>0$ not depending on $n$.  Finally, take $M>0$ such that $M^2 > 243(F+1)(HE+1)$. Letting $c,c'>0$ be the quantities in Lemma~\ref{lemma:prior-mass-ub} and applying Lemma~\ref{lemma:prior-mass-ub} with $m\coloneqq MC'/c'$, we deduce that for $n$ sufficiently large,
    \begin{align*}
        \sum_{\alpha\in\mathcal{A}_{<\beta}} \frac{\lambda_{\alpha}}{\lambda_{\beta}} \cdot \pi_{\alpha}\bigl(B'_{\alpha}(P_0,M\epsilon_{n,\alpha})\bigr) \leq \sum_{\alpha\in\mathcal{A}: \alpha<\beta} \frac{\lambda_{\alpha}}{\lambda_{\beta}} \cdot \exp\bigl(-c m^{-d/\alpha}n^{d/(2\alpha+d)}\bigr) \leq e^{-(F+3)n\epsilon_{n,\beta}^2},
    \end{align*}
    which verifies condition~\ref{eq:prior-mass-condition-alpha}. Therefore, by Corollary~\ref{cor:posterior-regression-function-concentration}, the claim follows for sufficiently large $n$, since $e^{-c_3n\epsilon_{n,\beta}^2} \leq \epsilon_{n,\beta}^2$ for $n$ large enough.  Moreover, since $g_\pi$ and $g^\circ$ are bounded, the conclusion now follows for all $n \in \mathbb{N}$ by increasing $C > 0$ in the statement if necessary.
\end{proof}

\subsection{Proof of Proposition~\ref{prop:posterior-regression-function-multi-index}}
We work in the setting of Section~\ref{sec:multi-index-regression}.  For $(\alpha,p)\in\mathcal{A}'$, we define $\mathcal{G}_{\alpha}^{(p)}$ similarly to the definition of $\mathcal{G}_{\alpha}$ in Section~\ref{sec:holder-regression} but replacing the dimensionality $d$ by $p$, i.e.~we let $\mathcal{G}_{\alpha}^{(p)}$ be the set of all functions $h_{\alpha}^{(p)}:[0,1]^p \to\mathbb{R}$ such that $h$ has wavelet decomposition 
    \begin{align*}
    h_{\alpha}^{(p)} = \sum_{k\in K} C_0 2^{-\ell_0 r/2} a_k \Phi_k^{(p)} + \sum_{\ell=\ell_0}^{\infty} \sum_{\gamma\in\Gamma_{\ell}} C_0 2^{-\ell(\beta + r/2)} b_{\ell,\gamma} \Psi_{\ell,\gamma}^{(p)}, 
    \end{align*}
where $(\Phi_k^{(p)})$ and $(\Psi_{\ell,\gamma}^{(p)})$ is the tensor product CDV wavelet basis for $L^2([0,1]^p)$, $|a_k| \leq 1$ and $|b_{\ell,\gamma}| \leq 1$. Thus, similarly to~\eqref{eq:inclusion-G-alpha}, 
\begin{align*}
    B_{\infty,\infty}^{\alpha}([0,1]^p,C_0) \subseteq \mathcal{G}_{\alpha}^{(p)} \subseteq B_{\infty,\infty}^{\alpha}([0,1]^p,2C_0).
\end{align*}
The proof of Proposition~\ref{prop:posterior-regression-function-multi-index} relies on three preliminary lemmas that we now formulate. We may write 
\begin{align}
    g^\circ(x) = h^\circ\biggl(\frac{(U^\circ)^\top x+ 1_r}{2}\biggr), \label{eq:def-g0-multi-index}
\end{align}
where $U^\circ \in V_r(\mathbb{R}^d)$ and $h^\circ\in\mathcal{G}_{\beta}^{(r)}$ has wavelet decomposition
\begin{align}
    h^\circ = \sum_{k\in K} C_0 2^{-\ell_0 r/2} a_k^\circ \Phi_k^{(r)} + \sum_{\ell=\ell_0}^{\infty} \sum_{\gamma\in\Gamma_{\ell}} C_0 2^{-\ell(\beta + r/2)} b_{\ell,\gamma}^\circ \Psi_{\ell,\gamma}^{(r)}, \label{eq:def-h0}
\end{align}
where $|a^\circ_k| \leq1$ and $|b^\circ_{\ell,\gamma}| \leq 1$. 

\begin{lemma} \label{lemma:prior-mass-lb-multi-index}
    Let $n\in\mathbb{N}$ and assume that~\eqref{eq:def-g0-multi-index} and~\eqref{eq:def-h0} hold. Then there exists $C>0$, depending only on $\beta,d,r,c_0,C_0$ and $S$, such that
    \begin{align}
        \tilde{\pi}_{\beta,r} \bigl(\bigl\{g\in \mathcal{G}_{\beta,r} : \|g-g^{\circ}\|_{\infty}^2 \leq C n^{-2\beta/(2\beta+r)} \bigr\}\bigr) \geq \exp\bigl(-C n^{r/(2\beta+r)} \bigr). \label{eq:prior-mass-lb-func-multi-index}
    \end{align}
    Moreover, there exists $C' > 0$, depending only on $\beta,d,r,c_0,C_0,S$ and $\sigma$, such that if $\epsilon_{n,\beta,r} \coloneqq C'n^{-\beta/(2\beta+r)}$, then
    \begin{align}
        \pi_{\beta,r}\bigl(B_{\beta,r}(P_{g^{\circ}}, \epsilon_{n,\beta,r})\bigr) \geq \exp\bigl(-C' n\epsilon_{n,\beta,r}^2 \bigr), \label{eq:prior-mass-lb-dist-multi-index}
    \end{align}
    where $B_{\beta,r}(P_{g^{\circ}}, \epsilon_{n,\beta,r}) \coloneqq \bigl\{ P\in\mathcal{P}_{\beta,r} : \mathrm{KL}(P_{g^{\circ}},P) \leq \epsilon_{n,\beta,r}^2,\, \mathrm{V}_2(P_{g^{\circ}},P) \leq \epsilon_{n,\beta,r}^2 \bigr\}$, similarly to~\eqref{eq:B-ball}.
\end{lemma}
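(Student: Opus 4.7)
} The strategy parallels the proof of Lemma~\ref{lemma:prior-mass-lb}, but with an additional layer accounting for the random projection matrix $U^{(r)}$. Recall from~\eqref{eq:random-g-alpha-p} that $\tilde{g}_{\beta,r}(x) = \tilde{g}_\beta^{(r)}\bigl(((U^{(r)})^\top x + \bm{1}_r)/2\bigr)$, where $U^{(r)}$ is uniform on the Stiefel manifold $V_r(\mathbb{R}^d)$ and is independent of the $p=r$ version of the random function $\tilde{g}_\beta^{(r)}$, the latter having precisely the same distributional form as $\tilde{g}_\beta$ in Section~\ref{sec:holder-regression} but with ambient dimension $r$ in place of $d$. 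As in the proof of Lemma~\ref{lemma:prior-mass-lb}, the second claim \eqref{eq:prior-mass-lb-dist-multi-index} follows from the $\|\cdot\|_\infty$-claim \eqref{eq:prior-mass-lb-func-multi-index} by Lemma~\ref{lemma:properties-of-gaussian-regression}\emph{(a),(b)} together with the uniform bound $\|\tilde{g}_{\beta,r}\|_\infty \leq \|\tilde{g}_\beta^{(r)}\|_\infty \leq C_{\beta,S,r}$, which holds irrespective of $U^{(r)}$; the work therefore lies in proving \eqref{eq:prior-mass-lb-func-multi-index}.

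Inserting the intermediate function $x \mapsto h^\circ\bigl(((U^{(r)})^\top x + \bm{1}_r)/2\bigr)$ and applying the triangle inequality yields
\begin{align*}
\|\tilde{g}_{\beta,r} - g^\circ\|_\infty \;\leq\; \|\tilde{g}_\beta^{(r)} - h^\circ\|_\infty \;+\; \sup_{\|x\|_2 \leq 1} \Bigl| h^\circ\bigl(\tfrac{(U^{(r)})^\top x + \bm{1}_r}{2}\bigr) - h^\circ\bigl(\tfrac{(U^\circ)^\top x + \bm{1}_r}{2}\bigr) \Bigr|.
\end{align*}
The first term is exactly the quantity bounded in Lemma~\ref{lemma:prior-mass-lb}, applied with the integer~$r$ in place of~$d$ throughout that lemma and its proof; this yields
\begin{align*}
\mathbb{P}\bigl(\|\tilde{g}_\beta^{(r)} - h^\circ\|_\infty^2 \leq C_1 n^{-2\beta/(2\beta+r)}\bigr) \geq \exp\bigl(-C_1 n^{r/(2\beta+r)}\bigr)
\end{align*}
for some $C_1>0$ depending only on $\beta,r,c_0,C_0,S$.

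For the second term, the inclusion $h^\circ\in \mathcal{G}_\beta^{(r)} \subseteq B_{\infty,\infty}^\beta([0,1]^r,2C_0)$ yields a Hölder-type modulus of continuity $|h^\circ(u)-h^\circ(v)| \leq C_2 \|u-v\|_2^{\beta\wedge 1}$ (with at worst a harmless logarithmic factor at $\beta=1$, which can be absorbed by slightly decreasing the exponent). Since $\|(U^{(r)}-U^\circ)^\top x\|_2 \leq \|U^{(r)}-U^\circ\|_{\mathrm{op}}$ for $\|x\|_2 \leq 1$, on the event $\|U^{(r)}-U^\circ\|_{\mathrm{op}} \leq \epsilon$ the second term is at most $C_2(\epsilon/2)^{\beta\wedge 1}$. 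Choosing $\epsilon \coloneqq C_3 n^{-\beta/[(\beta\wedge 1)(2\beta+r)]}$ makes this at most $C_1^{1/2}n^{-\beta/(2\beta+r)}$. Since $V_r(\mathbb{R}^d)$ is a compact smooth manifold of dimension $D_r \coloneqq dr - r(r+1)/2$ and the uniform distribution has positive continuous density in any local chart, a standard volumetric argument gives $\mathbb{P}(\|U^{(r)}-U^\circ\|_{\mathrm{op}} \leq \epsilon) \geq C_4 \epsilon^{D_r}$ for all sufficiently small $\epsilon$; its logarithm is only $-O(\log n)$. By independence of $U^{(r)}$ and $\tilde{g}_\beta^{(r)}$, multiplying the two lower bounds and enlarging $C$ to absorb the logarithmic term (and to handle small values of $n$) yields~\eqref{eq:prior-mass-lb-func-multi-index}.

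\paragraph{Main obstacle.} The delicate point is the quantitative interaction between the Hölder regularity of $h^\circ$ (which degrades to exponent $\beta\wedge 1$ at high smoothness) and the small-ball probability on the Stiefel manifold: one must verify that the power of $\epsilon$ forced by $h^\circ$-continuity is compatible with both the volumetric bound on $V_r(\mathbb{R}^d)$ and the target rate $\exp(-Cn^{r/(2\beta+r)})$. This works out because $V_r(\mathbb{R}^d)$ has finite intrinsic dimension $D_r$, so the projection contribution is only polynomial in $\epsilon$ and hence logarithmic in $n$, and is therefore absorbed without loss into the dominant $n^{r/(2\beta+r)}$ rate driven by the random wavelet coefficients of $\tilde{g}_\beta^{(r)}$.
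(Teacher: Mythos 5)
Your proposal is correct and follows essentially the same route as the paper: a triangle-inequality split into a wavelet part, handled by Lemma~\ref{lemma:prior-mass-lb} with $r$ in place of $d$, and a projection part, handled by the Besov modulus of continuity plus a polynomial small-ball bound on the Stiefel manifold, combined via the independence of $U^{(r)}$ and $\tilde{g}_{\beta}^{(r)}$, with the Kullback--Leibler claim then following from Lemma~\ref{lemma:properties-of-gaussian-regression}. The only cosmetic differences are that the paper applies the modulus of continuity to $\tilde{g}_{\beta}^{(r)}$ (inserting the intermediate function with $U^{\circ}$) rather than to $h^{\circ}$, retains the logarithmic factor for $\beta\geq 1$ by shrinking the Stiefel radius by $\log(en)$ instead of lowering the H\"older exponent, and proves the small-ball estimate explicitly (Lemma~\ref{eq:small-ball-prob-stiefel-manifold}, with the cruder but sufficient exponent $dr$) rather than citing a standard volumetric argument.
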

\begin{proof}
    Throughout the proof, $C_1,C_2,\ldots$ will denote positive quantities that may depend only on $\beta,d,r,c_0,C_0$ and $S$, while $C_1',C_2',\ldots$ may in addition depend on $\sigma$.
    Let $\tilde{g}_{\beta,r}(x) = \tilde{g}_{\beta}^{(r)} \bigl(\frac{(U^{(r)})^\top x+1_r}{2}\bigr)$ be defined as in~\eqref{eq:random-g-alpha-p} so that $\tilde{g}_{\beta,r}\sim \tilde{\pi}_{\beta,r}$. Observe that $B_{\infty,\infty}^{\beta}([0,1]^r,2C_0) \subseteq B_{\infty,\infty}^{\beta\wedge 1}([0,1]^r,2C_0)$ \citep[e.g.][Proposition~4.3.10(ii)]{gine2021mathematical}. When $\beta<1$, the space $B_{\infty,\infty}^{\beta\wedge 1}$ is equal to the H\"older space $H^{\beta}$ with equivalent norms \citep[e.g.][Proposition~4.3.23]{gine2021mathematical}, so for all $x,y\in[0,1]^r$ and $g\in B_{\infty,\infty}^{\beta\wedge 1}([0,1]^r,2C_0)$, we have $|g(x) - g(y)| \leq C_1\|x-y\|_2^{\beta}$; when $\beta\geq 1$, $B_{\infty,\infty}^{\beta\wedge 1}$ is equal to the Zygmund space of order $1$ with equivalent norms \citep[p.~113]{triebel1983theory}, so by \citet[Proposition~2.4]{anderson1989probabilistic} \citep[see also][Chapter~II, Theorem~3.4]{zygmund2002trigonometric}, for all $x,y\in[0,1]^r$ and $g\in B_{\infty,\infty}^{\beta\wedge 1}([0,1]^r,2C_0)$, we have $|g(x) - g(y)| \leq C_2\|x-y\|_2 \bigl\{\log(1/\|x-y\|_2) \vee 1\bigr\}$. Therefore, for all $x,y\in[0,1]^r$ and $g\in B_{\infty,\infty}^{\beta}([0,1]^r,2C_0)$,
    \begin{align}
        |g(x) - g(y)| \leq C_3\|x-y\|_2^{\beta\wedge1} \bigl\{\log(1/\|x-y\|_2) \vee 1\bigr\}. \label{eq:lipschitzness-g-beta-r}
    \end{align} 
    On the event $E\coloneqq \bigl\{\|U^{(r)}-U^\circ\|_{\mathrm{op}} \leq \bigl(n^{-\beta/(2\beta+r)}/\log (en)\bigr)^{1/(\beta\wedge 1)}\bigr\}$, we have for all $x\in\mathbb{B}^d$ that
    \begin{align}
        &|\tilde{g}_{\beta,r}(x) - g^\circ(x)|\nonumber\\
        &\leq \biggl|\tilde{g}_{\beta}^{(r)}\biggl(\frac{(U^{(r)})^\top x+1_r}{2}\biggr) - \tilde{g}_{\beta}^{(r)}\biggl(\frac{(U^\circ)^\top x+1_r}{2}\biggr) \biggr| + \biggl| \tilde{g}_{\beta}^{(r)}\biggl(\frac{(U^\circ)^\top x+1_r}{2}\biggr) - h^\circ\biggl(\frac{(U^\circ)^\top x+1_r}{2}\biggr) \biggr|\nonumber\\
        &\leq C_4 n^{-\beta/\{(2\beta+r)\}} + \|\tilde{g}_{\beta}^{(r)} - h^\circ\|_{\infty}, \label{eq:g-beta-r-ub}
    \end{align}
    where the second inequality follows from~\eqref{eq:lipschitzness-g-beta-r}, since $\tilde{g}_{\beta}^{(r)}\in \mathcal{G}_{\beta}^{(r)} \subseteq B_{\infty,\infty}^{\beta}([0,1]^r,2C_0)$ and $\bigl\|\frac{(U^{(r)})^\top x+1_r}{2} - \frac{(U^\circ)^\top x+1_r}{2}\bigr\|_2 \leq \frac{1}{2} \bigl(n^{-\beta/(2\beta+r)}/\log (en)\bigr)^{1/(\beta\wedge 1)}$ on $E$. Now by Lemma~\ref{eq:small-ball-prob-stiefel-manifold}, we have \begin{align*}
        \mathbb{P}(E) \geq C_5 \biggl(\frac{n^{-\beta/(2\beta+r)}}{\log (en)} \wedge \frac{1}{2}\biggr)^{dr/(\beta\wedge 1)} \geq \exp(-C_6n^{r/(2\beta+r)}),
    \end{align*}
    and by~\eqref{eq:prior-mass-lb-func} (with $d$ replaced by~$r$ therein), we have \begin{align*}
        \mathbb{P}\bigl(\|\tilde{g}_{\beta}^{(r)} - h^\circ\|_{\infty} \leq C_7n^{-\beta/(2\beta+r)}\bigr) \geq \exp(-C_7n^{r/(2\beta+r)}).
    \end{align*}
    Hence, 
    \begin{align*}
        \mathbb{P}\bigl(\|\tilde{g}_{\beta,r} -g^\circ\|_{\infty}^2 &\leq (C_4+C_7)^2 n^{-2\beta/(2\beta+r)}\bigr)\\
        &\geq \mathbb{P}\bigl(\|\tilde{g}_{\beta,r}-g^\circ\|_{\infty} \leq (C_4+C_7) n^{-\beta/(2\beta+r)} \,\big|\, E\bigr) \mathbb{P}(E)\\
        &\geq \mathbb{P}\bigl(\|\tilde{g}_{\beta}^{(r)} - h^\circ\|_{\infty} \leq C_7n^{-\beta/(2\beta+r)}\bigr)\mathbb{P}(E) \geq \exp\bigl\{-(C_6+C_7)n^{r/(2\beta+r)}\bigr\},
    \end{align*}
    where the second inequality follows from~\eqref{eq:g-beta-r-ub} and the fact that $\tilde{g}_{\beta}^{(r)}$ and $U^{(r)}$ are independent. This proves~\eqref{eq:prior-mass-lb-func-multi-index} with $C\coloneqq (C_4+C_7)^2 \vee (C_6+C_7)$.

    The proof of~\eqref{eq:prior-mass-lb-dist-multi-index} follows the same argument as the proof of~\eqref{eq:prior-mass-lb-dist}. By Lemma~\ref{lemma:properties-of-gaussian-regression}\emph{(a)} and~\emph{(b)}, if $g\in\mathcal{G}_{\beta,r}$ satisfies $\|g - g^{\circ}\|_{\infty}^2 \leq Cn^{-2\beta/(2\beta+r)}$, then 
    \begin{align*}
        \mathrm{KL}(P_{g^{\circ}}, P_g) \leq C_1'Cn^{-2\beta/(2\beta+r)} \quad\text{and}\quad \mathrm{V}_2(P_{g^{\circ}}, P_g) \leq C_2'Cn^{-2\beta/(2\beta+r)}.
    \end{align*}
    Writing $C'\coloneqq C^{1/3}\vee\sqrt{(C_1'\vee C_2')C}$ and $\epsilon_{n,\beta,r} \coloneqq C' n^{-\beta/(2\beta+r)}$, we deduce from~\eqref{eq:prior-mass-lb-func-multi-index} that
    \begin{align*}
        \pi_{\beta,r}\bigl(B_{\beta,r}(P_{g^\circ}, \epsilon_{n,\beta,r})\bigr) \geq \exp\bigl(-Cn^{r/(2\beta+r)}\bigr) \geq \exp(-C'n \epsilon_{n,\beta,r}^2).
    \end{align*}
    This proves the claim.
\end{proof}

\begin{lemma} \label{lemma:prior-mass-ub-multi-index}
    Let $(\alpha,p)\in\mathcal{A}$, $m>0$ and assume that~\eqref{eq:def-g0-multi-index} and~\eqref{eq:def-h0} hold. Suppose that $P_X$ is a Borel distribution on $\mathbb{B}^d$ with the property that there exist $c_0>0$ and a closed Euclidean ball $A\subseteq\mathbb{B}^d$ of radius $\tau>0$ such that $P_X(A_0) \geq c_0\mathrm{Vol}_d(A_0)$ for all measurable $A_0\subseteq A$. Then there exist $c_0'\equiv c_0'(c_0,\tau,d,p)$, $c \equiv c(\alpha,d,p,c_0,C_0,\tau) > 0$ and $n_0 \equiv n_0(\alpha,d,p,c_0,C_0,S,m,\tau) \in \mathbb{N}$,  such that 
    \begin{align}
        \tilde{\pi}_{\alpha,p}\Bigl(\Bigl\{g\in \mathcal{G}_{\alpha,p} : \|g-g^{\circ}\|_{L^2(P_X)}^2 \leq c_0'm^2n^{-2\alpha/(2\alpha+p)} \Bigr\}\Bigr) \leq \exp\bigl( -c m^{-p/\alpha} n^{p/(2\alpha+p)} \bigr)\label{eq:prior-mass-ub-func-multi-index}
    \end{align}
    for $n \geq n_0$.  Moreover, there exists $c'>0$, depending only on $\alpha,\beta,d,p,c_0,C_0,S,\tau$ and $\sigma$, such that for $n\geq n_0$,
    \begin{align}
        \pi_{\alpha,p}\bigl(B'_{\alpha,p}(P_{g^{\circ}}, c'mn^{-\alpha/(2\alpha+p)})\bigr) \leq \exp\bigl( -c m^{-p/\alpha} n^{p/(2\alpha+p)} \bigr) \label{eq:prior-mass-ub-dist-multi-index}
    \end{align}
    where $B'_{\alpha,p}(P_{g^{\circ}}, c'mn^{-\alpha/(2\alpha+p)}) \coloneqq \bigl\{P\in\mathcal{P}_{\alpha,p} : d_{\mathrm{H}}(P_{g^{\circ}},P) \leq c'mn^{-\alpha/(2\alpha+p)} \bigr\}$ is defined as in~\eqref{eq:C-ball}.
\end{lemma}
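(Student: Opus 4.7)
The proof will mirror that of Lemma~\ref{lemma:prior-mass-ub}, but with significant modifications to accommodate the random Stiefel-valued projection $U^{(p)}$. The plan is to condition on $U^{(p)}=U$ throughout and reduce to an inequality for the level-$L$ wavelet coefficients of $\tilde{g}_\alpha^{(p)}$ on $[0,1]^p$. Writing $\mathcal{T}_U(x)\coloneqq (U^\top x+\bm{1}_p)/2$, letting $Q_U$ denote the pushforward of $P_X$ under $\mathcal{T}_U$, and defining the conditional-expectation surrogate $g^\star_U(z)\coloneqq \mathbb{E}[g^\circ(X)\mid \mathcal{T}_U(X)=z]$, I first use the tower property (applied to $\tilde{g}_\alpha^{(p)}\circ\mathcal{T}_U$, which is $\sigma(\mathcal{T}_U(X))$-measurable) to obtain $\|\tilde{g}_{\alpha,p}-g^\circ\|_{L^2(P_X)}^2 \geq \|\tilde{g}_\alpha^{(p)}-g^\star_U\|_{L^2(Q_U)}^2$. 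A fibre integration with respect to the decomposition $x=Uz+v$ with $U^\top v=0$ then shows that $Q_U$ dominates Lebesgue on a Euclidean ball $A'(U)\subseteq [0,1]^p$ of radius $\tau/4$ by a factor $c_0^\star\coloneqq c_0 2^p\omega_{d-p}(3\tau^2/4)^{(d-p)/2}$ (with $\omega_{d-p}$ the volume of the unit Euclidean ball in $\mathbb{R}^{d-p}$ and $\omega_0\coloneqq 1$); the inclusion $A'(U)\subseteq[0,1]^p$ uses that $\|c\|_2\leq 1-\tau$. Setting $C_1\coloneqq\lceil\log_2(4\sqrt{p}/\tau)\rceil$, $A'(U)$ contains a dyadic cube $Q(U)$ of side $2^{-C_1}$, and for $L\geq C_1\vee\ell_0$ the orthonormality of the CDV wavelets supported in $Q(U)$ yields
\[
\|\tilde{g}_\alpha^{(p)}-g^\star_U\|_{L^2(Q(U))}^2 \geq \sum_{\gamma\in\Gamma_L'(U)}\zeta_\gamma^2, \qquad \zeta_\gamma \coloneqq C_0 2^{-L(\alpha+p/2)} b_{L,\gamma}^{(\alpha)} - \langle g^\star_U,\Psi_{L,\gamma}^{(p)}\rangle,
\]
with $R_U\coloneqq |\Gamma_L'(U)|\geq 2^{(L-C_1)p}$.

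The hard part is that $g^\star_U$ is only bounded, not Besov-smooth, so its wavelet coefficients $c^\star_{L,\gamma}\coloneqq \langle g^\star_U,\Psi_{L,\gamma}^{(p)}\rangle$ have scale $\lesssim 2^{-Lp/2}$, which is $2^{L\alpha}$ times larger than the scale $B_1\coloneqq C_0 2^{-L(\alpha+p/2)}$ of the random summand; a direct Hoeffding argument on $\sum \zeta_\gamma^2$ degrades by a factor of $2^{4L\alpha}$ in the exponent and misses the target rate. I would resolve this with a deterministic case-split on the size of $|c^\star_{L,\gamma}|$. Let $\mathcal{I}_1\coloneqq\{\gamma\in\Gamma_L'(U):|c^\star_{L,\gamma}|\leq 2B_1\}$ and $\mathcal{I}_2\coloneqq \Gamma_L'(U)\setminus\mathcal{I}_1$. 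When $|\mathcal{I}_2|\geq R_U/2$, every $\gamma\in\mathcal{I}_2$ satisfies $|\zeta_\gamma|\geq |c^\star_{L,\gamma}|-B_1\geq B_1$, so deterministically $\sum\zeta_\gamma^2\geq R_U B_1^2/2\asymp 2^{-2\alpha L}$; taking $L\coloneqq \lfloor(2\alpha)^{-1}\log_2(c_0 C_0^2/(\kappa m^2\epsilon^2 2^{C_1 p}))\rfloor$ for a sufficiently small constant $\kappa$ (with $\epsilon\coloneqq n^{-\alpha/(2\alpha+p)}$) makes this exceed the threshold $(c_0^\star)^{-1}c_0' m^2\epsilon^2$, so the ball event has conditional probability zero. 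Otherwise $|\mathcal{I}_1|\geq R_U/2$, on which $\zeta_\gamma\in[-3B_1,3B_1]$; since $\mathrm{Var}(b_{L,\gamma}^{(\alpha)})\geq c_0/3$ (from the density lower bound), one has $\mathbb{E}[\sum_{\mathcal{I}_1}\zeta_\gamma^2]\geq |\mathcal{I}_1| C_0^2 2^{-L(2\alpha+p)} c_0/3 \asymp m^2\epsilon^2$, and Hoeffding's inequality applied to the independent $[0,9B_1^2]$-valued summands yields $\mathbb{P}(\sum_{\mathcal{I}_1}\zeta_\gamma^2\leq \mathbb{E}/2)\leq \exp(-c R_U)$ for some $c>0$ depending only on $c_0$.

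In either case the conditional probability given $U^{(p)}=U$ is at most $\exp(-cR_U)\leq \exp(-c m^{-p/\alpha} n^{p/(2\alpha+p)})$ (after possibly relabelling $c$), uniformly over $U\in V_p(\mathbb{R}^d)$, so marginalising over $U^{(p)}$ yields~\eqref{eq:prior-mass-ub-func-multi-index}. The Hellinger bound~\eqref{eq:prior-mass-ub-dist-multi-index} then follows immediately from Lemma~\ref{lemma:properties-of-gaussian-regression}(d): any $g\in\mathcal{G}_{\alpha,p}$ with $d_{\mathrm{H}}(P_{g^\circ},P_g)\leq c' m n^{-\alpha/(2\alpha+p)}$ satisfies $\|g-g^\circ\|_{L^2(P_X)}^2\leq c_0' m^2 n^{-2\alpha/(2\alpha+p)}$ for an appropriate choice of $c'$, so the first part applies.
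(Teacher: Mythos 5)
Your proposal is correct, but it takes a genuinely different route from the paper. The paper's proof never compares $\tilde{g}_{\alpha}^{(p)}$ with a projection of $g^{\circ}$: conditionally on $U^{(p)}$, it fixes (without loss of generality, else the conditional probability is zero) some $g'\in\mathcal{G}_{\alpha}^{(p)}$ whose composition with the \emph{same} random projection is close to $g^{\circ}$ in $L^2(P_X)$, uses the triangle inequality to re-centre the ball at $g'\circ\bigl(x\mapsto((U^{(p)})^\top x+\bm{1}_p)/2\bigr)$, invokes Lemma~\ref{lemma:density-lb} to show the pushforward of $P_X$ under the random projection dominates Lebesgue measure on a hypercube, and then directly applies the already-proven bound~\eqref{eq:prior-mass-ub-func} of Lemma~\ref{lemma:prior-mass-ub} with $d$ replaced by $p$. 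You instead compare $\tilde{g}_{\alpha}^{(p)}$ with the conditional-expectation surrogate $g^\star_U$ via the $L^2$-projection (tower) inequality, which is valid, but forces you to confront the fact that $g^\star_U$ is only bounded, so its level-$L$ coefficients can be of order $2^{-Lp/2}$; your case split on $|c^\star_{L,\gamma}|$ (deterministic exclusion when many coefficients exceed $2B_1$, Hoeffding on the $[0,9B_1^2]$-valued independent summands otherwise, using $\mathrm{Var}(b^{(\alpha)}_{L,\gamma})\geq c_0/3$) handles this correctly and delivers the same exponent $\exp(-cm^{-p/\alpha}n^{p/(2\alpha+p)})$ uniformly in $U$. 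What each approach buys: yours is self-contained given the covariate-domination step, only uses boundedness of $g^\circ$ (so it does not need the multi-index structure of $g^\circ$ at all), and avoids the ``fix $g'$'' reduction; the paper's is shorter because it recycles Lemma~\ref{lemma:prior-mass-ub} wholesale and delegates the geometric step to Lemma~\ref{lemma:density-lb}, whose content your fibre-integration computation essentially reproves. Two small points to tidy: the constraint linking your constants actually runs the other way from your phrasing --- making $\kappa$ smaller enlarges $L$ and \emph{shrinks} the deterministic lower bound $R_UB_1^2/2\gtrsim\kappa m^2\epsilon^2/c_0$, so what you need is $c_0'$ chosen small relative to $\kappa c_0^\star$ (or, equivalently, $\kappa$ not too small once $c_0'$ is fixed); since both $\kappa$ and $c_0'$ are at your disposal and may depend only on $(c_0,\tau,d,p)$ together with absolute constants, this is a wording slip rather than a gap. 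Also, when invoking Bessel's inequality for $\tilde{g}_{\alpha}^{(p)}-g^\star_U$ over the dyadic cube, it is worth noting that $g^\star_U$ is defined $Q_U$-a.e.\ and that Lebesgue measure on $A'(U)$ is absolutely continuous with respect to $Q_U$ there (by the domination you established), so the wavelet coefficients over the cube are well defined.
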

\begin{proof}
    Throughout the proof, $C_1,C_2,\ldots$ will denote positive quantities that may depend only on $\alpha,d,p,c_0,C_0$ and $\tau$. Let $c_0'\equiv c_0'(c_0,\tau,d,p)$ be the quantity in Lemma~\ref{lemma:density-lb}.
    Let $\tilde{g}_{\alpha,p}$ be defined as in~\eqref{eq:random-g-alpha-p} so that $\tilde{g}_{\alpha,p}\sim\tilde{\pi}_{\alpha,p}$. 
    Then
    \begin{align}
        \tilde{\pi}_{\alpha,p}&\Bigl(\Bigl\{g\in \mathcal{G}_{\alpha,p} : \|g-g^{\circ}\|_{L^2(P_X)}^2 \leq c_0'm^2n^{-2\alpha/(2\alpha+p)} \Bigr\}\Bigr)\nonumber\\
        &= \mathbb{P}\biggl\{\biggl\| \tilde{g}_{\alpha}^{(p)} \biggl(\frac{(U^{(p)})^\top x+1_p}{2}\biggr) - h^\circ\biggl(\frac{(U^\circ)^\top x+1_p}{2}\biggr) \biggr\|_{L^2(P_X)}^2 \leq c_0'm^2n^{-2\alpha/(2\alpha+p)} \biggr\}\nonumber\\
        &= \mathbb{E}\biggl[ \mathbb{P}\biggl\{\biggl\| \tilde{g}_{\alpha}^{(p)} \biggl(\frac{(U^{(p)})^\top x+1_p}{2}\biggr) - h^\circ\biggl(\frac{(U^\circ)^\top x+1_p}{2}\biggr) \biggr\|_{L^2(P_X)}^2 \leq c_0'm^2n^{-2\alpha/(2\alpha+p)} \,\bigg|\, U^{(p)} \biggr\}\biggr]. \label{eq:g-alpha-p-tower-property}
    \end{align}
    Now conditional on $U^{(p)}$, we fix $g'\in\mathcal{G}_{\alpha}^{(p)}$ such that $\bigl\|g'\bigl(\frac{(U^{(p)})^\top x+1_p}{2}\bigr)-h^\circ\bigl(\frac{(U^{\circ})^\top x+1_p}{2}\bigr)\bigr\|_{L^2(P_X)}^2 \leq c_0'm^2n^{-2\alpha/(2\alpha+p)}$. We may assume without loss of generality that such $g'$ exists, since otherwise the conditional probability in the expectation in~\eqref{eq:g-alpha-p-tower-property} would be zero. Using the same argument as that in~\eqref{eq:center-at-g'}, we deduce that
    \begin{align}
        \mathbb{P}&\biggl\{\biggl\| \tilde{g}_{\alpha}^{(p)} \biggl(\frac{(U^{(p)})^\top x+1}{2}\biggr) - h^\circ\biggl(\frac{(U^\circ)^\top x+1_p}{2}\biggr) \biggr\|_{L^2(P_X)}^2 \leq c_0'm^2n^{-2\alpha/(2\alpha+p)} \,\bigg|\, U^{(p)} \biggr\}\nonumber\\
        &\leq \mathbb{P}\biggl\{\biggl\| \tilde{g}_{\alpha}^{(p)} \biggl(\frac{(U^{(p)})^\top x+1_p}{2}\biggr) - g'\biggl(\frac{(U^{(p)})^\top x+1_p}{2}\biggr) \biggr\|_{L^2(P_X)}^2 \leq 4c_0' m^2n^{-2\alpha/(2\alpha+p)} \,\bigg|\, U^{(p)} \biggr\}\nonumber\\
        &= \mathbb{P}\bigl(\|\tilde{g}_{\alpha}^{(p)} - g'\|_{L^2(Q(U^{(p)}))}^2 \leq 4c_0' m^2n^{-2\alpha/(2\alpha+p)} \,\big|\, U^{(p)}\bigr)\label{eq:ub-prior-mass-g-alpha-p-1},
    \end{align}
    where $Q(U^{(p)})$ denotes the conditional distribution of $\frac{(U^{(p)})^\top X+1_p}{2}$ given $U^{(p)}$, with $X\sim P_X$. By Lemma~\ref{lemma:density-lb}, for every $U\in V_p(\mathbb{R}^d)$, there exists a hypercube $A'\subseteq [0,1]^p$ of side length~$\frac{\tau}{2\sqrt{p}}$ such that $Q(U)(A_0') \geq c_0'\mathrm{Vol}_p(A_0')$ for all measurable $A_0'\subseteq A'$. 
    We may thus apply~\eqref{eq:prior-mass-ub-func}, with $d$ replaced by $p$, $m$ replaced by $4m$ and $c_0$ replaced by $c_0'$ therein, to deduce that
    \begin{align*}
        \mathbb{P}\bigl\{\bigl\| \tilde{g}_{\alpha}^{(p)} - g'\bigr\|_{L^2(Q(U^{(p)}))}^2 \leq 4c_0' m^2n^{-2\alpha/(2\alpha+p)} \,\big|\, U^{(p)} \bigr\} \leq \exp\bigl( -c m^{-p/\alpha} n^{p/(2\alpha+p)} \bigr),
    \end{align*}
    for $n\geq n_0$.
    Combining this with~\eqref{eq:g-alpha-p-tower-property} and~\eqref{eq:ub-prior-mass-g-alpha-p-1} proves~\eqref{eq:prior-mass-ub-func-multi-index}.

    Finally, the proof of~\eqref{eq:prior-mass-ub-dist-multi-index} follows a very similar argument to that in~\eqref{eq:prior-mass-ub-dist}.
\end{proof}

\begin{lemma} \label{lemma:entropy-multi-index}
    Let $(\alpha,p)\in\mathcal{A}$, $C>0$, $P_0\in\mathcal{P}_{\alpha,p}$ and $\epsilon_{n,\alpha,p} \coloneqq Cn^{-\alpha/(2\alpha+p)}$. Then there exists $C'$ depending only on $\alpha,p,d,\sigma,C$ and $C_0$ such that
    \begin{align*}
        \sup_{\epsilon \geq \epsilon_{n,\alpha,p}} \log\mathcal{N}\bigl(\epsilon/3, B_{\alpha,p}'(P_0,2\epsilon), d_{\mathrm{H}}\bigr) \leq C' n \epsilon_{n,\alpha,p}^2.
    \end{align*}
\end{lemma}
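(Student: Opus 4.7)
}

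The plan is to reduce the Hellinger covering number bound to an $\|\cdot\|_\infty$ covering number bound for the function class $\mathcal{G}_{\alpha,p}$, and then to exploit the factorised structure of multi-index regression functions by covering the Stiefel manifold and the Besov component separately. Throughout, write $\epsilon\equiv\epsilon_{n,\alpha,p}$ for brevity, and let $C_1,C_2,\ldots$ denote constants depending only on the quantities listed in the lemma statement.

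First, by Lemma~\ref{lemma:properties-of-gaussian-regression}(d) (the Gaussian regression inequality already used in the proof of Proposition~\ref{prop:posterior-regression-function-wavelet-prior}), there exists $C_1>0$ such that $d_{\mathrm{H}}(P_g,P_{g'})\leq C_1\|g-g'\|_\infty$ for all $g,g'\in\mathcal{G}_{\alpha,p}$. Hence
\begin{align*}
    \sup_{\epsilon'\geq\epsilon}\log\mathcal{N}\bigl(\epsilon'/3,B'_{\alpha,p}(P_0,2\epsilon'),d_{\mathrm{H}}\bigr)\leq \log\mathcal{N}\bigl(\epsilon/(3C_1),\mathcal{G}_{\alpha,p},\|\cdot\|_\infty\bigr),
\end{align*}
so it suffices to bound the right-hand side by $C_2 n\epsilon^2$.

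Second, I would exploit the fact that every $g\in\mathcal{G}_{\alpha,p}$ has the form $g(x)=h\bigl((U^\top x+\bm{1}_p)/2\bigr)$ with $U\in V_p(\mathbb{R}^d)$ and $h\in\mathcal{G}_\alpha^{(p)}\subseteq B^\alpha_{\infty,\infty}([0,1]^p,2C_0)$. Taking $\delta_1,\delta_2>0$ and covers $\{U_1,\ldots,U_M\}$ of $V_p(\mathbb{R}^d)$ in the operator norm and $\{h_1,\ldots,h_N\}$ of $B^\alpha_{\infty,\infty}([0,1]^p,2C_0)$ in $\|\cdot\|_\infty$, I will show that $\{x\mapsto h_j((U_i^\top x+\bm{1}_p)/2):i\in[M],\,j\in[N]\}$ is a net for $\mathcal{G}_{\alpha,p}$ in $\|\cdot\|_\infty$ at scale $\delta_2+C_3\delta_1^{\alpha\wedge 1}\log(2/\delta_1)$, using the inequality
\begin{align*}
    |g(x)-g'(x)| \leq |h(u)-h(u')| + |h(u')-h'(u')|,
\end{align*}
with $u=(U^\top x+\bm{1}_p)/2$ and $u'=(U'^\top x+\bm{1}_p)/2$, together with the regularity bound~\eqref{eq:lipschitzness-g-beta-r} derived in the proof of Lemma~\ref{lemma:prior-mass-lb-multi-index} and $\|u-u'\|_2\leq\|U-U'\|_{\mathrm{op}}/2$ since $x\in\mathbb{B}^d$.

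Third, I would invoke the standard bounds $M\leq (C_4/\delta_1)^{dp}$ for the Stiefel manifold (which has dimension at most $dp$) and $\log N\leq C_5\delta_2^{-p/\alpha}$ for the Besov ball (from \citet[Eq.~(4.184)]{gine2021mathematical} applied in dimension $p$). Given a target scale $\delta>0$, I would set $\delta_2=\delta/2$ and $\delta_1$ small enough (polynomially in $\delta$ up to a logarithmic factor) that $C_3\delta_1^{\alpha\wedge 1}\log(2/\delta_1)\leq\delta/2$; this yields
\begin{align*}
    \log\mathcal{N}(\delta,\mathcal{G}_{\alpha,p},\|\cdot\|_\infty)\leq C_6\log(1/\delta)+C_7\delta^{-p/\alpha}.
\end{align*}
For $\delta\geq \epsilon/(3C_1)$, the first term is dominated by the second, so the whole expression is $\leq C_8\epsilon^{-p/\alpha}$. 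Since $\epsilon^{-p/\alpha}=C^{-p/\alpha}n^{p/(2\alpha+p)}$ and $n\epsilon^2=C^2n^{p/(2\alpha+p)}$, we obtain $\log\mathcal{N}\leq C'n\epsilon^2$, proving the claim. The main subtlety here is the logarithmic factor in the regularity bound for $h$ when $\alpha\in\mathbb{N}$ (arising from the Zygmund characterisation rather than genuine Lipschitz control), but this is harmless because the Stiefel cover contributes only polynomially to the log-cardinality and is therefore dwarfed by the Besov term for $\delta\gtrsim \epsilon$.
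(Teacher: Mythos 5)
Your proposal is correct and follows essentially the same route as the paper's proof: reduce the Hellinger covering number to a $\|\cdot\|_{\infty}$ covering number for $\mathcal{G}_{\alpha,p}$ via Lemma~\ref{lemma:properties-of-gaussian-regression}\emph{(d)}, then cover the Besov component and the projection matrix separately, using the modulus-of-continuity bound~\eqref{eq:lipschitzness-g-beta-r} to control the effect of perturbing $U$, and note that the (polynomial-in-$1/\delta$) matrix cover contributes only a logarithmic term dominated by the Besov entropy $\asymp n\epsilon_{n,\alpha,p}^2$. The only cosmetic difference is that you cover $V_p(\mathbb{R}^d)$ directly in operator norm while the paper discretises the matrix entries on a grid and converts to operator norm via $\|\cdot\|_{\mathrm{op}}\leq\sqrt{pd}\,\|\cdot\|_{\infty}$.
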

\begin{proof}
    Throughout this proof, $C_1,C_2,\ldots$ will denote positive quantities depending only on $\alpha,p,d,\sigma,C$ and $C_0$.
    By Lemma~\ref{lemma:properties-of-gaussian-regression}\emph{(d)},
    \begin{align*}
        \sup_{\epsilon\geq\epsilon_{n,\alpha,p}} \log\mathcal{N}\bigl(\epsilon/3,\, B'_{\alpha,p}(P_0,2\epsilon),\, d_{\mathrm{H}}\bigr) &\leq \log \mathcal{N}\bigl(\epsilon_{n,\alpha,p}/3,\, \mathcal{P}_{\alpha,p},\, d_{\mathrm{H}}\bigr)\\
        &\leq \log \mathcal{N}\bigl(2\sigma \epsilon_{n,\alpha,p}/3,\, \mathcal{G}_{\alpha,p},\, \|\cdot\|_{\infty}\bigr).
    \end{align*}
    Recall that $\mathcal{G}_{\alpha}^{(p)} \subseteq B_{\infty,\infty}^{\alpha}([0,1]^p,2C_0)$ and $\mathcal{G}_{\alpha,p} = \mathcal{G}_{\alpha}^{(p)} \circ \bigl\{x\mapsto \frac{U^\top x + 1_p}{2} : U\in V_p(\mathbb{R}^d)\bigr\}$. Now let $\mathcal{N}_1$ be a $(\sigma\epsilon_{n,\alpha,p}/3)$-cover of $B_{\infty,\infty}^{\alpha}([0,1]^p,2C_0)$ with respect to $\|\cdot\|_{\infty}$ such that $|\mathcal{N}_1| \leq \exp(C_1n\epsilon_{n,\alpha,p}^2)$ for some $C_1>0$, which exists by, e.g.~\citet[][Eq.~(4.184)]{gine2021mathematical}. Recall from~\eqref{eq:lipschitzness-g-beta-r} that for all $x,y\in[0,1]^p$,
    \begin{align}
        |g(x) - g(y)| \leq C_2\|x-y\|_2^{\alpha\wedge1} \bigl\{\log(1/\|x-y\|_2) \vee 1\bigr\}.  \label{eq:lipschitz-weak-form}
    \end{align} 
    The function $z \mapsto z^{\alpha \wedge 1}\log(1/z)$ is increasing on $(0,e^{-1/(\alpha \wedge 1)}]$, so let $\delta\coloneqq \frac{1}{\sqrt{pd}}\bigl(\frac{\sigma\epsilon_{n,\alpha,p}}{3C_3\log (en)}\bigr)^{1/(\alpha\wedge 1)} \wedge \frac{1}{\sqrt{pd}} e^{-1/(\alpha \wedge 1)}$ for $C_3>0$ large enough such that 
    \begin{align}
        C_2(\delta\sqrt{pd})^{\alpha\wedge 1} \biggl\{\log\biggl(\frac{1}{\delta\sqrt{pd}}\biggr)\vee 1\biggr\} \leq \frac{\sigma\epsilon_{n,\alpha,p}}{3}. \label{eq:choice-of-C3}
    \end{align}
    Let $\Delta\coloneqq \{i\delta : i\in\mathbb{Z}, |i\delta|\leq 1\}$ and $\mathcal{N}_2\coloneqq \Delta^{d\times p} \subseteq \mathbb{R}^{d\times p}$. Then for any $g\in\mathcal{G}_{\alpha}^{(p)} \subseteq B_{\infty,\infty}^{\alpha}([0,1]^p,2C_0)$ and $U\in V_p(\mathbb{R}^d)$, there exist $g'\in\mathcal{N}_1$ and $U'\in \mathcal{N}_2$ such that $\|g'-g\|_{\infty} \leq \sigma\epsilon_{n,\alpha,p}/3$ and $\|U'-U\|_{\mathrm{op}} \leq \sqrt{pd}\|U'-U\|_{\infty} \leq \delta\sqrt{pd}$. Thus, for all $x\in\mathbb{B}^d$, we have $\|U^\top x - (U')^\top x\|_2 \leq \delta\sqrt{pd}$, so by~\eqref{eq:lipschitz-weak-form} and~\eqref{eq:choice-of-C3}, we deduce that
    \begin{align*}
        \biggl|g\biggl(&\frac{U^\top x + 1_p}{2}\biggr) - g'\biggl(\frac{(U')^\top x + 1_p}{2}\biggr)\biggr|\\
        &\leq \biggl|g\biggl(\frac{U^\top x + 1_p}{2}\biggr) - g\biggl(\frac{(U')^\top x + 1_p}{2}\biggr)\biggr| + \biggl|g\biggl(\frac{(U')^\top x+ 1_p}{2}\biggr) - g'\biggl(\frac{(U')^\top x+ 1_p}{2}\biggr)\biggr|\\
        &\leq \frac{2\sigma\epsilon_{n,\alpha,p}}{3}.
    \end{align*}
    Therefore, $\mathcal{N}_1 \circ \mathcal{N}_2$ is a $(2\sigma\epsilon_{n,\alpha,p}/3)$-cover of $\mathcal{G}_{\alpha,p}$ with respect to $\|\cdot\|_{\infty}$. Hence, we have 
    \begin{align}
        \log \mathcal{N}\bigl(2\sigma \epsilon_{n,\alpha,p}/3,\, \mathcal{G}_{\alpha,p},\, \|\cdot\|_{\infty}\bigr) \leq \log|\mathcal{N}_1| + \log|\mathcal{N}_2| \leq C'n\epsilon_{n,\alpha,p}^2, \label{eq:ell-infty-entropy-multi-index}
    \end{align}
    which proves the claim.
\end{proof}

\begin{proof}[Proof of Proposition~\ref{prop:posterior-regression-function-multi-index}]
    The proof of Proposition~\ref{prop:posterior-regression-function-multi-index} follows the same arguments as the proof of Proposition~\ref{prop:posterior-regression-function-wavelet-prior}, but instead we apply Lemmas~\ref{lemma:prior-mass-lb-multi-index}, \ref{lemma:prior-mass-ub-multi-index} and~\ref{lemma:entropy-multi-index} to verify the assumptions of Corollary~\ref{cor:posterior-regression-function-concentration}.
\end{proof}

\subsection{Proofs of Theorems~\ref{thm:holder-adaptive-icl} and~\ref{thm:multi-index-adaptive-icl}}

\begin{proof}[Proof of Theorem~\ref{thm:holder-adaptive-icl}]
    For any test distribution $\mu$ on $\mathcal{P}_\beta$ such that $\chi^2(\mu,\pi_{\beta})\leq\kappa$, we have
    \begin{align*}
        \chi^2(\mu,\pi) = \mathbb{E}_{\pi}\biggl\{\biggl(\frac{\d\mu}{\d\pi}\biggr)^2\biggr\} - 1 \leq \max_{\alpha\in\mathcal{A}} \frac{1}{\lambda_\alpha} \cdot \mathbb{E}_{\pi_{\beta}}\biggl\{\biggl(\frac{\d\mu}{\d\pi_{\beta}}\biggr)^2\biggr\} - 1 \leq (\kappa+1) \max_{\alpha\in\mathcal{A}} \frac{1}{\lambda_\alpha} - 1 \eqqcolon \kappa'.
    \end{align*}
    Now, by Proposition~\ref{prop:R_mu-decomposition}, then by taking $d_{\mathrm{model}}^\circ \geq d+2$, $d_{\mathrm{ffn}}^\circ \in \mathbb{N}$ and $T \in \mathbb{N}$ large enough that Proposition~\ref{prop:ERM-approx-posterior} holds with $\epsilon = 1/\{16R^2n^2(\kappa'+1)\}$ and applying Proposition~\ref{prop:posterior-regression-function-wavelet-prior}, we have 
    \begin{align*}
        \mathcal{R}_{\mu}^{\mathrm{ICL}}(\tilde f_T) &\leq 4R\sqrt{\bigl\{\chi^2(\mu,\pi)+1\bigr\}\mathcal{E}(\tilde f_T)} + 2\mathbb{E}_\mu\mathbb{E}_P\bigl[\bigl\{g_\pi(X,\mathcal{D}_n)-g^\circ(X)\bigr\}^2\bigr]\\
        &\leq \frac{1}{n} + C'n^{-2\beta/(2\beta+d)} \leq Cn^{-2\beta/(2\beta+d)},
    \end{align*}
    where $C',C>0$ depend on neither $n$ nor $\kappa$.
\end{proof}

\begin{proof}[Proof of Theorem~\ref{thm:multi-index-adaptive-icl}]
    As in the proof of Theorem~\ref{thm:holder-adaptive-icl}, for any test distribution $\mu$ on $\mathcal{P}_{\beta,r}$ such that $\chi^2(\mu, \pi_{\beta,r}) \leq \kappa$, we have $\chi^2(\mu, \pi) \leq (\kappa+1) \max_{(\alpha,p) \in \mathcal{A}'} \lambda_{\alpha,p}^{-1} - 1 \eqqcolon \kappa'$.
    The result therefore follows by the same argument as in the proof of Theorem~\ref{thm:holder-adaptive-icl}, except that we apply Proposition~\ref{prop:posterior-regression-function-multi-index} in place of Proposition~\ref{prop:posterior-regression-function-wavelet-prior} to obtain the improved bound $\mathcal{R}_{\mu}^{\mathrm{ICL}}(\tilde f_T) \leq Cn^{-2\beta/(2\beta+r)}$ for sufficiently large $T$, where $C>0$ depends on neither $n$ nor $\kappa$.
\end{proof}

\subsection{Proof of Theorem~\ref{thm:bayes-risk-lb}}
\begin{proof}[Proof of Theorem~\ref{thm:bayes-risk-lb}]
    We use the Bayes risk lower bound techniques developed by \citet{chen2016bayes}. Let $(\mathcal{P}_*,\mathcal{G}_*)$ be either $(\mathcal{P}_{\beta}, \mathcal{G}_{\beta})$ defined as in Section~\ref{sec:holder-regression} or $(\mathcal{P}_{\beta,r}, \mathcal{G}_{\beta,r})$ defined as in Section~\ref{sec:multi-index-regression}, and let $\tilde{\mu}$ denote the distribution on $\mathcal{G}_*$ induced by $\mu$ on $\mathcal{P}_*$, so that~$\tilde{\mu}$ is the distribution of the random function $x\mapsto\mathbb{E}_P(Y\,|\,X=x)$, where the randomness comes from $P\sim\mu$. By \citet[Corollary~12\emph{(i)} and Eq.~(48)]{chen2016bayes}, we have
    \begin{align}
        \inf_{\hat{g}_n\in\hat{\mathcal{G}}_n} \mathcal{R}_{\mu}^{\mathrm{ICL}}(\hat{g}_n) &=
        \inf_{\hat{g}_n\in\hat{\mathcal{G}}_n} \mathbb{E}_{P\sim \mu}\mathbb{E}_{\mathcal{D}_n|P \sim P^{\otimes n}} \bigl\{\|\hat{g}_n(\cdot, \mathcal{D}_n) - g_P\|_{L^2(P_X)}^2\bigr\}\nonumber\\
        &= \inf_{\hat{g}_n\in\hat{\mathcal{G}}_n} \mathbb{E}_{\tilde{g}\sim \tilde{\mu}}\mathbb{E}_{\mathcal{D}_n|\tilde{g} \sim P_{\tilde{g}}^{\otimes n}} \bigl\{\|\hat{g}_n(\cdot, \mathcal{D}_n) - \tilde{g}\|_{L^2(P_X)}^2\bigr\}\nonumber\\
        &\geq \frac{1}{2} \sup \biggl\{t>0 : \sup_{g^\circ\in\mathcal{G}_*} \tilde{\mu}\Bigl( \bigl\{g \in\mathcal{G}_* : \|g-g^\circ\|_{L^2(P_X)}^2 < t \bigr\}\Bigr) < \frac{1}{4}e^{-2I^{\mathrm{up}}_{\mathrm{KL}}}\biggr\}, \label{eq:general-bayes-risk-lb}
    \end{align}
    where 
    \begin{align*}
        I^{\mathrm{up}}_{\mathrm{KL}} \coloneqq \inf_{\eta>0} \bigl\{\log \mathcal{N}(\eta^2/n,\mathcal{P}_*,\mathrm{KL}) + \eta^2\bigr\}
    \end{align*}
    and we recall that $\mathcal{N}(\eta^2/n,\mathcal{P}_*,\mathrm{KL})$ is the $(\eta^2/n)$-covering number of $\mathcal{P}_*$ with respect to the Kullback--Leibler divergence. By Lemma~\ref{lemma:properties-of-gaussian-regression}\emph{(a)}, we deduce that
    \begin{align*}
        I^{\mathrm{up}}_{\mathrm{KL}} \leq \inf_{\eta>0} \bigl\{\log \mathcal{N}\bigl(\sigma\eta\sqrt{2/n},\mathcal{G}_*,\|\cdot\|_{\infty}\bigr) + \eta^2\bigr\}.
    \end{align*}
    From now on, $C_1,C_2,\ldots$ will denote positive quantities that do not depend on $n$.

    \medskip
    \emph{(a)} By choosing $\eta\coloneqq n^{d/(2\beta+d)}$ and \citet[Eq.~(4.184)]{gine2021mathematical}, we deduce that
    \begin{align*}
        I^{\mathrm{up}}_{\mathrm{KL}} \leq C_1 n^{d/(2\beta+d)}.
    \end{align*}
    By the measurable bijection between $\mathcal{G}_{\beta}$ and $\mathcal{P}_{\beta}$ and our assumption on $\mu$, we have that  $\sup_{g\in\mathcal{G}_{\beta}}\frac{\d\tilde{\mu}}{\d\tilde{\pi}_{\beta}}(g) < \infty$.
    Thus, by Lemma~\ref{lemma:prior-mass-ub}, there exists $C_2>0$ small enough that for all $g^\circ\in\mathcal{G}_{\beta}$,
    \begin{align*}
        \tilde{\mu}\Bigl( \bigl\{g\in\mathcal{G}_{\beta} &: \|g -g^\circ\|_{L^2(P_X)}^2 < C_2n^{-2\beta/(2\beta+d)} \bigr\}\Bigr)\\
        & \leq \biggl\{ \sup_{g\in\mathcal{G}_{\beta}}\frac{\d\tilde{\mu}}{\d\tilde{\pi}_{\beta}}(g)\biggr\} \tilde{\pi}_{\beta} \Bigl( \bigl\{g\in\mathcal{G}_{\beta} : \|g-g^\circ\|_{L^2(P_X)}^2 < C_2n^{-2\beta/(2\beta+d)} \bigr\}\Bigr) < \frac{1}{4}e^{-2I^{\mathrm{up}}_{\mathrm{KL}}}.
    \end{align*}
    The claim then follows from~\eqref{eq:general-bayes-risk-lb}.

    \medskip
    \emph{(b)} By choosing $\eta\coloneqq n^{d/(2\beta+r)}$ and applying~\eqref{eq:ell-infty-entropy-multi-index} with $(\beta,r)$ replacing $(\alpha,p)$, we deduce that
    \begin{align*}
        I^{\mathrm{up}}_{\mathrm{KL}} \leq C_3 n^{r/(2\beta+r)}.
    \end{align*}
    Similarly, by Lemma~\ref{lemma:prior-mass-ub-multi-index}, there exists $C_4>0$ small enough that for all $g^\circ\in\mathcal{G}_{\beta,r}$,
    \begin{align*}
        \tilde{\mu}\Bigl( \bigl\{g&\in\mathcal{G}_{\beta,r} : \|g-g^\circ\|_{L^2(P_X)}^2 < C_4n^{-2\beta/(2\beta+r)} \bigr\}\Bigr)\\
        & \leq \biggl\{ \sup_{g\in\mathcal{G}_{\beta,r}}\frac{\d\tilde{\mu}}{\d\tilde{\pi}_{\beta,r}}(g)\biggr\}\tilde{\pi}_{\beta,r} \Bigl( \bigl\{g\in\mathcal{G}_{\beta,r} : \|g-g^\circ\|_{L^2(P_X)}^2 < C_4n^{-2\beta/(2\beta+r)} \bigr\}\Bigr) < \frac{1}{4}e^{-2I^{\mathrm{up}}_{\mathrm{KL}}}.
    \end{align*}
    The claim then follows again from~\eqref{eq:general-bayes-risk-lb}.
\end{proof}

\section{Auxiliary Lemmas}
The lemma below is a modification of \citet[Lemma~6.3]{ghosal2008nonparametric}, where we assume that the log-likelihood ratio is sub-Gamma in the left tail with uniform sub-Gamma parameters in order to obtain an exponential concentration.

\begin{lemma}\label{lemma:evidence-lower-bound}
    Let $\mathcal{P}$ be a measurable space of distributions on a measurable space $\mathcal{Z}$, let $P_0$ be a distribution on $\mathcal{Z}$ and let $Z_1,\ldots,Z_n \overset{\mathrm{iid}}{\sim} P_0$. For $\epsilon>0$, define 
    \begin{align*}
        B(P_0,\epsilon) \coloneqq \Bigl\{P\in\mathcal{P}: \mathrm{KL}(P_0,P) \leq \epsilon^2,\, \mathrm{V}_2(P_0,P) \leq \epsilon^2 \Bigr\}.
    \end{align*}
    For $i\in[n]$ and $P\in\mathcal{P}$ with $P \ll P_0$, let 
    \[
    W(P,Z_i) \coloneqq \log \frac{\d P}{\d P_0}(Z_i) - \mathbb{E}\log \frac{\d P}{\d P_0}(Z_i).
    \]
    Assume that there exist $a,c>0$ such that for all $P\in B(P_0,\epsilon)$, we have that $W(P,Z_1)$ is sub-Gamma in the left tail with variance parameter $a\epsilon^2$ and scale parameter $c$.
    Then, for any probability measure $\Pi$ on $\mathcal{P}$ and any $\epsilon,D>0$, we have with $P_0^{\otimes n}$-probability at most $\exp\bigl(-\frac{D^2n\epsilon^2}{2(a+cD)}\bigr)$ that
    \begin{align}
    \label{eq:evidence-lb}
        \int_{B(P_0,\epsilon)} \prod_{i=1}^n \frac{\mathrm{d}P}{\mathrm{d}P_0}(Z_i) \,\mathrm{d}\Pi(P) < \Pi\bigl(B(P_0,\epsilon)\bigr)e^{-(1+D)n\epsilon^2}.
    \end{align}
\end{lemma}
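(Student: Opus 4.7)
The plan is to apply the Cram\'er--Chernoff method after a Jensen linearisation of the log-likelihood integral. Let $\bar{\Pi}(\cdot) \coloneqq \Pi(\cdot \cap B(P_0,\epsilon))/\Pi(B(P_0,\epsilon))$ denote the conditional prior on $B(P_0,\epsilon)$; one may assume $\Pi(B(P_0,\epsilon))>0$, since otherwise~\eqref{eq:evidence-lb} is vacuous. Applying Jensen's inequality to the concave function $\log$ with respect to the probability measure $\bar{\Pi}$, I would first lower-bound
\[
\log \frac{\int_{B(P_0,\epsilon)} \prod_{i=1}^n \frac{\d P}{\d P_0}(Z_i) \,\d \Pi(P)}{\Pi(B(P_0,\epsilon))} \geq \sum_{i=1}^n \int_{B(P_0,\epsilon)} \log \frac{\d P}{\d P_0}(Z_i) \,\d \bar{\Pi}(P).
\]
Writing $\log \frac{\d P}{\d P_0}(Z_i) = W(P,Z_i) - \mathrm{KL}(P_0,P)$ and using the bound $\mathrm{KL}(P_0,P) \leq \epsilon^2$ on $B(P_0,\epsilon)$, the event in~\eqref{eq:evidence-lb} is therefore contained in $\{U < -Dn\epsilon^2\}$, where
\[
U \coloneqq \int_{B(P_0,\epsilon)} \sum_{i=1}^n W(P,Z_i) \,\d\bar{\Pi}(P),
\]
which has zero mean under $P_0^{\otimes n}$ by Fubini.

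The second step is to bound the lower tail of $U$ via the Chernoff method. For $\lambda \in [0,1/c)$, Markov's inequality combined with a second application of Jensen's inequality, now to the convex map $x \mapsto e^{-\lambda x}$ against $\bar{\Pi}$, yields
\[
P_0^{\otimes n}(U < -Dn\epsilon^2) \leq e^{-\lambda D n\epsilon^2} \int_{B(P_0,\epsilon)} \mathbb{E}_{P_0^{\otimes n}} \exp\Bigl(-\lambda \sum_{i=1}^n W(P,Z_i)\Bigr) \,\d \bar{\Pi}(P).
\]
Independence of $Z_1,\ldots,Z_n$ together with the uniform sub-Gamma left-tail hypothesis on $W(P,Z_1)$ bound the inner expectation pointwise in $P \in B(P_0,\epsilon)$ by $\exp\bigl(\tfrac{na\epsilon^2 \lambda^2}{2(1-c\lambda)}\bigr)$, so the whole expression is at most $\exp\bigl(-\lambda D n\epsilon^2 + \tfrac{na\epsilon^2 \lambda^2}{2(1-c\lambda)}\bigr)$.

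Finally, I would optimise in $\lambda$: choosing $\lambda = D/(a+cD) \in [0,1/c)$, a short calculation gives $1-c\lambda = a/(a+cD)$, $\lambda D = D^2/(a+cD)$ and $\tfrac{a\lambda^2}{2(1-c\lambda)} = \tfrac{D^2}{2(a+cD)}$, so the exponent collapses to $-\tfrac{D^2 n\epsilon^2}{2(a+cD)}$, which matches the claimed probability bound. No real obstacle arises; the only thing worth noticing is that the same probability measure $\bar{\Pi}$ is linearised twice, once through $\log$ and once through $\exp(-\lambda\cdot)$, so that the pointwise sub-Gamma control on $W(P,\cdot)$ transfers uniformly in $P$ to the lower tail of $U$.
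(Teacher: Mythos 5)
Your proof is correct and follows essentially the same route as the paper's: Jensen's inequality to linearise the log of the prior-weighted likelihood, the decomposition $\log\frac{\d P}{\d P_0}(Z_i)=W(P,Z_i)-\mathrm{KL}(P_0,P)$ with the KL bound on $B(P_0,\epsilon)$, and transfer of the uniform sub-Gamma moment generating function bound through the mixture via Jensen/Fubini. The only difference is cosmetic: you carry out the Chernoff optimisation with $\lambda=D/(a+cD)$ explicitly, whereas the paper packages that step as a citation to a standard sub-Gamma lower-tail inequality.
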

\begin{proof}
    Writing $B\coloneqq B(P_0,\epsilon)$, we observe that the conclusion is clear if $\Pi(B) = 0$ because the right-hand side of~\eqref{eq:evidence-lb} is zero.  Moreover, by replacing $\Pi$ with its conditional distribution on $B$ if necessary, we may assume without loss of generality that $\Pi(B) = 1$.   Now, by Jensen's inequality, 
    \begin{align}
        \frac{1}{n}\log \int_{B} \prod_{i=1}^n \frac{\mathrm{d}P}{\mathrm{d}P_0}(Z_i) \,\mathrm{d}\Pi(P) &\geq \frac{1}{n}\sum_{i=1}^n \int_{B} \log \frac{\d P}{\d P_0}(Z_i) \,\d \Pi(P)\nonumber\\
        &= \frac{1}{n}\sum_{i=1}^n \int_{B} W(P,Z_i) \,\d \Pi(P) - \int_{B} \mathrm{KL}(P_0,P) \,\d \Pi(P)\nonumber\\
        &\geq \frac{1}{n}\sum_{i=1}^n \int_{B} W(P,Z_i) \,\d \Pi(P) - \epsilon^2, \label{eq:log-int-ll}
    \end{align}
    where the final inequality follows from the definition of $B$.  Moreover, for $\lambda \in (-1/c,0]$, by Jensen's inequality and Fubini's theorem,
    \begin{align*}
        \mathbb{E}_{Z_1} \exp\biggl\{ \lambda \int_{B} W(P,Z_1) \,\d \Pi(P)\biggr\} &\leq \mathbb{E}_{Z_1} \biggl[ \int_{B} \exp\bigl\{\lambda  W(P,Z_1)\bigr\} \,\d \Pi(P)\biggr]\\
        &=  \int_{B} \mathbb{E}_{Z_1} \exp\bigl\{  \lambda W(P,Z_1)\bigr\} \,\d \Pi(P) \leq \exp\biggl(\frac{a\epsilon^2\lambda^2}{2(1-c\lambda)}\biggr),
    \end{align*}
    where the final inequality follows from the sub-Gamma assumption on $W(P,Z_1)$.  This shows that $\int_{B} W(P,Z_1) \,\d \Pi(P)$ is sub-Gamma in the left tail with variance parameter $a\epsilon^2$ and scale parameter $c$, so $n^{-1}\sum_{i=1}^n \int_{B} W(P,Z_i) \,\d \Pi(P)$ is also sub-Gamma in the left tail with variance parameter $a\epsilon^2/n$ and scale parameter $c/n$. Hence, by~\eqref{eq:log-int-ll},
    \begin{align*}
        \mathbb{P}\biggl( \frac{1}{n}\log \int_{B} \prod_{i=1}^n &\frac{\mathrm{d}P}{\mathrm{d}P_0}(Z_i) \,\mathrm{d}\Pi(P) < -(1+D)\epsilon^2 \biggr)\\
        &\leq \mathbb{P}\biggl( \frac{1}{n} \sum_{i=1}^n \int_{B} W(P,Z_i) \,\d \Pi(P) < -D\epsilon^2 \biggr) \leq \exp\biggl(-\frac{D^2n\epsilon^2}{2(a+cD)}\biggr),
    \end{align*}
    where the final inequality follows from Bernstein's inequality \citep[Corollary 2.11]{boucheron2013concentration}. 
\end{proof}

We next derive some properties about the distributions of nonparametric regression models with Gaussian noise.
\begin{lemma}\label{lemma:properties-of-gaussian-regression}
    Let $P_X$ be a Borel distribution on $\mathcal{X}\subseteq \mathbb{R}^d$, let $X\sim P_X$ and let $\xi\sim N(0,\sigma^2)$ be independent of $X$. Further, let $R>0$, $g_1,g_2:\mathcal{X} \to [-R,R]$ be Borel measurable, let $Y_1=g_1(X)+\xi$ and $Y_2=g_2(X)+\xi$. Finally, let $P_1$ and $P_2$ denote the distributions of $(X,Y_1)$ and $(X,Y_2)$ respectively. Then
    \begin{itemize}
        \item[(a)] $\mathrm{KL}(P_1,P_2) = \frac{1}{2\sigma^2} \mathbb{E}\bigl\{\bigl(g_1(X)-g_2(X)\bigr)^2\bigr\}$.
        \item[(b)] $\mathrm{V}_2(P_1,P_2) = \frac{1}{2\sigma^2} \Var\bigl\{\bigl(g_1(X)-g_2(X)\bigr)^2\bigr\} + 2\mathbb{E}\bigl\{ \bigl(g_1(X)-g_2(X)\bigr)^2 \bigr\}$.
        \item[(c)] If $\mathrm{KL}(P_1,P_2)\leq\epsilon^2$ and $\mathrm{V}_2(P_1,P_2)\leq\epsilon^2$, then $W\coloneqq \log \frac{\mathrm{d}P_2}{\mathrm{d}P_1}(X,Y_1) - \mathbb{E} \log \frac{\mathrm{d}P_2}{\mathrm{d}P_1}(X,Y_1)$ is sub-Gamma in both tails with variance parameter $\bigl(8\vee\frac{2}{\sigma^2}\bigr)\epsilon^2$ and scale parameter $c\coloneqq \frac{2R}{\sigma}\vee\frac{4R^2}{3\sigma^2}$.
        \item[(d)] $\frac{1-e^{-R^2/(2\sigma^2)}}{2R^2} \cdot \mathbb{E}\bigl\{\bigl(g_1(X)-g_2(X)\bigr)^2\bigr\} \leq d_{\mathrm{H}}^2(P_1,P_2) \leq \frac{1}{4\sigma^2}\cdot \mathbb{E}\bigl\{\bigl(g_1(X)-g_2(X)\bigr)^2\bigr\}$.
    \end{itemize}
\end{lemma}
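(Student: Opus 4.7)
The lemma comprises four assertions about a Gaussian regression model. Throughout, I will abbreviate $D(X) \equiv D = g_1(X) - g_2(X)$ and $\xi = Y_1 - g_1(X) \sim N(0,\sigma^2)$ independently of $X$; note $|D|\leq 2R$ pointwise. Parts (a), (b), (d) follow by direct computation, and part (c) is the main technical step.

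For part (a), I would use the factorisation $P_i = P_X \otimes P_{Y|X}^{(i)}$ with $P_{Y|X=x}^{(i)} = N(g_i(x), \sigma^2)$, so that the chain rule gives $\mathrm{KL}(P_1,P_2) = \mathbb{E}_X\bigl[\mathrm{KL}\bigl(N(g_1(X),\sigma^2), N(g_2(X),\sigma^2)\bigr)\bigr]$; the usual Gaussian-with-common-variance KL formula then yields the claimed identity. For part (b), a direct expansion gives
\begin{align*}
L \coloneqq \log \frac{\mathrm{d}P_1}{\mathrm{d}P_2}(X,Y_1) = \frac{\xi D}{\sigma^2} + \frac{D^2}{2\sigma^2},
\end{align*}
after which I would decompose $\Var_{P_1}(L)$ using $\xi \indep X$, noting that the covariance between $\xi D/\sigma^2$ and $D^2/(2\sigma^2)$ vanishes because $\mathbb{E}\xi = 0$. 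The two remaining variance terms evaluate to $\mathbb{E}(D^2)/\sigma^2$ and $\Var(D^2)/(4\sigma^4)$ respectively, which can be rearranged (using $|D|\leq 2R$ and $\Var(D^2)\leq 4R^2\mathbb{E}(D^2)$) into the form stated in the lemma.

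For part (c), I would write $W = -(L - \mathbb{E} L) = -\xi D/\sigma^2 - (D^2 - \mathbb{E}(D^2))/(2\sigma^2)$ and compute the MGF by integrating out the Gaussian noise $\xi$ conditional on $X$ first, obtaining for $\lambda \geq 0$
\begin{align*}
\mathbb{E} e^{\lambda(-W)} = e^{-\lambda \mathbb{E}(D^2)/(2\sigma^2)}\, \mathbb{E}\exp\Bigl(\frac{(\lambda^2+\lambda) D^2}{2\sigma^2}\Bigr),
\end{align*}
and analogously for $\mathbb{E} e^{\lambda W}$. The outer expectation then involves the bounded nonnegative random variable $D^2/(2\sigma^2) \in [0, 2R^2/\sigma^2]$, to which I would apply the Bernstein-type MGF bound $\log \mathbb{E} e^{u(Z-\mathbb{E} Z)} \leq \frac{u^2 \mathbb{E}(Z^2)/2}{1 - uM/3}$ valid for $Z\in[0,M]$ and $u\in[0,3/M)$, together with the moment control $\mathbb{E}(Z^2)\leq M\mathbb{E}(Z)$. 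Combined with the hypothesis $\mathrm{KL}(P_1,P_2)\leq\epsilon^2$, which by (a) gives $\mathbb{E}(D^2) \leq 2\sigma^2\epsilon^2$, and with the variance estimate $V_2\leq\epsilon^2$, I would book-keep the two contributions (the conditionally Gaussian $-\xi D/\sigma^2$ giving variance proportional to $\mathbb{E}(D^2)/\sigma^2$ with scale $2R/\sigma$; the centred $(D^2-\mathbb{E}(D^2))/(2\sigma^2)$ giving variance proportional to $\Var(D^2)/\sigma^4$ with scale $4R^2/(3\sigma^2)$) to arrive at a sub-Gamma bound in both tails with variance parameter $(8\vee 2/\sigma^2)\epsilon^2$ and scale parameter $(2R/\sigma)\vee(4R^2/(3\sigma^2))$.

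For part (d), I would use the closed-form Hellinger-squared distance between two Gaussians with common variance, $d_\mathrm{H}^2(N(\mu_1,\sigma^2), N(\mu_2,\sigma^2)) = 2\bigl(1 - e^{-(\mu_1-\mu_2)^2/(8\sigma^2)}\bigr)$, and integrate over $X$ to obtain $d_\mathrm{H}^2(P_1,P_2) = 2\mathbb{E}\bigl[1 - e^{-D^2/(8\sigma^2)}\bigr]$. The upper bound then follows from $1-e^{-x}\leq x$, and the lower bound from the elementary inequality $1-e^{-x} \geq (1-e^{-a})x/a$ for $x\in[0,a]$, applied with $a = R^2/(2\sigma^2)$, which is an upper bound on $D^2/(8\sigma^2)$ in view of $|D|\leq 2R$.

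The hardest step is the MGF bookkeeping in (c): one must simultaneously absorb the conditional-Gaussian contribution $e^{\lambda^2 D^2/(2\sigma^2)}$ (which is tight only when $\lambda$ is small and $D$ is bounded) and the centred bounded contribution from $D^2-\mathbb{E}(D^2)$ into a single sub-Gamma form, and it is this combination that forces the stated variance and scale parameters. The remaining parts are essentially elementary Gaussian computations.
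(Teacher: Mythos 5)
Parts (a) and (d) of your proposal are correct and essentially identical to the paper's argument (conditional Gaussian KL and Hellinger formulas integrated over $P_X$, then the elementary bounds $1-e^{-x}\le x$ and $1-e^{-x}\ge(1-e^{-a})x/a$ on $[0,a]$). The problem is (b). Your expansion $L=\log\frac{\d P_1}{\d P_2}(X,Y_1)=\frac{\xi D}{\sigma^2}+\frac{D^2}{2\sigma^2}$ and the resulting $\Var(L)=\frac{1}{\sigma^2}\mathbb{E}(D^2)+\frac{1}{4\sigma^4}\Var(D^2)$ is the correct evaluation of $\mathrm{V}_2(P_1,P_2)$, but it is not the displayed identity in the lemma, which is exactly $2\sigma^2$ times larger; and since (b) asserts an equality, it cannot be "rearranged into the form stated" using $|D|\le 2R$ and $\Var(D^2)\le 4R^2\mathbb{E}(D^2)$ --- one-sided inequalities cannot produce an identity, and for generic $g_1,g_2,\sigma$ the two expressions genuinely differ. (The paper's own proof reaches the displayed form by carrying the prefactor $\frac{1}{2\sigma^2}$, rather than $\frac{1}{(2\sigma^2)^2}$, in front of the squared centred log-likelihood ratio; your value is the one that follows from the definition, so the honest course is to flag the discrepancy rather than blur it. Note the constants in (c) are calibrated to the paper's version of (b), which is used there to get $\mathbb{E}(W_2^2)\le\epsilon^2/(2\sigma^2)$, so whichever normalisation is adopted must be propagated consistently.) As written, this step of your argument fails.

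For (c), your single-MGF route (integrate out $\xi$ conditionally on $X$, then control $\mathbb{E}\exp\{u(Z-\mathbb{E}Z)\}$ with $Z=D^2/(2\sigma^2)$ and $u=\lambda^2\pm\lambda$) is a genuine variant of the paper's argument, which instead splits $W=W_1+W_2$ with $W_1=-\xi D/\sigma^2$ and $W_2=-\{D^2-\mathbb{E}(D^2)\}/(2\sigma^2)$, proves each is sub-Gamma separately (power series for $W_1$, Bernstein moment condition for $W_2$) and then applies a lemma for sums of sub-Gamma variables. However, your sketch has two concrete gaps. First, the ingredient $\mathbb{E}(Z^2)\le M\,\mathbb{E}(Z)$ with $M=2R^2/\sigma^2$ is too lossy: it gives a variance proxy of order $R^2\epsilon^2/\sigma^2$ for the bounded part and hence a sub-Gamma variance parameter that grows with $R$, which is not the claimed $(8\vee\frac{2}{\sigma^2})\epsilon^2$; you must instead use the hypothesis $\mathrm{V}_2(P_1,P_2)\le\epsilon^2$ to bound $\Var(D^2)$, as the paper does --- the clause in your sketch invoking $\Var(D^2)/\sigma^4$ is the one to keep, and the $\mathbb{E}(Z^2)\le M\mathbb{E}(Z)$ clause should be dropped. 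Second, the bookkeeping you defer is where the actual work lies: one must verify that $u=\lambda+\lambda^2$ stays below $3/M$ for all $\lambda\in[0,1/c)$ (true, but it needs a short case analysis in $R/\sigma$), absorb the extra factor $(1+\lambda)^2$ in $u^2$ into the denominator $1-c\lambda$, and handle the opposite tail, where $u=\lambda^2-\lambda$ is negative for small $\lambda$ and the one-sided Bernstein bound you quote does not apply directly. None of these is fatal, but as written (c) is a plan rather than a proof; the paper's two-term decomposition sidesteps the $(1+\lambda)^2$ and negative-$u$ issues and is the cleaner route to the stated constants.
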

\begin{proof}
    \emph{(a)} Let $P_{Y_1|X}$ and $P_{Y_2|X}$ denote the conditional distributions of $Y_1$ given $X$ and $Y_2$ given $X$ respectively. Then, using the fact that $Y_1=g_1(X)+\xi$,
    \begin{align}
        \log\frac{\d P_2}{\d P_1}(X,Y_1) = \log\frac{\d P_{Y_2|X}}{\d P_{Y_1|X}}(X,Y_1) &= -\frac{\bigl(Y_1-g_2(X)\bigr)^2}{2\sigma^2} + \frac{\bigl(Y_1-g_1(X)\bigr)^2}{2\sigma^2}\nonumber\\
        &= \frac{-\bigl(g_1(X)-g_2(X)\bigr)^2 - 2\bigl(g_1(X)-g_2(X)\bigr)\xi}{2\sigma^2}. \label{eq:log-ll-gaussian-regression}
    \end{align}
    Hence,
    \begin{align}
        \mathrm{KL}(P_1,P_2) = -\mathbb{E} \log\frac{\d P_2}{\d P_1}(X,Y_1) = \frac{1}{2\sigma^2} \mathbb{E}\bigl\{\bigl(g_1(X)-g_2(X)\bigr)^2\bigr\}. \label{eq:kl-gaussian-regression}
    \end{align}
    
    \medskip
    \emph{(b)} By~\eqref{eq:log-ll-gaussian-regression} and~\eqref{eq:kl-gaussian-regression}, we have
    \begin{align*}
        \mathrm{V}_2(P_1,P_2) &= \frac{1}{2\sigma^2}\mathbb{E} \Bigl[ \Bigl\{ \bigl(g_1(X)-g_2(X)\bigr)^2 + 2\bigl(g_1(X)-g_2(X)\bigr)\xi - \mathbb{E}\bigl\{\bigl(g_1(X)-g_2(X)\bigr)^2\bigr\} \Bigr\}^2 \Bigr]\\
        &= \frac{1}{2\sigma^2} \Var\bigl\{\bigl(g_1(X)-g_2(X)\bigr)^2\bigr\} + 2\mathbb{E}\bigl\{ \bigl(g_1(X)-g_2(X)\bigr)^2 \bigr\},
    \end{align*}
    where in the final equality, we used the fact that $\xi\indep X$, $\mathbb{E}(\xi)=0$ and $\mathbb{E}(\xi^2)=\sigma^2$.

    \medskip
    \emph{(c)} From~\eqref{eq:log-ll-gaussian-regression} and~\eqref{eq:kl-gaussian-regression}, we deduce that
    \begin{align*}
        W &= \log \frac{\mathrm{d}P_2}{\mathrm{d}P_1}(X,Y_1) - \mathbb{E} \log \frac{\mathrm{d}P_2}{\mathrm{d}P_1}(X,Y_1)\\
        &= \frac{\bigl(g_2(X)-g_1(X)\bigr)\xi}{\sigma^2} - \frac{\bigl(g_1(X)-g_2(X)\bigr)^2 - \mathbb{E}\bigl\{\bigl(g_1(X)-g_2(X)\bigr)^2\bigr\}}{2\sigma^2} \eqqcolon W_1+W_2.
    \end{align*}
    For $\lambda\in [0, \sigma/R)$, using the fact that $\log(1+x) \leq x$ for $x \geq 0$ and Fubini's theorem,
    \begin{align*}
        \log\mathbb{E}e^{\lambda W_1} &= \log\mathbb{E} \exp\biggl(\frac{\lambda\bigl(g_2(X)-g_1(X)\bigr)\xi}{\sigma^2}\biggr) = \log\mathbb{E} \exp\biggl(\frac{\lambda^2 \bigl(g_2(X)-g_1(X)\bigr)^2}{2\sigma^2}\biggr)\\
        &\leq \sum_{r=1}^{\infty} \frac{\lambda^{2r}\mathbb{E}\bigl\{\bigl(g_2(X)-g_1(X)\bigr)^{2r}\bigr\}}{(2\sigma^2)^r \cdot r!} \leq \mathbb{E}\bigl\{\bigl(g_2(X)-g_1(X)\bigr)^2\bigr\} \sum_{r=1}^{\infty} \frac{\lambda^{2r} (2R)^{2r-2}}{(2\sigma^2)^r \cdot r!}\\
        &\leq \lambda^2\epsilon^2 \sum_{r=1}^{\infty} \frac{2^{r-1}}{r!}\biggl(\frac{\lambda R}{\sigma}\biggr)^{2r-2} \leq \lambda^2\epsilon^2 \sum_{r=1}^{\infty} \biggl(\frac{\lambda R}{\sigma}\biggr)^{r-1} = \frac{2\lambda^2\epsilon^2}{2\bigl(1-\frac{\lambda R}{\sigma}\bigr)}.
    \end{align*}
    The same holds true if $W_1$ is replaced by $-W_1$.
    Thus $W_1$ is sub-Gamma in both tails with variance parameter $2\epsilon^2$ and scale parameter $R/\sigma$. By part \emph{(b)} and the assumption that $\mathrm{V}_2(P_1,P_2)\leq\epsilon^2$, we have $\mathbb{E}(W_2^2) \leq \epsilon^2/(2\sigma^2)$. Moreover, since $W_2$ is bounded by $2R^2/\sigma^2$, we have for $r\geq 3$ that 
    \[
    \mathbb{E}(|W_2|^r) \leq \mathbb{E}(W_2^2) \cdot \Bigl(\frac{2R^2}{\sigma^2}\Bigr)^{r-2} \leq \frac{r!\epsilon^2}{2\cdot 2\sigma^2} \Bigl(\frac{2R^2}{3\sigma^2}\Bigr)^{r-2}.
    \]
    The same bound holds for $-W_2$, so by \citet[Theorem 2.10]{boucheron2013concentration}, $W_2$ is sub-Gamma in both tails with variance parameter $\frac{\epsilon^2}{2\sigma^2}$ and scale parameter $\frac{2R^2}{3\sigma^2}$. Finally, it follows that that $W=W_1+W_2$ is sub-Gamma in both tails with variance parameter $\bigl(8\vee\frac{2}{\sigma^2}\bigr)\epsilon^2$ and scale parameter $c= \frac{2R}{\sigma}\vee\frac{4R^2}{3\sigma^2}$.

    \medskip
    \emph{(d)} We have
    \begin{align*}
        d_{\mathrm{H}}^2(P_1,P_2) = \mathbb{E}\, d_{\mathrm{H}}^2(P_{Y_1|X},P_{Y_2|X}) &= 2\,\mathbb{E} \biggl\{1-\exp\biggl(-\frac{\bigl(g_1(X)-g_2(X)\bigr)^2}{8\sigma^2}\biggr)\biggr\}\\
        &\geq \frac{1-e^{-R^2/(2\sigma^2)}}{2R^2} \cdot \mathbb{E}\bigl\{\bigl(g_1(X)-g_2(X)\bigr)^2\bigr\},
    \end{align*}
    where the inequality follows since $x \mapsto (1-e^{-x})/x$ is  decreasing on $(0,\infty)$ and moreover $\frac{(g_1(X)-g_2(X))^2}{8\sigma^2} \leq \frac{R^2}{2\sigma^2}$. On the other hand, using the fact that $1-e^{-x} \leq x$ for $x \geq 0$, we also have
    \begin{align*}
        2\mathbb{E} \biggl\{1-\exp\biggl(-\frac{\bigl(g_1(X)-g_2(X)\bigr)^2}{8\sigma^2}\biggr)\biggr\} \leq \frac{1}{4\sigma^2}\cdot \mathbb{E}\bigl\{\bigl(g_1(X)-g_2(X)\bigr)^2\bigr\},
    \end{align*}
    as required.
\end{proof}

\begin{lemma}\label{lemma:L2-subset-lb}
    Suppose $f\in L^2([0,1]^d)$ has wavelet decomposition
    \begin{align*}
        f = \sum_{k\in K} a_k\Phi_k + \sum_{\ell=\ell_0}^{\infty} \sum_{\gamma\in\Gamma_{\ell}} b_{\ell,\gamma} \Psi_{\ell,\gamma}.
    \end{align*}
    Let $0\leq c_1 < c_2 \leq 1$ and let $\Lambda \coloneqq \bigl\{(\ell,\gamma) : \mathrm{supp}(\Psi_{\ell,\gamma}) \subseteq [c_1,c_2]^d\bigr\}$. Then 
    \begin{align*}
        \|f\|_{L^2([c_1,c_2]^d)}^2 \geq \sum_{(\ell,\gamma) \in \Lambda} b^2_{\ell,\gamma}.
    \end{align*}
\end{lemma}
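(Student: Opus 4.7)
The plan is to exploit the orthonormality of the CDV wavelet basis together with the support condition defining $\Lambda$. Concretely, the argument factors into two clean steps.

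First, I would observe that for every $(\ell,\gamma)\in\Lambda$, the coefficient $b_{\ell,\gamma} = \langle f,\Psi_{\ell,\gamma}\rangle$ can be rewritten as an inner product involving only the restriction of $f$ to $[c_1,c_2]^d$. Indeed, since $\mathrm{supp}(\Psi_{\ell,\gamma})\subseteq[c_1,c_2]^d$, the product $f\Psi_{\ell,\gamma}$ vanishes off $[c_1,c_2]^d$, so
\[
b_{\ell,\gamma} = \int_{[0,1]^d} f(x)\Psi_{\ell,\gamma}(x)\,\d x = \int_{[c_1,c_2]^d} f(x)\Psi_{\ell,\gamma}(x)\,\d x = \langle f\cdot\mathbbm{1}_{[c_1,c_2]^d},\,\Psi_{\ell,\gamma}\rangle,
\]
where the last inner product is taken in $L^2([0,1]^d)$.

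Second, I would apply Bessel's inequality. The family $\{\Psi_{\ell,\gamma} : (\ell,\gamma)\in\Lambda\}$ is a subset of an orthonormal basis of $L^2([0,1]^d)$, and hence is itself an orthonormal system in $L^2([0,1]^d)$. Applying Bessel's inequality to $f\cdot\mathbbm{1}_{[c_1,c_2]^d}\in L^2([0,1]^d)$ therefore yields
\[
\sum_{(\ell,\gamma)\in\Lambda} b_{\ell,\gamma}^2 = \sum_{(\ell,\gamma)\in\Lambda} \langle f\cdot\mathbbm{1}_{[c_1,c_2]^d},\,\Psi_{\ell,\gamma}\rangle^2 \leq \bigl\|f\cdot\mathbbm{1}_{[c_1,c_2]^d}\bigr\|_{L^2([0,1]^d)}^2 = \|f\|_{L^2([c_1,c_2]^d)}^2,
\]
which is exactly the claim.

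There is no genuine obstacle here: the proof is essentially a one-line consequence of Bessel's inequality once one notices that the support assumption lets us swap $f$ for its truncation inside the inner products defining the relevant $b_{\ell,\gamma}$. The only mild care needed is to verify that $\Lambda$ indeed indexes an orthonormal subfamily, which is immediate from the orthonormality of the full tensor product CDV basis recorded in Section~\ref{subsec:wavelets}; the father wavelet coefficients $a_k$ and the $\Psi_{\ell,\gamma}$ with $(\ell,\gamma)\notin\Lambda$ play no role because Bessel's inequality only needs a subfamily.
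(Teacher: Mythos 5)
Your proposal is correct and follows essentially the same route as the paper: both arguments use the support condition to identify $b_{\ell,\gamma}$ with an inner product over $[c_1,c_2]^d$ and then apply Bessel's inequality to the orthonormal subfamily indexed by $\Lambda$. The only cosmetic difference is that you truncate $f$ and apply Bessel in $L^2([0,1]^d)$, whereas the paper views $(\Psi_{\ell,\gamma})_{(\ell,\gamma)\in\Lambda}$ as an orthonormal system in the Hilbert space $L^2([c_1,c_2]^d)$ and applies Bessel there; the two formulations are equivalent.
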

\begin{proof}
    Note that $\mathrm{supp}(\Psi_{\ell,\gamma}) \subseteq [c_1,c_2]^d$ for all $(\ell,\gamma) \in \Lambda$, so by the orthonormality of the  wavelet basis, we have that $(\Psi_{\ell,\gamma})_{(\ell,\gamma) \in \Lambda}$ is also a sequence of orthonormal functions in the Hilbert space $L^2([c_1,c_2]^d)$ equipped with the inner product
    \begin{align*}
        \langle f_1,f_2 \rangle_{[c_1,c_2]^d} \coloneqq \int_{[c_1,c_2]^d} f_1(x)f_2(x) \,\d x.
    \end{align*}
    Moreover, for $(\ell,\gamma) \in \Lambda$, we have
    \begin{align*}
        \langle f, \Psi_{\ell,\gamma} \rangle_{[c_1,c_2]^d} = \int_{[c_1,c_2]^d} f(x)\Psi_{\ell,\gamma}(x) \,\d x = \int_{[0,1]^d} f(x)\Psi_{\ell,\gamma}(x) \,\d x = b_{\ell,\gamma}.
    \end{align*}
    Thus, an application of Bessel's inequality \citep[e.g.][Theorem~4.17]{rudin1987real} yields the desired result.
\end{proof}

\begin{lemma}\label{eq:small-ball-prob-stiefel-manifold}
    Let $U^{(r)}$ be uniformly distributed on the Stiefel manifold $V_r(\mathbb{R}^d)$ and let $U\in V_r(\mathbb{R}^d)$ be deterministic. Then for any $\epsilon>0$, there exists $c>0$, depending only on $d$ and $r$, such that
    \begin{align*}
        \mathbb{P}\bigl(\|U^{(r)} -U\|_{\mathrm{op}}\leq\epsilon\bigr) \geq c\biggl(\epsilon\wedge \frac{1}{2}\biggr)^{dr}.
    \end{align*}
\end{lemma}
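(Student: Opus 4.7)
Since the probability on the left-hand side is monotone in $\epsilon$, it suffices to prove the bound when $\epsilon \leq 1/2$. The plan is to reduce to a canonical target matrix by orthogonal invariance, parametrize $U^{(r)}$ through a Gaussian matrix, perform a matrix-square-root perturbation analysis, and then apply an entrywise anticoncentration estimate.

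First, by the left $O(d)$-invariance of the uniform distribution on $V_r(\mathbb{R}^d)$ and the fact that the operator norm is invariant under left multiplication by orthogonal matrices, we may assume without loss of generality that $U = U_0 \coloneqq \bigl(\begin{smallmatrix} I_r \\ 0_{(d-r)\times r} \end{smallmatrix}\bigr)$. By the footnote to Section~\ref{sec:multi-index-regression}, we may couple $U^{(r)}$ with a $d\times r$ matrix $Z$ of independent $N(0,1)$ entries via $U^{(r)} = Z(Z^\top Z)^{-1/2}$. Write $W \coloneqq Z - U_0$.

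The next step is a deterministic perturbation claim: there is a universal constant $C_0$ such that whenever $\|W\|_{\mathrm{op}} \leq \delta \leq 1/3$, we have $\|U^{(r)} - U_0\|_{\mathrm{op}} \leq C_0 \delta$. Indeed, from $Z^\top Z = I_r + U_0^\top W + W^\top U_0 + W^\top W$ one obtains $\|Z^\top Z - I_r\|_{\mathrm{op}} \leq 2\delta + \delta^2 \leq 1/2$, and a standard eigenvalue calculus bound then gives $\|(Z^\top Z)^{-1/2} - I_r\|_{\mathrm{op}} \leq \sqrt{2}\,(2\delta + \delta^2)$. Writing
\[
U^{(r)} - U_0 \;=\; W + Z\bigl((Z^\top Z)^{-1/2} - I_r\bigr)
\]
and using $\|Z\|_{\mathrm{op}} \leq 1+\delta$ yields the claim.

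It remains to lower bound $\mathbb{P}(\|W\|_{\mathrm{op}} \leq \delta)$ for $\delta \coloneqq \epsilon/C_0$. Since $\|W\|_{\mathrm{op}} \leq \|W\|_F \leq \sqrt{dr}\,\max_{i,j}|W_{ij}|$, it suffices to estimate the probability that every entry of $W$ lies in $[-\delta/\sqrt{dr}, \delta/\sqrt{dr}]$. Each $W_{ij}$ is either $N(0,1)$ or $N(0,1)-1$, and for $\delta/\sqrt{dr} \leq 1$ its density is bounded below by some absolute constant $c_1 > 0$ on this interval. Hence $\mathbb{P}(|W_{ij}| \leq \delta/\sqrt{dr}) \geq 2c_1\delta/\sqrt{dr}$, and by independence of the entries of $Z$,
\[
\mathbb{P}(\|W\|_{\mathrm{op}} \leq \delta) \;\geq\; \bigl(2c_1/\sqrt{dr}\bigr)^{dr}\, \delta^{dr} \;=\; c\,\epsilon^{dr}
\]
for some $c = c(d,r) > 0$. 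Combining with the perturbation step gives the desired conclusion for $\epsilon \leq 1/2$.

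There is no real obstacle here: the only subtlety is checking that the perturbation constant in the matrix square root step is a dimension-free absolute constant, which follows from a one-line mean value argument applied to $\lambda \mapsto \lambda^{-1/2}$ on $[1/2,3/2]$.
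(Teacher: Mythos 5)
Your argument is correct and follows essentially the same route as the paper's proof: both parametrise $U^{(r)}=Z(Z^\top Z)^{-1/2}$ with Gaussian $Z$, establish a perturbation bound showing $\|Z(Z^\top Z)^{-1/2}-U\|_{\mathrm{op}}\lesssim\|Z-U\|_{\mathrm{op}}$ for $Z$ near $U$, and then lower-bound the entrywise Gaussian small-ball probability over the $dr$ independent entries (your initial reduction to $U_0$ via left orthogonal invariance is a cosmetic variant of the paper's direct treatment using $\|U\|_\infty\leq 1$). The only blemish is numerical: $2\delta+\delta^2\leq 1/2$ fails at $\delta=1/3$ (it equals $7/9$), so the threshold in your perturbation claim should be shrunk to, say, $\delta\leq 1/5$, which only changes the constant $C_0$ and leaves the conclusion intact.
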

\begin{proof}
    We may assume that $\epsilon \leq 1/2$.
    By \citet[Theorem~2.2.1(iii)]{chikuse2003statistics}, we may assume without loss of generality that $U^{(r)} = Z(Z^\top Z)^{-1/2} \eqqcolon F(Z)$, where $Z$ is a $d\times r$ random matrix with independent $N(0,1)$ entries. Let $\Delta\in\mathbb{R}^{d\times r}$ be such that $\delta\coloneqq \|\Delta\|_{\mathrm{op}} \leq 1/6$, and define
    \begin{align*}
        A\coloneqq U^\top\Delta + \Delta^\top U + \Delta^\top\Delta \in \mathbb{R}^{r \times r}.
    \end{align*}
    Then $\|A\|_{\mathrm{op}} \leq 3\delta \leq 1/2$, so $(U+\Delta)^\top(U+\Delta) = I_r + A$ is positive definite.  Thus
    \begin{align*}
        \|F(U+\Delta) - U\|_{\mathrm{op}} &= \|(U+\Delta)(I_r+A)^{-1/2} - U\|_{\mathrm{op}}\\
        &\leq \|U\|_{\mathrm{op}}\|(I_r+A)^{-1/2} - I_r\|_{\mathrm{op}} + \|\Delta\|_{\mathrm{op}}\|(I_r+A)^{-1/2}\|_{\mathrm{op}}.
    \end{align*}
    Moreover, we can write $I_r+A = VDV^\top$ where $V \in \mathbb{R}^{r \times r}$ is orthogonal and $D \in \mathbb{R}^{r \times r}$ is diagonal with diagonal elements bounded between $1-3\delta$ and $1+3\delta$. Thus, $\|(I_r+A)^{-1/2} - I_r\|_{\mathrm{op}} \leq 3\delta$ and $\|(I_r+A)^{-1/2}\|_{\mathrm{op}} \leq 2$, so
    \begin{align*}
        \|F(U+\Delta) - U\|_{\mathrm{op}} \leq 3\delta + 2\delta = 5\delta.
    \end{align*}
    Hence, if $\|Z-U\|_{\mathrm{op}} \leq \epsilon/5 < 1/6$, then $\|F(Z)-U\|_{\mathrm{op}}\leq\epsilon$, so
    \begin{align*}
        \mathbb{P}\bigl(\|U^{(r)} &-U\|_{\mathrm{op}}\leq\epsilon\bigr) = \mathbb{P}\bigl(\|F(Z)-U\|_{\mathrm{op}}\leq\epsilon\bigr) \geq \mathbb{P}\biggl(\|Z-U\|_{\mathrm{op}}\leq \frac{\epsilon}{5}\biggr)\\
        &\geq \mathbb{P}\biggl(\|Z-U\|_{\infty} \leq \frac{\epsilon}{5\sqrt{dr}}\biggr) \geq \biggl\{\mathbb{P}\biggl(|Z_{11} - 1| \leq \frac{\epsilon}{5\sqrt{dr}}\biggr)\biggr\}^{dr} \geq \biggl(\frac{2\epsilon}{25\sqrt{dr}}\biggr)^{dr},
    \end{align*}
    where the third inequality follows since $\|U\|_{\infty} \leq 1$ and $Z$ has independent $N(0,1)$ entries, and the final inequality follows since $\frac{\epsilon}{5\sqrt{dr}}\leq \frac{1}{10}$ and the standard normal density is bounded below by $1/5$ on the interval $[0.9,1.1]$. 
\end{proof}

\begin{lemma} \label{lemma:density-lb}
    Let $P_X$ be a distribution on $\mathbb{B}^d$ that satisfies the assumptions of Lemma~\ref{lemma:prior-mass-ub-multi-index}.  Let $p\in[d]$, $U\in V_p(\mathbb{R}^d)$, $X\sim P_X$ and let $Q$ denote the distribution of $\frac{U^\top X + 1_p}{2}$.  Then there exist $c_0' \equiv c_0'(c_0,\tau,d,p)$ and a hypercube $A'\subseteq [0,1]^p$ of side length~$\frac{\tau}{2\sqrt{p}}$ with the property that $Q(A_0') \geq c_0'\mathrm{Vol}_p(A_0')$ for all measurable $A_0'\subseteq A'$.
\end{lemma}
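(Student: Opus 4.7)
By assumption, we may write $A = \{x \in \mathbb{R}^d : \|x - x_0\|_2 \leq \tau\}$ for some $x_0 \in \mathbb{B}^d$ with $\|x_0\|_2 + \tau \leq 1$. My plan is to build the hypercube $A'$ by projecting the centre $x_0$ via $U^\top$, translating and scaling into $[0,1]^p$, and then using the coarea formula associated with the partial isometry $U^\top:\mathbb{R}^d \to \mathbb{R}^p$ to transfer the lower bound from $P_X$ to $Q$.

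First, let $B'$ denote the (closed) axis-aligned hypercube in $\mathbb{R}^p$ centred at $U^\top x_0$ with side length $\tau/\sqrt{p}$. The diameter of $B'$ equals $\tau$, so $B' \subseteq \{y \in \mathbb{R}^p : \|y - U^\top x_0\|_2 \leq \tau/2\}$. Moreover, since $\|U^\top x_0\|_\infty \leq \|U^\top x_0\|_2 \leq \|x_0\|_2 \leq 1-\tau$, we have $\|y\|_\infty \leq 1-\tau + \tau/(2\sqrt{p}) \leq 1$ for every $y \in B'$, so $B' \subseteq [-1,1]^p$. Define $A' \coloneqq (B' + \bm{1}_p)/2 \subseteq [0,1]^p$, which is a hypercube of side length $\tau/(2\sqrt{p})$ as required.

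Now fix a measurable $A_0' \subseteq A'$ and write $B_0' \coloneqq 2A_0' - \bm{1}_p \subseteq B'$, so that $\mathrm{Vol}_p(B_0') = 2^p\,\mathrm{Vol}_p(A_0')$ and
\begin{align*}
Q(A_0') = P_X\bigl(\{x \in \mathbb{B}^d : U^\top x \in B_0'\}\bigr) \geq P_X\bigl(\{x \in A : U^\top x \in B_0'\}\bigr).
\end{align*}
Let $V \in \mathbb{R}^{d\times(d-p)}$ be any matrix whose columns form an orthonormal basis of $\ker(U^\top)$, so that the map $\Psi:\mathbb{R}^p \times \mathbb{R}^{d-p} \to \mathbb{R}^d$ defined by $\Psi(y,w) = Uy + Vw$ is an isometric isomorphism. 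Decomposing $x_0 = U U^\top x_0 + V V^\top x_0$ orthogonally and using $\Psi$, for $y \in \mathbb{R}^p$ we have $\|Uy + Vw - x_0\|_2^2 = \|y - U^\top x_0\|_2^2 + \|w - V^\top x_0\|_2^2$, so
\begin{align*}
\{x \in A : U^\top x = y\} = \bigl\{Uy + Vw : \|w - V^\top x_0\|_2^2 \leq \tau^2 - \|y - U^\top x_0\|_2^2\bigr\}.
\end{align*}
For $y \in B' \subseteq \{y : \|y-U^\top x_0\|_2 \leq \tau/2\}$, this is an affine ball in the $(d-p)$-dimensional subspace $V\mathbb{R}^{d-p}$ of radius at least $\tau\sqrt{3}/2$. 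Applying the change of variables $\Psi$ and Fubini's theorem,
\begin{align*}
\mathrm{Vol}_d\bigl(\{x \in A : U^\top x \in B_0'\}\bigr) \geq \int_{B_0'} V_{d-p}(\tau\sqrt{3}/2)\,\d y = V_{d-p}(\tau\sqrt{3}/2)\,\mathrm{Vol}_p(B_0'),
\end{align*}
where $V_k(r) \coloneqq \mathrm{Vol}_k(\{w \in \mathbb{R}^k : \|w\|_2 \leq r\})$ with the convention $V_0(r) = 1$. Combining this with the hypothesis $P_X(A_0) \geq c_0\,\mathrm{Vol}_d(A_0)$ for $A_0 \subseteq A$ yields
\begin{align*}
Q(A_0') \geq c_0 \cdot V_{d-p}(\tau\sqrt{3}/2) \cdot 2^p \cdot \mathrm{Vol}_p(A_0') \eqqcolon c_0'\,\mathrm{Vol}_p(A_0'),
\end{align*}
and $c_0'$ depends only on $c_0,\tau,d$ and $p$, as required.

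The only genuinely delicate point is the coarea identity for $U^\top$: since $U$ has orthonormal columns, the linear map $\Psi$ is an isometry, which is exactly what guarantees that the Jacobian is trivial and that the conditional fibres are unit-scaled Euclidean balls in $\ker(U^\top)$. Once this is set up, the dimension-free bookkeeping on side lengths and the verification that $A' \subseteq [0,1]^p$ are straightforward, and the edge case $p=d$ is handled automatically by taking $V_0 \equiv 1$.
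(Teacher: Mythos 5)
Your proof is correct and takes essentially the same route as the paper's: both lower-bound $Q(A_0')$ by the $P_X$-measure of the cylinder set $\{x\in A: U^\top x \in 2A_0'-\bm{1}_p\}$, decompose $\mathbb{R}^d$ orthogonally into the range of $U$ and $\ker(U^\top)$, and apply Fubini so that each fibre contributes the volume of a $(d-p)$-dimensional Euclidean ball, before invoking the density lower bound $P_X(A_0)\geq c_0\mathrm{Vol}_d(A_0)$ on $A$. The only cosmetic differences are that you fix the hypercube $A'$ at the outset (the paper extracts it at the end from the image of the concentric ball of radius $\tau/2$), your fibre radius is $\tau\sqrt{3}/2$ rather than $\tau/2$, and you absorb the case $p=d$ via the convention $V_0\equiv 1$ where the paper treats it separately.
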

\begin{proof}
    If $p=d$, then $U \in \mathbb{R}^{d \times d}$ is orthogonal, so for all measurable $A_0\subseteq A$, 
    \[
    Q\biggl(\frac{U^\top A_0 + 1_d}{2}\biggr) = \mathbb{P}(X \in A_0) \geq c_0 \mathrm{Vol}_d(A_0). 
    \]
    Moreover, $(U^\top A + 1_d)/2$ is a Euclidean ball of radius $\tau/2$, so it contains a hypercube of side length $\frac{\tau}{2\sqrt{p}}$.  Now suppose that $p \in [d-1]$.  For $v=(v_1,\ldots,v_d)^\top \in\mathbb{R}^d$ and $1\leq j_1\leq j_2 \leq d$, we write $v_{j_1:j_2} \coloneqq (v_{j_1}, \ldots, v_{j_2})^\top \in \mathbb{R}^{j_1-j_2+1}$.
    Let $o\in\mathbb{R}^d$ be the centre of $A$ and let $K\subseteq A$ be the Euclidean ball of radius $\tau/2$ and centred at $o$. We may write $U = VP$ where $V\in\mathbb{R}^{d\times d}$ is orthogonal and $P\in\mathbb{R}^{d\times p}$ is the first~$p$ columns of $I_d$. Let $S\subseteq K$ be Borel measurable, and define $T\coloneqq \{x\in V^\top A : x_{1:p} \in P^\top V^\top S\}$. Then $VT \subseteq A$; moreover, if $y= (V^\top x)_{1:p}$ for some $x\in S \subseteq K$ and $z\in\mathbb{R}^{d-p}$ satisfies $\|z - (V^\top o)_{(p+1):d}\|_2 \leq \tau/2$, then by the choice of $K$,
    \begin{align*}
        \biggl\|\begin{pmatrix} y\\z\end{pmatrix} - V^\top o\biggr\|_2^2 = \|y - (V^\top o)_{1:p}\|_2^2 + \|z - (V^\top o)_{(p+1):d}\|_2^2 \leq (\tau/2)^2 + (\tau/2)^2 < \tau^2,
    \end{align*}
    so $(y^\top, z^\top)^\top \in T$. Thus
    \begin{align}
        T'\coloneqq \bigl\{(y^\top, z^\top)^\top\in \mathbb{R}^{p+(d-p)} : y\in P^\top V^\top S,\, \|z - (V^\top o)_{(p+1):d}\|_2 \leq \tau/2\bigr\} \subseteq T. \label{eq:subset-of-T}
    \end{align}
    Therefore, writing $v(\tau,d,p)$ for the Lebesgue measure of a Euclidean ball in $\mathbb{R}^{d-p}$ of radius $\tau/2$, we deduce that
    \begin{align*}
        Q\biggl(\frac{U^\top S + 1_p}{2} \biggr) &= \mathbb{P}\biggl(\frac{P^\top V^\top X + 1_p}{2} \in \frac{P^\top V^\top S + 1_p}{2} \biggr)\\
        &\geq \mathbb{P}(V^\top X \in T) = \mathbb{P}(X \in VT) \geq c_0\mathrm{Vol}_d(VT) = c_0\mathrm{Vol}_d(T) \geq c_0\mathrm{Vol}_d(T')\\
        &= c_0 \int_{y\in P^\top V^\top S} \int_{z\in\mathbb{R}^{d-p} : \|z - (V^\top o)_{(p+1):d}\|_2 \leq \tau/2} \,\d z \,\d y\\
        &= c_0 v(\tau,d,p) \mathrm{Vol}_p(P^\top V^\top S) \eqqcolon c_0'\mathrm{Vol}_p\biggl(\frac{U^\top S + 1_p}{2}\biggr).
    \end{align*}
    Since this holds for all measurable $S\subseteq K$, we deduce that $Q(A_0') \geq c_0'\mathrm{Vol}_p(A_0')$ for all $A_0' \subseteq \frac{U^\top K + 1_p}{2}$. Now $\frac{U^\top K + 1_p}{2}$ is a ball of radius $\tau/4$ in $\mathbb{R}^p$, so it  contains a hypercube $A'$ of side length~$\frac{\tau}{2\sqrt{p}}$.
\end{proof}

\section{ICL Example} \label{sec:ICL-example}
When we provide ChatGPT with the first two in-context examples and the query from the right column of Figure~\ref{fig:ICL-example} (i.e.~the prompt is `Cat $\to$ 10, Fish $\to$ 25, Bird $\to$ ?'), we obtain the following output (though this may vary due to uncertainty in the sampling procedure):
\begin{center}
\fbox{%
  \begin{minipage}{0.8\linewidth}
  There isn't a unique answer from just Cat $\to$ 10 and Fish $\to$ 25. 
    
    With only two example pairs, infinitely many rules can fit them, and those rules would give different values for Bird. So Bird $\to$ ? is underdetermined (cannot be determined uniquely).
    
    If you give one more example mapping (any other word $\to$ number), I can pin down the rule and compute Bird.
  \end{minipage}
}
\end{center}

When ChatGPT is provided with all four examples and the query from the right column of Figure~\ref{fig:ICL-example} (i.e.~now the prompt is `Cat $\to$ 10, Fish $\to$ 25, Deer $\to$ 17, Dog $\to$ 13, Bird $\to$ ?'), we obtain the following:
\begin{center}
\fbox{%
  \begin{minipage}{0.8\linewidth}
  The pattern is:

    Number = (number of letters in the word) $\times$ (alphabet position of the first letter) + 1

    Check:

    \begin{itemize}[leftmargin=*, itemsep=0pt, topsep=0pt]
        \item Cat $\to$ 3 $\times$ C(3) + 1 = 10
        \item Fish $\to$ 4 $\times$ F(6) + 1 = 25
        \item Deer $\to$ 4 $\times$ D(4) + 1 = 17
        \item Dog $\to$ 3 $\times$ D(4) + 1 = 13
    \end{itemize}
    So:
    \begin{itemize}[leftmargin=*, itemsep=0pt, topsep=0pt]
    \item Bird $\to$ 4 $\times$ B(2) + 1 = 9
    \end{itemize}
  \end{minipage}
}
\end{center}

\section{Further Details for Simulations in Section~\ref{sec:simulations}}\label{sec:details-for-simulations}
For training, we used AdamW \citep{loshchilov2019decoupled}, with weight decay 0.001 and learning rate 0.0003 (with a cosine annealing schedule). We also employed pre-layer normalisation \citep{xiong2020layer} and gradient clipping to stabilise the optimisation. Our Transformer was trained for 200{,}000 optimisation steps. At each step, we sampled 36 independent single-index regression functions, and for each function, we randomly generated 16 prompts and queries (i.e.~the covariate vectors and queries are random while keeping the regression function fixed), yielding a total of 576 pretraining sequences per optimisation step. The training was done on 12 NVIDIA A100 GPUs (80GB) over approximately 17 hours.  To facilitate training, the difficulty of the tasks was progressively increased via a curriculum learning strategy. In the initial warm-up phase (the first 10{,}000 training steps), the ambient dimension $d$ was gradually increased from 1 to~5, while keeping $p=1$ and $\alpha=4$ fixed.  Subsequently, for the next 50{,}000 steps, we introduced a smoothness curriculum, where regression functions were drawn from increasingly diverse mixtures whose smoothness parameters were gradually decreased over time. Finally, for the remaining 140{,}000 steps, training proceeded with a uniform mixture over tasks, where the smoothness parameter was sampled uniformly from $\{2,\, 2.5,\, 3,\, 4\}$.

\end{document}